\documentclass[letterpaper]{article} 
\usepackage[preprint]{aaai2027}  
\usepackage{times}  
\usepackage{helvet}  
\usepackage{courier}  
\usepackage[hyphens]{url}  
\usepackage{graphicx} 
\usepackage{natbib}  
\usepackage{caption} 

\usepackage{algorithm}
\usepackage{algorithmic}
\usepackage{amsmath}
\usepackage{amssymb}
\usepackage{amsthm}
\newtheorem{proposition}{Proposition}
\newtheorem{theorem}{Theorem}
\newtheorem{corollary}{Corollary}
\newtheorem{assumption}{Assumption}

\newtheorem{lemma}{Lemma}

\usepackage{subcaption}
\usepackage{booktabs} 
\usepackage{makecell}
\usepackage{newfloat}
\usepackage{listings}
\DeclareCaptionStyle{ruled}{labelfont=normalfont,labelsep=colon,strut=off} 
\floatstyle{ruled}
\newfloat{listing}{tb}{lst}{}
\floatname{listing}{Listing}
\title{A Theoretical Analysis of Memory and Overfitting Phenomena in Stochastic Interpolation Models}

\author{
Yunchen Li, Shaohui Lin, Zhou Yu
}

\affiliations{
East China Normal University\\
\texttt{52284404001@stu.ecnu.edu.cn, shaohuilin007@gmail.com, zyu@stat.ecnu.edu.cn}
}
\usepackage{bibentry}

\begin{document}

\maketitle

\begin{abstract}

This paper provides a theoretical account of memorization in stochastic interpolation models. By leveraging closed-form expressions for the optimal velocity field and the associated score function, we show that, in the continuous-time oracle setting, both deterministic and stochastic generation processes recover training samples. Under Euler discretization, generated samples remain centered around training samples, with deviations controlled by the step size. We further analyze generation in the presence of estimation errors and show that accumulated estimation errors control the endpoint deviation from the training set. These results imply that the generated sample admits a representation as a training sample perturbed by three controlled terms: a discretization-induced bound, an estimation-error-induced bound, and stochastic Gaussian noise. Based on this characterization, we provide theoretical definitions of overfitting and underfitting in generative models. Synthetic simulations support our theoretical findings.

\end{abstract}

\section{Introduction}
Generative models have achieved remarkable success across a wide range of domains, including image generation~\citep{rombach2022high,peebles2023scalable}, image restoration~\citep{wu2024seesr,wu2024one}, and text generation~\citep{nie2025large,gong2024scaling}. Despite these empirical advances, prior studies have shown that generative models may exhibit data-copying behavior by producing samples that closely resemble training instances. This phenomenon, commonly referred to as memorization, becomes particularly pronounced in limited-data regimes. Fig. 1 illustrates this phenomenon using ImageNet, where models trained on small subsets generate samples that closely match their nearest training examples.

\begin{figure*}[!t]
\centering
\begin{subfigure}[b]{0.48\textwidth}
\includegraphics[width=\textwidth]{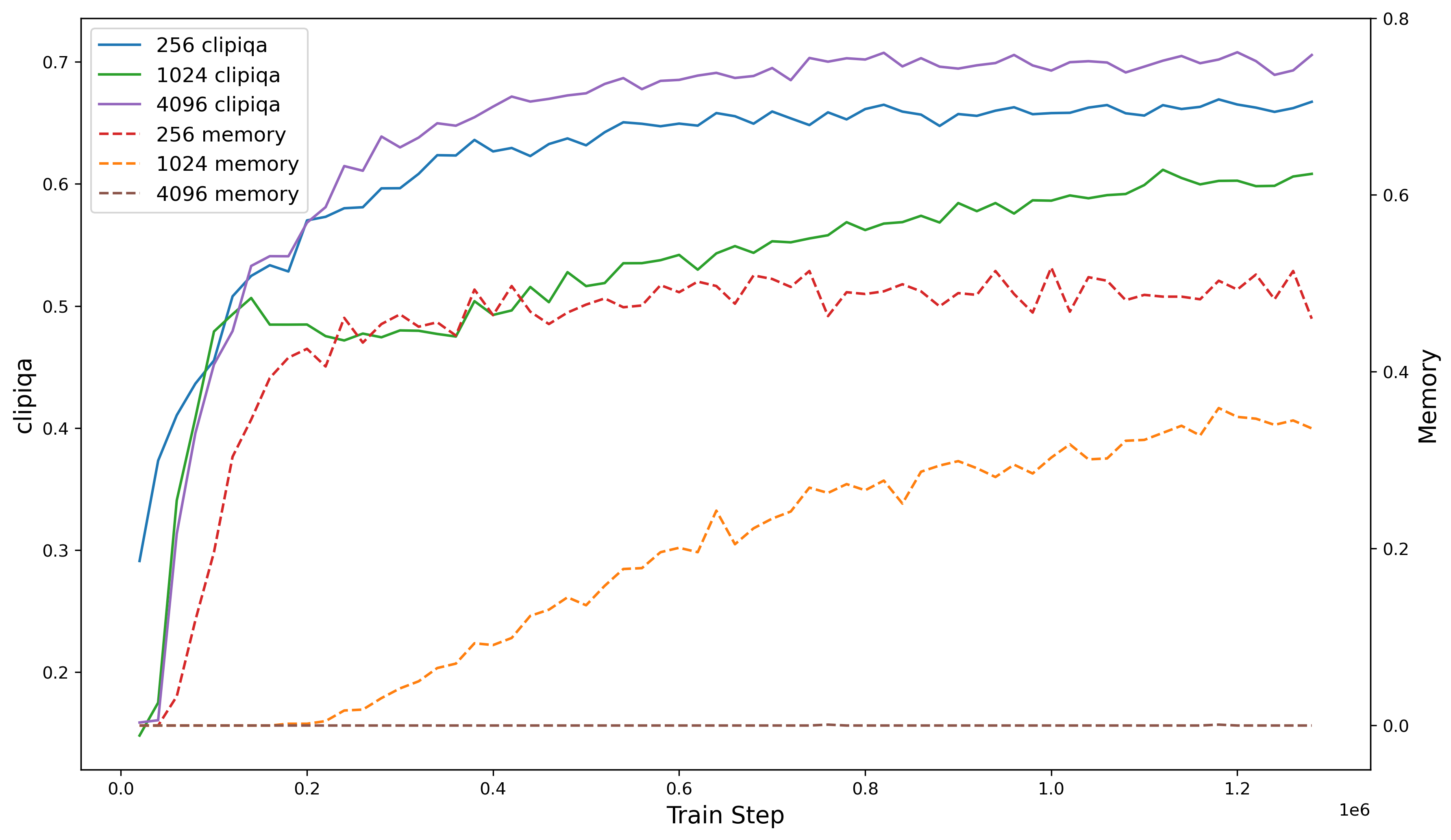}
\caption{}
\label{fig:generate_and_memory1}
\end{subfigure}
\hfill
\begin{subfigure}[b]{0.48\textwidth}
\includegraphics[width=\textwidth]{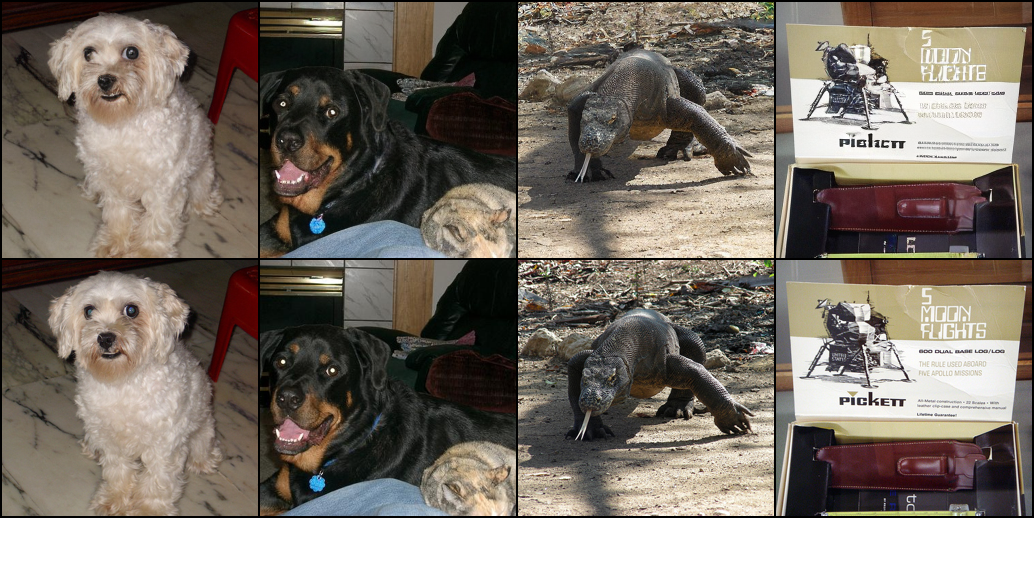}
\caption{}
\label{fig:generate_and_memory2}
\end{subfigure}
\caption{Empirical memorization phenomenon on ImageNet. We train SiT-B/2 models on randomly sampled ImageNet subsets with 256, 1024, and 4096 images. Generation quality is measured by CLIPIQA, and memorization is evaluated following~\citet{yoon2023diffusion}. (a) Generation quality and memorization rate during training. (b) Generated samples and their nearest training samples, illustrating copy-like behavior under limited-data training.}
\label{fig:generate_and_memory}
\end{figure*}

A growing body of empirical work has documented memorization in generative models~\citep{yoon2023diffusion,gu2023memorization,somepalli2023understanding,bonnaire2025diffusion,kim2025how,scardace2026novel,vu2026memorization,shi2026closer,rao2026generalization}. These studies suggest that memorization is not an isolated artifact, but rather a recurring phenomenon across diffusion-based models. Nevertheless, most existing works are primarily empirical, leaving the generative dynamics responsible for memorization insufficiently characterized.

Recent theoretical studies have examined memorization and generalization from complementary perspectives. Some works analyze optimal learning objectives and theoretical separations between memorization and generalization~\citep{li2024good,bertrand2025closed,buchanan2026edge,ye2025provable}, while others focus on data geometry and neural network capacity~\citep{li2023generalization,kadkhodaie2023generalization,gao2024flow,halder2024memorization,shen2026manifold,liu2026navigation}. Related analyses further connect memorization to implicit overfitting mechanisms and structural properties of empirical score functions~\citep{song2025selective,dodson2026two,kaiser2026diffusion,zhou2026smoothing}. Despite these advances, a unified dynamical account of how memorization arises during generation remains incomplete. In particular, existing analyses have not yet provided a systematic treatment of both deterministic and stochastic generation mechanisms, nor a precise characterization of how velocity and score estimation errors propagate to the generated endpoint.

In this paper, we study memorization through the lens of stochastic interpolation models~\citep{albergo2023stochastic}, which provide a unified formulation for diffusion-based generative models~\citep{ho2020denoising,song2020score,lipman2022flow}. In the continuous-time oracle regime, we show that both deterministic and stochastic generation processes exactly recover training samples. We further analyze Euler-discretized sampling with step size $h$. In this setting, deterministic generation recovers a training sample up to an $h$-controlled discretization error, while stochastic generation additionally introduces a Gaussian perturbation, consistent with the Gaussian-blurring phenomenon observed in~\citet{li2024good}. These oracle results reveal a strong finite-sample memorization effect inherent to stochastic interpolation models.

We then analyze generation in the presence of estimation errors. In practice, the learned velocity field and score function generally deviate from their oracle counterparts, and these deviations accumulate along the reverse sampling trajectory. We show that the endpoint deviation from the training set is controlled by the corresponding accumulated training error. Consequently, each generated sample can be characterized as a selected training sample affected by a discretization-induced bound, an estimation-error-induced bound, and stochastic Gaussian noise. This decomposition further enables us to formulate theoretical definitions of overfitting and underfitting in generative models: overfitting corresponds to generated samples concentrating excessively near the training set, whereas underfitting corresponds to samples deviating substantially from the data distribution.

The contributions of this paper are summarized as follows:
\begin{itemize}
\item We analyze oracle generation in stochastic interpolation models. In the error-free continuous-time setting, both deterministic and stochastic generation processes recover training samples; under Euler discretization, the distance to the training set is controlled by the step size $h$.

\item We quantify how velocity and score estimation errors propagate through the generation process and use this analysis to formulate theoretical definitions of overfitting and underfitting in generative models.

\item We characterize generated samples in stochastic interpolation models by decomposing each sample into a selected training sample, a discretization-induced error, an estimation-error-induced error, and, in the stochastic case, an additional Gaussian noise term.
\end{itemize}

\section{Preliminary: Stochastic Interpolation}

In this section, we introduce the stochastic interpolation model \citep{albergo2023stochastic}. Given \(Z_0\sim\rho_0\) and \(Z_1\sim\rho_1\), stochastic interpolation defines a time-dependent random variable
\begin{equation}
	Z_t = \mathcal{I}(t,Z_0,Z_1) + \gamma(t) {\eta}, \; t \in [0,1],
	\label{stochastic interpolation}
\end{equation}
where \({\eta}\sim\mathcal N(0,I_d)\) is independent of $Z_0$ and $Z_1$, \(\mathcal I(0,Z_0,Z_1)=Z_0\), \(\mathcal I(1,Z_0,Z_1)=Z_1\), and \(\gamma(0)=\gamma(1)=0\) with \(\gamma(t)\ge0\) for \(t\in(0,1)\).

Based on the Fokker–Planck equation, the velocity field $b({z},t)$ and the score function $s({z},t)$ are defined as
\begin{align}
	b({z},t) &=\mathbb{E}\Big[\frac{\partial}{\partial t} \mathcal{I}(t,Z_0,Z_1)+{\gamma}'(t){\eta}|Z_t={z}\Big],\label{population_b} \\
	s( z,t) &= \nabla_{ z} \log\rho_t( z).\label{population_s}
\end{align} 

This induces the deterministic generation process
\begin{equation}
	dZ_t = {b}(Z_t,t)dt, \label{determinstic_generation}
\end{equation}
and the stochastic generation process
\begin{equation}
	dZ_t = \Big({b}(Z_t,t)-\zeta(t){s}(Z_t,t)\Big)dt + \sqrt{2\zeta(t)}dW_t. \label{stochastic_generation}
\end{equation}

Starting from $Z_1\sim\rho_1$, both processes generate samples satisfying $Z_0\sim\rho_0$.

In practical applications, the explicit forms of $b( z,t)$ and $s( z,t)$ are typically unknown and should be estimated by minimizing the following population objectives
\begin{equation*}
	\begin{split}
		&\hat{b}({z},t)=\arg\min_b\mathbb{E}\Big\Vert b(Z_t,t) - \Big(\frac{\partial}{\partial t} \mathcal{I}(t,Z_0,Z_1)+{\gamma}'(t){\eta}\Big) \Big\Vert^2,\\
		&\hat{s}({z},t)=\arg\min_s\mathbb{E}\Big\Vert s(Z_t,t)-\frac{1}{\gamma(t)}{\eta}\Big\Vert^2.
	\end{split}
\end{equation*}

A widely used choice of stochastic interpolation is $\mathcal{I}(t,Z_0,Z_1)=\alpha(t)Z_0+\beta(t)Z_1$, which satisfies $\alpha(0)=1,\,\alpha(1)=0,\,\beta(0)=0,\,\beta(1)=1$. Without loss of generality, we assume $\alpha(t)>0$ and $\beta(t)>0$ when $t\in(0,1)$. Under this choice, $Z_t$ can be expressed as
\begin{equation*}
	Z_t = \alpha(t)Z_0+\beta(t)Z_1 + \gamma(t)\eta.
\end{equation*}

The stochastic interpolation model includes flow matching when \(\gamma\equiv0\) and suitable score-based models under Gaussian endpoints and appropriate schedules.


\subsection{Setting and Notation}

The target distribution is defined as the empirical law
\begin{equation}
	\rho_0=\frac{1}{n}\sum_{i=1}^n\delta_{X_i}.
	\label{eq:empirical_rho0}
\end{equation}
Let \(\mathcal X:=\{X_1,\dots,X_n\}\), \(\operatorname{dist}(z,\mathcal X):=\min_i\|z-X_i\|\), \(D_X:=\max_{i,j}\|X_i-X_j\|\), and \(M_X:=\max_i\|X_i\|\). Here \(D_X\) is the diameter of the finite training set and \(M_X\) is its maximum norm. The default source distribution is \(\rho_1=\mathcal N(0,I_d)\). We use the linear interpolation
\[
Z_t=\alpha(t)Z_0+\beta(t)Z_1+\gamma(t)\eta,
\]
where \(Z_0\sim\rho_0\), \(Z_1\sim\rho_1\), and
\(\eta\sim\mathcal N(0,I_d)\) are independent. Define
\[
\begin{aligned}
	C_1(t)&=\gamma(t)\gamma'(t)+\beta'(t)\beta(t),\\
	C_2(t)&=[\gamma(t)\gamma'(t)+\beta'(t)\beta(t)]\alpha(t)-[\gamma^2(t)+\beta^2(t)]\alpha'(t),\\
	C_3(t)&=\gamma^2(t)+\beta^2(t),
\end{aligned}
\]
with all schedule functions evaluated at the same time \(t\).
\section{Optimal Generation and Discrete Sampling}
\label{sec:theory}

In this section, we first analyze the error-free oracle regime and then study
how numerical discretization affects the generation dynamics. The goal is to
identify the finite-sample memorization mechanism induced by the oracle
velocity field and to characterize how deterministic and stochastic samplers
behave under Euler discretization.

\subsection{Continuous-Time Oracle Generation}
\label{sec:continuous_oracle_generation}

We first derive closed-form expressions for the oracle velocity field and the oracle score function. These
quantities reveal why finite training sets induce an attracting structure around
the observed samples.

\begin{proposition}
	The oracle velocity field and oracle score have the following closed forms:
	\begin{equation}
		\begin{split}
			b^*(z,t)=\sum_{i=1}^n\frac{C_1(t)z-C_2(t)X_i}{C_3(t)}\omega_i(z,t),
		\end{split}
		\label{eq:oracle_b}
	\end{equation}
	where $\omega_i(z,t):=\frac{\exp\{-\|z-\alpha(t)X_i\|^2/(2C_3(t))\}}{\sum_{j=1}^n\exp\{-\|z-\alpha(t)X_j\|^2/(2C_3(t))\}}.$
	
	The oracle score satisfies
	\begin{equation}
		s^*(z,t)=\frac{\alpha(t)}{B(t)}b^*(z,t)-\frac{\alpha'(t)}{B(t)}z,
		\label{eq:oracle_s}
	\end{equation}
	where $B(t)=\beta(t)\bigl[\alpha'(t)\beta(t)-\alpha(t)\beta'(t)\bigr]+\gamma(t)\bigl[\gamma(t)\alpha'(t)-\gamma'(t)\alpha(t)\bigr].$
    
\end{proposition}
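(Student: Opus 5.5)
Conditioning on \(Z_0=X_i\), the variable \(Z_t\) is Gaussian: \(Z_t\mid Z_0=X_i\sim\mathcal N(\alpha(t)X_i,\,C_3(t)I_d)\), since \(\beta(t)Z_1+\gamma(t)\eta\) is a sum of independent centered Gaussians with total variance \(\beta^2+\gamma^2=C_3\). Hence \(\rho_t\) is the equal-weight Gaussian mixture \(\rho_t(z)=\frac1n\sum_i \phi_{C_3}(z-\alpha X_i)\). The posterior probability \(\mathbb P(Z_0=X_i\mid Z_t=z)\) is then exactly \(\omega_i(z,t)\) by Bayes' rule, because the common normalizing constants cancel. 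This gives the mixing weights.

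Next I would compute \(b^*\) by splitting the conditional expectation in \eqref{population_b} as \(\sum_i\omega_i(z,t)\,\mathbb E[\alpha'X_i+\beta'Z_1+\gamma'\eta\mid Z_0=X_i,Z_t=z]\). Given \(Z_0=X_i\), the pair \((Z_1,\eta)\) is jointly Gaussian with the observed quantity \(W:=z-\alpha X_i=\beta Z_1+\gamma\eta\). Standard Gaussian conditioning yields \(\mathbb E[Z_1\mid W]=\frac{\beta}{C_3}W\) and \(\mathbb E[\eta\mid W]=\frac{\gamma}{C_3}W\). Therefore
\[
\mathbb E[\beta'Z_1+\gamma'\eta\mid\cdot]=\frac{\beta\beta'+\gamma\gamma'}{C_3}(z-\alpha X_i)=\frac{C_1}{C_3}(z-\alpha X_i).
\]
Adding \(\alpha'X_i\) gives \(\frac{C_1 z-(C_1\alpha-C_3\alpha')X_i}{C_3}\), and \(C_1\alpha-C_3\alpha'=C_2\) by definition. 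Summing against \(\omega_i\) yields \eqref{eq:oracle_b}.

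For the score, I would differentiate the log of the mixture directly: \(s^*(z,t)=\sum_i\omega_i\,\frac{\alpha X_i-z}{C_3}\). The claim then reduces to the algebraic identity that, for each \(i\),
\[
\frac{\alpha}{B}\cdot\frac{C_1z-C_2X_i}{C_3}-\frac{\alpha'}{B}z=\frac{\alpha X_i-z}{C_3},
\]
using \(\sum_i\omega_i=1\) to move the \(z\)-term inside the sum. I would check the \(z\)-coefficient and the \(X_i\)-coefficient separately.

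\begin{itemize}
\item The \(z\)-coefficient requires \(\alpha C_1-\alpha'C_3=-B\). Expanding, \(\alpha(\gamma\gamma'+\beta\beta')-\alpha'(\gamma^2+\beta^2)=-B\), which matches the given \(B\).
\item The \(X_i\)-coefficient requires \(-\alpha C_2/B=\alpha\). Since \(C_2=\alpha C_1-\alpha'C_3=-B\), this also holds.
\end{itemize}

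The only real obstacle is the sign and bookkeeping in the definitions of \(C_2\) and \(B\), notably recognizing that \(C_2=-B\). One should also note that the identity needs \(B(t)\neq0\) and \(C_3(t)>0\), which hold for \(t\in(0,1)\) under the positivity assumptions on the schedule.
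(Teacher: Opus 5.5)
Your proof is correct. For $b^*$ it is essentially the paper's computation, organized more cleanly. The paper writes the conditional mean as a double integral, substitutes $\eta=(z-\alpha z_0-\beta z_1)/\gamma$, and completes the square in $z_1$ to read off the posterior mean $\frac{\beta}{C_3}(z-\alpha z_0)$ and the weight factor $\exp\{-\|z-\alpha z_0\|^2/(2C_3)\}$. You get the same two ingredients from Bayes' rule and jointly Gaussian conditioning on $W=\beta Z_1+\gamma\eta$, and you never divide by $\gamma$, so flow matching ($\gamma\equiv0$) is covered too.

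For the score, your route is genuinely different. The paper never computes $s^*$ explicitly. It applies Tweedie's formula with two signal/noise splits to get $\alpha\mathbb E[Z_0\mid Z_t=z]=z+C_3 s$ and $\gamma\mathbb E[\eta\mid Z_t=z]=-\gamma^2 s$. It combines these with $z=\alpha\mathbb E[Z_0\mid z]+\beta\mathbb E[Z_1\mid z]+\gamma\mathbb E[\eta\mid z]$ and with $b=\alpha'\mathbb E[Z_0\mid z]+\beta'\mathbb E[Z_1\mid z]+\gamma'\mathbb E[\eta\mid z]$, then eliminates the three conditional means to reach $Bs=\alpha b-\alpha' z$. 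That argument explains the linear relation structurally: it follows from Gaussian noise and linearity of conditional expectation, and never uses the finite support of $\rho_0$. It does, however, lean on a loosely stated Tweedie lemma. Your direct differentiation of the mixture log-density is more elementary and fully self-contained. It produces the explicit formula $s^*=\sum_i\omega_i(\alpha X_i-z)/C_3$ as a by-product and reduces the claim to the single identity $C_2=\alpha C_1-\alpha' C_3=-B$. It also extends unchanged to general $\rho_0$, by replacing the sum over $i$ with an integral against the posterior of $Z_0$ given $Z_t=z$.

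One correction to your closing remark: positivity of $\alpha,\beta$ on $(0,1)$ gives $C_3>0$, but it does not give $B\neq0$. Since $C_3'=2C_1$, you have $B=\alpha'C_3-\tfrac12\alpha C_3'=\frac{C_3^2}{2\alpha}\,\frac{d}{dt}\bigl(\alpha^2/C_3\bigr)$. So $B(t)\neq0$ says exactly that the signal-to-noise ratio $\alpha^2/C_3$ has nonzero derivative at $t$, which can fail for admissible schedules where $\gamma$ decreases fast enough at some interior time. It is an implicit hypothesis of the statement (the paper leaves it unstated too), not a consequence of the schedule assumptions.
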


Proposition 1 shows that the oracle velocity field is determined by softmax weights over the training samples, and the oracle score function is a linear combination of the velocity field and the current state \(z\). As \(t \to 0\),
\(C_3(t)\) vanishes, so the softmax weights concentrate on the nearest training
sample. Therefore, the oracle dynamics is driven toward empirical samples. This leads
to memorization in both deterministic and stochastic generation, motivating the following theorem.
\begin{theorem}
	The oracle generation dynamics recover the empirical target distribution.
	For deterministic generation defined in Eq.~\ref{determinstic_generation}, there exists \(i\in\{1,\dots,n\}\) such that
	\(Z_0=X_i\). 
	
	For stochastic generation defined in Eq.~\ref{stochastic_generation}, for \(\varepsilon\in(0,1)\), there exist
	\(i\in\{1,\dots,n\}\) and
	\(\xi_\varepsilon\sim\mathcal N(0,C_3(\varepsilon)I_d)\), independent of \(i\),
	such that
	\[
	Z_\varepsilon
	\stackrel{d}{=}
	\alpha(\varepsilon)X_i+\xi_\varepsilon.
	\]
\end{theorem}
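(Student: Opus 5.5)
The plan is to exploit the fact that, with $\rho_0$ the empirical law (\ref{eq:empirical_rho0}) and Gaussian source and latent noise, the marginal of the interpolant is an explicit Gaussian mixture:
\[
\rho_t(z)=\frac1n\sum_{i=1}^n \mathcal N\bigl(z;\alpha(t)X_i,\,C_3(t)I_d\bigr),
\]
since $\beta(t)Z_1+\gamma(t)\eta\sim\mathcal N(0,C_3(t)I_d)$ and is independent of $Z_0$.

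Both generation processes (\ref{determinstic_generation}) and (\ref{stochastic_generation}) are built, via the Fokker--Planck equation, so that started from $Z_1\sim\rho_1$ their time-$t$ marginal equals $\rho_t$ for every $t\in(0,1]$. Here the oracle fields $b^*,s^*$ of Proposition 1 are smooth and locally Lipschitz on $(0,1]$, so the ODE/SDE is well posed there. I would take this marginal-preservation property as given from the construction in the preliminaries.

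\textbf{Stochastic case.} The claim is then immediate at any fixed $\varepsilon\in(0,1)$: $\operatorname{Law}(Z_\varepsilon)=\rho_\varepsilon$. This mixture is exactly the law of $\alpha(\varepsilon)X_I+\xi_\varepsilon$ with $I$ uniform on $\{1,\dots,n\}$ and $\xi_\varepsilon\sim\mathcal N(0,C_3(\varepsilon)I_d)$ independent of $I$. So I would simply write out the density of the latter and match it to $\rho_\varepsilon$ term by term. This is the precise sense of ``there exist $i$ and $\xi_\varepsilon$''.

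\textbf{Deterministic case.} There is an extra issue: the endpoint $t=0$ is singular, since $C_3(0)=0$. I would argue as follows.
\begin{enumerate}
\item Marginal preservation gives $Z_t\stackrel{d}{=}\alpha(t)X_I+\sqrt{C_3(t)}\,\xi$. As $t\to0$, using $\alpha(t)\to1$ and $C_3(t)\to0$, this converges in distribution to $\rho_0$.
\item Almost sure convergence of the trajectory $Z_t$ as $t\downarrow0$ comes from the closed form (\ref{eq:oracle_b}). Write $b^*(z,t)=\bigl(C_1z-C_2\bar X_\omega\bigr)/C_3$, with $\bar X_\omega$ the softmax-weighted mean.
\item Near $t=0$, for $z$ in the Voronoi-type basin where $\omega_k\to1$, the weights of the other samples decay like $\exp(-c/C_3(t))$. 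The dynamics are then a linear ODE in $z-\alpha(t)X_k$ up to exponentially small errors.
\item The linear part can be solved explicitly: $z-\alpha X_k$ scales like $\sqrt{C_3(t)}$, because $C_1=\tfrac12 C_3'$ and $C_2=\tfrac12 C_3'\alpha-C_3\alpha'$. Hence $Z_t-\alpha(t)X_k\to0$ and $Z_t\to X_k$.
\item Combined with step 1, the limit $Z_0:=\lim_{t\to0}Z_t$ exists and lies in $\mathcal X$ almost surely.
\end{enumerate}

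The main obstacle is step 3: controlling the trajectory so that it eventually stays in a single basin, rather than sitting on a boundary where two weights are comparable. I would handle this by the distributional identity in step 1. For small $t$, $Z_t$ lies with probability tending to one within $O(\sqrt{C_3(t)\log(1/C_3(t))})$ of some $\alpha(t)X_k$, which is far from all other centers. A Gronwall estimate on $u=(Z_t-\alpha X_k)/\sqrt{C_3}$ then shows that $u$ stays bounded, so the trajectory is trapped in that basin. Letting the probability tend to one gives almost sure convergence to a training point.
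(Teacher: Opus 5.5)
Your stochastic argument is the paper's. The paper checks explicitly that $\rho_t$ solves the Fokker--Planck equation of the SDE (the $\zeta$-terms cancel because $s^*\rho_t=\nabla\rho_t$) and then invokes uniqueness, whereas you take marginal preservation from the construction. The identification of $\rho_\varepsilon$ as the Gaussian mixture is identical.

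The deterministic part takes a genuinely different route. The paper sets $Z_t=A(t)\kappa(t)$ with $A(t)=\exp(-\int_t^1 C_1/C_3\,du)$ and writes a Duhamel formula for $Z_t(x)$. It then applies L'H\^opital to get $Z_0=X\lim_{t\to0}\omega(Z_t,t)$, with $X=[X_1,\dots,X_n]$, and simply asserts that the weights become one-hot. Your rescaling $u=(Z_t-\alpha(t)X_k)/\sqrt{C_3(t)}$ uses the same integrating factor: $C_1=\tfrac12C_3'$ gives $A(t)=\sqrt{C_3(t)/C_3(1)}$, and the linear part cancels exactly, $\dot u=-C_2C_3^{-3/2}(\bar X_\omega-X_k)$. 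Where the paper asserts the one-hot limit, you prove it. Marginal preservation localizes $Z_{t_0}$ deep inside one basin with high probability, and the $e^{-c/C_3}$ softmax leakage is integrable, so the trajectory stays trapped.

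What each route buys:
\begin{itemize}
\item The paper's route is shorter and nominally pathwise, but its decisive step is unproved. L'H\^opital even presupposes that $\lim\omega(Z_t,t)$ exists.
\item The pathwise claim is in fact false on a null set of initial points. Take $n=2$, $X_2=-X_1\neq0$ and $x\perp X_1$. Then $\omega\equiv(\tfrac12,\tfrac12)$ and $Z_t=\sqrt{C_3(t)/C_3(1)}\,x\to0\notin\mathcal X$.
\item Your route gives the correct almost-sure statement. Via your step 1 it also gives $\operatorname{Law}(Z_0)=\rho_0$.
\item The cost of your route is reliance on three things: marginal preservation; distinct training points (or merging duplicates); and mild regularity of the schedule. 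For example, $C_3$ should be increasing near $0$, so that substituting $v=C_3(t)$ turns the leakage integral into $\int_0^{C_3(t_0)}v^{-3/2}e^{-c/v}\,dv<\infty$.
\end{itemize}
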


This theorem shows that deterministic oracle generation exactly selects a
training sample, while stochastic oracle generation at a finite $\epsilon>0$ produces Gaussian clouds centered at scaled training samples. As \(\varepsilon\to0\),
\(C_3(\varepsilon)\to0\) and \(\alpha(\varepsilon)\to1\), so these Gaussian
clouds collapse to the empirical distribution. 

\subsection{Euler-Discretized Oracle Generation}
\label{sec:euler_oracle_generation}

We now study the practical setting where generation is implemented with a
finite-step numerical discretization. Let
\[
t_k=kh,\qquad k=0,1,\dots,K,\qquad h=\frac1K.
\]
We consider the backward Euler update
\begin{equation}
	Z_{k-1}^h
	=
	Z_k^h-hF(Z_k^h,t_k)+\sigma_k\xi_k,
	\qquad
	k=K,K-1,\dots,1,
	\label{eq:euler_template}
\end{equation}
where the choice of \(F\) depends on the sampling scheme and \(\xi_k\sim\mathcal N(0,I_d)\) are independent whenever \(\sigma_k>0\).

For any \(z\in\mathbb R^d\), define
\[
i_k(z)\in
\arg\min_{1\le i\le n}\|z-\alpha(t_k)X_i\|,
\]
which denotes the training sample whose scaled center
\(\alpha(t_k)X_i\) is closest to \(z\) at time \(t_k\). Define the squared
margin
\[
\begin{aligned}
	m_k(z):=
	\min_{j\neq i_k(z)}
	\{
	\|z-\alpha(t_k)X_j\|^2
	-\|z-\alpha(t_k)X_{i_k(z)}\|^2
	\},
\end{aligned}
\]
which measures the separation between the nearest and second-nearest scaled training centers.

For the deterministic Euler scheme, we take \(F=b^*\) and \(\sigma_k=0\) in
Eq.~\ref{eq:euler_template}. Along the trajectory, define
\[
d_{i,k}^h:=\|Z_k^h-\alpha(t_k)X_i\|^2,
\]
which is the squared distance between the current Euler iterate \(Z_k^h\) and
the scaled training center \(\alpha(t_k)X_i\). Define
\[
w_{i,k}^h
:=
\frac{
	\exp\{-d_{i,k}^h/(2C_3(t_k))\}
}{
	\sum_{\ell=1}^n
	\exp\{-d_{\ell,k}^h/(2C_3(t_k))\},
} 
\]
which defines the softmax weights induced by the oracle velocity field. Finally, define $\bar X_k^h:=\sum_{i=1}^n w_{i,k}^hX_i$ as the softmax-weighted training sample induced by the oracle field at the current Euler step.
Substituting Eq.~\ref{eq:oracle_b} into the deterministic Euler update gives
\begin{equation}
	Z_{k-1}^h
	=
	\lambda_k Z_k^h+c_k\bar X_k^h,
	\label{eq:euler_deterministic_linear}
\end{equation}
where \(a_k:=hC_1(t_k)/C_3(t_k)\), \(c_k:=hC_2(t_k)/C_3(t_k)\), and \(\lambda_k:=1-a_k\).

The following assumption rules out unbounded terminal entry points and points near softmax decision boundaries, where the selected training sample is ambiguous.

\begin{assumption}
	\label{ass:hp_terminal_event}
	For \(\delta\in(0,1)\), there exists an event \(\mathcal A_h(\delta)\)
	such that $\mathbb P(\mathcal A_h(\delta))\ge 1-\delta.$ On \(\mathcal A_h(\delta)\),
	\[
	\|Z_K^h\|\le R_\delta,
	\qquad
	m_k(Z_k^h)\ge u_k,\quad k=1,\dots,K.
	\]
	Here \(R_\delta\) controls the size of the entry point into the terminal regime,
	and \(u_k\) controls the softmax margin at time \(t_k\).
\end{assumption}
Since \(Z_K^h\sim\mathcal N(0,I_d)\), a valid choice is
\[
R_\delta=\sqrt d+\sqrt{2\log(1/\delta)}.
\]

We introduce the following three error terms. The terminal-entry error measures how much the terminal entry point \(Z_K^h\) can still affect the endpoint after being contracted by the Euler dynamics. 
\[
B_{h,\mathrm{init}}^{\mathrm{det}}(\delta):=\left(\prod_{k=1}^K|\lambda_k|\right)(R_\delta+M_X).
\]
The schedule-mismatch error captures the mismatch between the affine update of the scaled centers and the target scaled centers at the next Euler time. 
\[
B_{h,\mathrm{sch}}^{\mathrm{det}}:=\sum_{j=1}^K\left(\prod_{\ell=1}^{j-1}|\lambda_\ell|\right)
\left|\lambda_j\alpha(t_j)+c_j-\alpha(t_{j-1})\right|M_X.
\]
The softmax-selection error measures the residual error caused by the softmax weights not being exactly one-hot; this term becomes small when the margin \(u_j\) is large relative to the variance scale \(C_3(t_j)\).
\[
B_{h,\mathrm{sm}}^{\mathrm{det}}:=\sum_{j=1}^K\left(\prod_{\ell=1}^{j-1}|\lambda_\ell|\right)|c_j|D_X(n-1)\
\exp\left(-\frac{u_j}{2C_3(t_j)}\right).
\]
The overall error is then defined as the sum of these three terms:
\[
B_h^{\mathrm{det}}(\delta):=B_{h,\mathrm{init}}^{\mathrm{det}}(\delta)+B_{h,\mathrm{sch}}^{\mathrm{det}}+B_{h,\mathrm{sm}}^{\mathrm{det}}.
\]

\begin{theorem}
	\label{thm:hp_det_euler}
	Under Assumption~\ref{ass:hp_terminal_event}, for every \(\delta\in(0,1)\),
	\[
	\mathbb P\left(
	\operatorname{dist}(Z_0^h,\mathcal X)
	\le
	B_h^{\mathrm{det}}(\delta)
	\right)
	\ge 1-\delta.
	\]
\end{theorem}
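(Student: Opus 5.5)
We work throughout on the event \(\mathcal A_h(\delta)\) from Assumption~\ref{ass:hp_terminal_event}, which has probability at least \(1-\delta\), so it suffices to prove the deterministic inequality \(\operatorname{dist}(Z_0^h,\mathcal X)\le B_h^{\mathrm{det}}(\delta)\) on that event.

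\textbf{Step 1: a fixed comparison center.} Choose the index \(i^\star:=i_1(Z_1^h)\), the training sample selected at the last step. Since \(\alpha(t_0)=1\), we have \(\operatorname{dist}(Z_0^h,\mathcal X)\le \|Z_0^h-X_{i^\star}\|\). Define the tracking error
\[
e_k:=Z_k^h-\alpha(t_k)X_{i^\star},\qquad k=0,\dots,K .
\]
It would be more natural to use the index selected at each step. We use a single fixed index so that the recursion stays linear. This choice of \(i^\star\) is the delicate point, discussed in Step 3.

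\textbf{Step 2: the error recursion.} From Eq.~\ref{eq:euler_deterministic_linear},
\[
e_{k-1}=\lambda_k e_k+\bigl(\lambda_k\alpha(t_k)+c_k-\alpha(t_{k-1})\bigr)X_{i^\star}+c_k\bigl(\bar X_k^h-X_{i^\star}\bigr).
\]
We take norms and use \(\|X_{i^\star}\|\le M_X\). Unrolling from \(k=K\) down to \(1\) gives
\[
\|e_0\|\le\Bigl(\prod_{k=1}^K|\lambda_k|\Bigr)\|e_K\|+\sum_{j=1}^K\Bigl(\prod_{\ell=1}^{j-1}|\lambda_\ell|\Bigr)\Bigl(|\lambda_j\alpha(t_j)+c_j-\alpha(t_{j-1})|M_X+|c_j|\,\|\bar X_j^h-X_{i^\star}\|\Bigr).
\]
For the initial term, \(t_K=1\) gives \(\alpha(1)=0\), so \(e_K=Z_K^h\) and \(\|e_K\|\le R_\delta\le R_\delta+M_X\). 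This yields \(B_{h,\mathrm{init}}^{\mathrm{det}}\). The middle sum is exactly \(B_{h,\mathrm{sch}}^{\mathrm{det}}\).

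\textbf{Step 3: the softmax-selection term (main obstacle).} Write \(i_j:=i_j(Z_j^h)\). Then
\[
\bar X_j^h-X_{i_j}=\sum_{i\ne i_j}w_{i,j}^h(X_i-X_{i_j}),
\]
and each weight satisfies \(w_{i,j}^h\le \exp\{-(d_{i,j}^h-d_{i_j,j}^h)/(2C_3(t_j))\}\le \exp(-u_j/(2C_3(t_j)))\), by the margin condition \(m_j\ge u_j\). Hence
\[
\|\bar X_j^h-X_{i_j}\|\le (n-1)D_X\,e^{-u_j/(2C_3(t_j))},
\]
which produces \(B_{h,\mathrm{sm}}^{\mathrm{det}}\), provided \(i_j=i^\star\) for every \(j\).

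Establishing that the selected index is constant along the trajectory on \(\mathcal A_h(\delta)\) is the main obstacle. I would first try to argue it directly from the margin assumption. A switch of the nearest center between consecutive steps would place the iterate near a decision boundary, which would violate \(m_k\ge u_k\) when the margins are large relative to the step displacement.

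If that argument fails, the fallback is to restrict the unrolled sum to the terminal regime after the last switch time \(\tau\). One then treats \(e_\tau\) as the entry error and bounds it by \(R_\delta+M_X\) using a contraction argument. This is plausibly why \(B_{h,\mathrm{init}}^{\mathrm{det}}\) carries the factor \(R_\delta+M_X\), suggesting the bound is meant to be read as a terminal-regime estimate. Alternatively, one can interpret Assumption~\ref{ass:hp_terminal_event} as implicitly guaranteeing a consistent selection. With the index fixed, the three bounds sum to \(B_h^{\mathrm{det}}(\delta)\), which concludes the proof.
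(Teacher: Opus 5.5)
Your Step 2 identity is algebraically correct, but the proof does not close under Assumption~\ref{ass:hp_terminal_event} alone. With the fixed index $i^\star=i_1(Z_1^h)$, the softmax lemma only controls $\|\bar X_j^h-X_{i_j}\|$ for the index $i_j=i_j(Z_j^h)$ that is nearest at step $j$. At any step where $i_j\neq i^\star$ you only have $\|\bar X_j^h-X_{i^\star}\|\le D_X$, which is not of the form appearing in $B_{h,\mathrm{sm}}^{\mathrm{det}}$.

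None of your three ways out works:
\begin{itemize}
\item The margin condition $m_k(Z_k^h)\ge u_k$ is pointwise at each grid time. It does not say whether the nearest centers at $t_k$ and $t_{k-1}$ agree. The scaled centers $\alpha(t_k)X_i$ themselves move between steps, and nothing relates $u_k$ to the step displacement. Near $t=1$ the margins cannot be large: since $\alpha(1)=0$, all centers coincide at $t_K$, which forces $u_K\le 0$.
\item Restarting after a last switch time $\tau$ would put $\prod_{k=1}^{\tau}|\lambda_k|$ instead of $\prod_{k=1}^{K}|\lambda_k|$ into the initialization term. Moreover, the claimed bound $\|e_\tau\|\le R_\delta+M_X$ is not established. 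Controlling the trajectory from $K$ down to $\tau$ runs into the same switching problem.
\item Reading consistent selection into Assumption~\ref{ass:hp_terminal_event} amounts to adding Assumption~\ref{ass:hp_sto_terminal}. The paper imposes that only for the stochastic result, so you would be proving a weaker theorem.
\end{itemize}

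The missing idea is that you do not need a linear vector recursion, only a scalar upper bound. That lets the comparison index change at every step. Track
$D_k:=\min_i\|Z_k^h-\alpha(t_k)X_i\|=\|Z_k^h-\alpha(t_k)X_{i_k}\|$ with $i_k=i_k(Z_k^h)$. Then $D_{k-1}\le\|Z_{k-1}^h-\alpha(t_{k-1})X_{i_k}\|$. Your own identity, written with $i_k$ in place of $i^\star$ at both times, gives
$D_{k-1}\le|\lambda_k|D_k+|\lambda_k\alpha(t_k)+c_k-\alpha(t_{k-1})|M_X+|c_k|D_X(n-1)e^{-u_k/(2C_3(t_k))}$.
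Here the softmax lemma is applied with the correct index at every step. A switch is harmless, because passing to the minimum over centers can only decrease the distance.

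Unrolling this scalar recursion, which has nonnegative coefficients, yields exactly $B_h^{\mathrm{det}}(\delta)$ on $\mathcal A_h(\delta)$. You use $D_K\le R_\delta+M_X$ at the start and $D_0=\operatorname{dist}(Z_0^h,\mathcal X)$ at the end. This is the paper's argument.
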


Theorem~\ref{thm:hp_det_euler} shows that the endpoint distance to the training set is controlled by the three error terms above. In the classical schedule, these terms admit a simpler form, leading to the following corollary.
\begin{corollary}
	\label{cor:simple_det_euler_bound_sqrt_gamma}
	Under Assumption~\ref{ass:hp_terminal_event}, consider the interpolation schedule $\alpha(t)=1-t$, $\beta(t)=t$ and $
	\gamma(t)=\sqrt{t(1-t)}$. Suppose further that there exists a constant \(\theta\in(0,1)\), independent of
	\(h\), such that for all \(j\) satisfying \(t_j\le \theta\), $u_j\ge2t_j\log\left(\frac{1}{\sqrt h}\right)$. Then, for every \(\delta\in(0,1)\),
	
	\[
	\mathbb P\left(
	\operatorname{dist}(Z_0^h,\mathcal X)
	\lesssim
	\sqrt h
	\left(
	1+\sqrt d+\sqrt{\log(1/\delta)}
	\right)
	\right)
	\ge
	1-\delta.
	\]
\end{corollary}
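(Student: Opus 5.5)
The plan is to specialize Theorem~\ref{thm:hp_det_euler} to this schedule and show that each of the three error terms $B_{h,\mathrm{init}}^{\mathrm{det}}$, $B_{h,\mathrm{sch}}^{\mathrm{det}}$ and $B_{h,\mathrm{sm}}^{\mathrm{det}}$ is $O(\sqrt h)$. The hidden constants will depend only on $M_X$, $D_X$, $n$ and $\theta$. After that, the corollary follows directly from Theorem~\ref{thm:hp_det_euler} with $R_\delta=\sqrt d+\sqrt{2\log(1/\delta)}$.

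\textbf{Step 1: schedule constants.} Since $\gamma^2(t)=t(1-t)$, we have $\gamma\gamma'=\tfrac12(1-2t)$. This gives
\[
C_1(t)=\tfrac12,\qquad C_3(t)=t(1-t)+t^2=t,\qquad C_2(t)=\tfrac12(1-t)+t=\tfrac{1+t}{2}.
\]
With $t_k=kh$ it follows that $a_k=\frac{1}{2k}$, $\lambda_k=1-\frac{1}{2k}\in[\tfrac12,1)$ and $c_k=\frac{1+t_k}{2k}\le \frac1k$.

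The key product estimate uses $1-x\le e^{-x}$ and $\sum_{\ell<j}1/\ell\ge\log j$:
\[
\prod_{\ell=1}^{j-1}|\lambda_\ell|\le \exp\Bigl(-\tfrac12\sum_{\ell=1}^{j-1}\tfrac1\ell\Bigr)\le j^{-1/2}.
\]
In particular, $\prod_{k=1}^K|\lambda_k|\le K^{-1/2}=\sqrt h$. Hence $B_{h,\mathrm{init}}^{\mathrm{det}}\le\sqrt h\,(R_\delta+M_X)\lesssim \sqrt h\,(1+\sqrt d+\sqrt{\log(1/\delta)})$.

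\textbf{Step 2: schedule mismatch.} A direct computation gives
\[
\lambda_j(1-jh)+\tfrac{1+jh}{2j}-\bigl(1-(j-1)h\bigr)=\tfrac h2 .
\]
The $h$-terms $-h$ and $+h$ coming from the two $\tfrac{1}{2j}$ pieces cancel, leaving $\tfrac h2$. Therefore
\[
B_{h,\mathrm{sch}}^{\mathrm{det}}\le \tfrac h2 M_X\sum_{j=1}^K j^{-1/2}\le hM_X\sqrt K=M_X\sqrt h .
\]

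\textbf{Step 3: softmax selection.} This is the main obstacle, because the margin assumption is only available for $t_j\le\theta$. I split the sum at $t_j=\theta$.

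For $t_j\le\theta$, the hypothesis $u_j\ge 2t_j\log(1/\sqrt h)$ together with $C_3(t_j)=t_j$ gives $\exp(-u_j/(2C_3(t_j)))\le\sqrt h$. Using $c_j\le 1/j$ and the product bound, this part is at most
\[
(n-1)D_X\sqrt h\sum_j j^{-3/2}\lesssim nD_X\sqrt h .
\]

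For $t_j>\theta$, I bound the exponential by $1$. In this range $j>\theta K$, so the product satisfies $\prod_{\ell<j}|\lambda_\ell|\le j^{-1/2}\le \sqrt h/\sqrt\theta$. The remaining sum is $\sum_{\theta K<j\le K}1/j\le \log(1/\theta)+1$. This part is therefore at most $(n-1)D_X\sqrt h\,(\log(1/\theta)+1)/\sqrt\theta$, which is $O(\sqrt h)$ because $\theta$ is independent of $h$.

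Summing Steps 1--3 yields $B_h^{\mathrm{det}}(\delta)\lesssim\sqrt h\,(1+\sqrt d+\sqrt{\log(1/\delta)})$. Invoking Theorem~\ref{thm:hp_det_euler} then gives the claimed high-probability bound.
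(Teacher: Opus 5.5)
Your proposal is correct and follows the paper's route. You compute $C_1=\tfrac12$, $C_2=\tfrac{1+t}{2}$, $C_3=t$, bound $\prod_{\ell<j}\lambda_\ell\lesssim j^{-1/2}$ via $1-x\le e^{-x}$, and split the softmax-selection sum at $t_j=\theta$, using the margin hypothesis below $\theta$ and $j>\theta/h$ above it. One harmless slip is in Step 2: the mismatch is exactly $\lambda_j(1-jh)+\frac{1+jh}{2j}-\bigl(1-(j-1)h\bigr)=(1-jh+h)-\bigl(1-(j-1)h\bigr)=0$, not $\tfrac h2$, because the two $\tfrac h2$ contributions sum to $h$ and exactly account for the shift from $\alpha(t_j)$ to $\alpha(t_{j-1})$. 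So $B_{h,\mathrm{sch}}^{\mathrm{det}}=0$ as in the paper, and your $M_X\sqrt h$ bound is valid but unnecessary.
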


Thus, under Euler-based sampling, the generated sample remains close to the training set and its distance to the training set is controlled by the step size \(h\) with high probability.

We next consider the Euler discretization of the stochastic oracle
generation. This is obtained from Eq.~\ref{eq:euler_template} by taking
\[
F=b^*-\zeta s^*,
\qquad
\sigma_k=\sqrt{2\zeta(t_k)h}.
\]
In the terminal region, the oracle stochastic update can be written as
\begin{equation}
	Z_{k-1}^h
	=
	r_kZ_k^h+q_k\bar X_k^h+\sigma_k\xi_k,
	\label{eq:euler_maruyama_linear}
\end{equation}
where

\[
r_k
:=
1
-
h\left[
\left(
1-\frac{\zeta(t_k)\alpha(t_k)}{B(t_k)}
\right)
\frac{C_1(t_k)}{C_3(t_k)}
+
\frac{\zeta(t_k)\alpha'(t_k)}{B(t_k)}
\right],
\]
\[
q_k
:=
h\left(
1-\frac{\zeta(t_k)\alpha(t_k)}{B(t_k)}
\right)
\frac{C_2(t_k)}{C_3(t_k)}.
\]
Here \(r_k\) is the coefficient that propagates the current iterate
\(Z_k^h\), while \(q_k\) is the coefficient of the softmax-weighted training
sample \(\bar X_k^h\). 

Define \(\Pi_{j-1}:=\prod_{\ell=1}^{j-1}r_\ell\) and \(G_h:=\sum_{j=1}^K\Pi_{j-1}\sigma_j\xi_j\). Then \(G_h\sim\mathcal N(0,\Sigma_h)\), where $\Sigma_h=\sum_{j=1}^K\Pi_{j-1}^2\sigma_j^2I_d$.

We additionally require that the selected softmax center does not switch within the terminal region. This stability condition allows us to use a single training sample as the center of the final decomposition. This assumption is imposed mainly for proof simplicity. Similar bounds can be obtained without it by tracking possible selector switches. Since \(C_3(t)\to 0\) as \(t\to 0\), the softmax weights concentrate on the nearest scaled training center, so such stability naturally holds in the terminal regime away from decision boundaries.

\begin{assumption}
	\label{ass:hp_sto_terminal}
	On the event \(\mathcal A_h(\delta)\) from Assumption~\ref{ass:hp_terminal_event},
	there exists a random index \(i_h\in\{1,\dots,n\}\) such that
	\[
	i_k(Z_k^h)=i_h,
	\qquad
	k=1,\dots,K.
	\]
\end{assumption}

Similar to deterministic generation, we decompose the stochastic generation error into three parts. 
\[
\begin{gathered}
	B_{h,\mathrm{init}}^{\mathrm{sto}}(\delta):=
	\left(\prod_{k=1}^K |r_k|\right)(R_\delta+M_X),\\
	B_{h,\mathrm{aff}}^{\mathrm{sto}}
	:=
	\sum_{j=1}^K
	\left(
	\prod_{\ell=1}^{j-1}|r_\ell|
	\right)
	|r_j+q_j-1|M_X,\\
	B_{h,\mathrm{sm}}^{\mathrm{sto}}
	:=
	\sum_{j=1}^K
	\left(
	\prod_{\ell=1}^{j-1}|r_\ell|
	\right)
	|q_j|D_X(n-1)
	\exp\left(
	-\frac{u_j}{2C_3(t_j)}
	\right),\\
\end{gathered}
\]
The overall non-Gaussian stochastic Euler error is then defined as the sum of these three terms:
\[
B_h^{\mathrm{sto}}(\delta):=
B_{h,\mathrm{init}}^{\mathrm{sto}}(\delta)+B_{h,\mathrm{aff}}^{\mathrm{sto}}
+B_{h,\mathrm{sm}}^{\mathrm{sto}}.
\]
\begin{theorem}
	\label{thm:sto_gaussian_error}
	Under Assumptions~\ref{ass:hp_terminal_event} and~\ref{ass:hp_sto_terminal}, for every \(\delta\in(0,1)\), with probability at least \(1-\delta\), there exists a random index \(i\in\{1,\dots,n\}\) such that
	\[
	Z_0^h=X_{i}+G_h+E_h,
	\]
	where \(G_h\sim\mathcal N(0,\Sigma_h)\) and $\|E_h\|\le B_h^{\mathrm{sto}}(\delta)$.
\end{theorem}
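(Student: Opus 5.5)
We unroll the linear recursion in Eq.~\ref{eq:euler_maruyama_linear} from \(k=K\) down to \(k=1\), working throughout on the event \(\mathcal A_h(\delta)\). Take \(i=i_h\) from Assumption~\ref{ass:hp_sto_terminal} and write \(X:=X_{i_h}\). Since \(\bar X_k^h=X+(\bar X_k^h-X)\), each step becomes
\[
Z_{k-1}^h-X=r_k(Z_k^h-X)+(r_k+q_k-1)X+q_k(\bar X_k^h-X)+\sigma_k\xi_k .
\]
Iterating this identity gives
\[
Z_0^h-X=\Big(\prod_{k=1}^K r_k\Big)(Z_K^h-X)+\sum_{j=1}^K\Pi_{j-1}\Big[(r_j+q_j-1)X+q_j(\bar X_j^h-X)\Big]+\sum_{j=1}^K\Pi_{j-1}\sigma_j\xi_j .
\]
The last sum is exactly \(G_h\), and we set \(E_h\) equal to the first two groups of terms.

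We then bound the pieces of \(E_h\) one at a time. On \(\mathcal A_h(\delta)\) we have \(\|Z_K^h-X\|\le R_\delta+M_X\), so the first term is bounded by \(B^{\mathrm{sto}}_{h,\mathrm{init}}(\delta)\). Using \(\|X\|\le M_X\) and \(|\Pi_{j-1}|\le\prod_{\ell<j}|r_\ell|\), the affine-mismatch sum is bounded by \(B^{\mathrm{sto}}_{h,\mathrm{aff}}\).

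For the softmax term, we note \(\bar X_j^h-X=\sum_{i\neq i_h}w_{i,j}^h(X_i-X)\), so \(\|\bar X_j^h-X\|\le D_X\sum_{i\ne i_h}w_{i,j}^h\). Because \(i_h=i_j(Z_j^h)\) is the nearest scaled center, each ratio satisfies
\[
\frac{w_{i,j}^h}{w_{i_h,j}^h}=\exp\{-(d_{i,j}^h-d_{i_h,j}^h)/(2C_3(t_j))\}\le e^{-m_j(Z_j^h)/(2C_3(t_j))}.
\]
Combining this with \(w_{i,j}^h\le w_{i,j}^h/w_{i_h,j}^h\) and the margin condition \(m_j\ge u_j\) gives \(\sum_{i\ne i_h}w_{i,j}^h\le (n-1)e^{-u_j/(2C_3(t_j))}\). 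This yields the bound \(B^{\mathrm{sto}}_{h,\mathrm{sm}}\), and summing the three pieces gives \(\|E_h\|\le B_h^{\mathrm{sto}}(\delta)\) on \(\mathcal A_h(\delta)\), an event of probability at least \(1-\delta\).

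The main obstacle is the Gaussian claim for \(G_h\). As a linear combination of the independent \(\xi_j\) with deterministic coefficients \(\Pi_{j-1}\sigma_j\), \(G_h\) is unconditionally \(\mathcal N(0,\Sigma_h)\). However, \(i_h\) and the event \(\mathcal A_h(\delta)\) depend on the noise, so \(G_h\) is not Gaussian conditionally on them. I will therefore state the result as the statement does: \(G_h\) is defined globally and has law \(\mathcal N(0,\Sigma_h)\), while the decomposition and the bound on \(E_h\) hold pointwise on \(\mathcal A_h(\delta)\). I will not claim independence between \(G_h\) and \(i\). One further point needs checking: that the update really takes the form \eqref{eq:euler_maruyama_linear} at every step \(k=1,\dots,K\). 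This follows by substituting Proposition~1 (Eq.~\ref{eq:oracle_s}) into \(b^*-\zeta s^*\), which is the same algebra that produced \eqref{eq:euler_maruyama_linear}.
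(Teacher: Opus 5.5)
Your proposal is correct and follows the paper's proof essentially step for step. It uses the same error recursion $e_{k-1}=r_ke_k+d_k^h+\sigma_k\xi_k$ around $X_{i_h}$, the same unrolling with weights $\Pi_{j-1}$, the same three-term bound on $E_h$ via the softmax-ratio argument (the paper's softmax concentration lemma), and the same identification of $G_h$ as a deterministic linear combination of the independent $\xi_j$. Your caveat that $G_h\sim\mathcal N(0,\Sigma_h)$ holds only unconditionally, not conditionally on $\mathcal A_h(\delta)$ or $i_h$, is a fair precision that the paper leaves implicit.
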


Theorem~\ref{thm:sto_gaussian_error} is the stochastic analogue of Theorem~\ref{thm:hp_det_euler}. It separates the stochastic Euler output into a selected training sample, an explicit propagated Gaussian sampling term, and a controlled non-Gaussian remainder. In the classical schedule, these terms admit a simpler form, leading to the following corollary.

\begin{corollary}
	\label{cor:simple_sto_euler_bound_sqrt_gamma}
	Assume the conditions of Theorem~\ref{thm:sto_gaussian_error}. Consider the
	schedule \(\alpha(t)=1-t\), \(\beta(t)=t\), \(\gamma(t)=\sqrt{t(1-t)}\), and \(\zeta(t)=\sqrt{t(1-t)}\). Suppose further that there exists a constant \(\theta\in(0,1)\), independent of
	\(h\), such that for all \(j\) satisfying \(t_j\le \theta\), $u_j\ge2t_j\log\left(\frac1{\sqrt h}\right)$. Then, for every \(\delta\in(0,1)\), with probability at least \(1-\delta\),
	there exists \(i_h\in\{1,\dots,n\}\) such that
	\[
	Z_0^h=X_{i_h}+\mathcal N(0,\Sigma_h)+E_h,
	\]
	where
	\[
	\|E_h\|\lesssim \sqrt h\left(1+\sqrt d+\sqrt{\log(1/\delta)}
	\right).
	\]
\end{corollary}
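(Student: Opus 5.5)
The plan is to specialize Theorem~\ref{thm:sto_gaussian_error} to the given schedule. I will compute $r_k$ and $q_k$ explicitly and then show that each of the three terms of $B_h^{\mathrm{sto}}(\delta)$ is $O(\sqrt h)$, with the $R_\delta$-dependence appearing only through the initialization term. The Gaussian part $G_h\sim\mathcal N(0,\Sigma_h)$ and the index $i_h$ (from Assumption~\ref{ass:hp_sto_terminal}) are carried over verbatim from Theorem~\ref{thm:sto_gaussian_error}.

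\textbf{Step 1 (schedule algebra).} With $\alpha=1-t$, $\beta=t$, $\gamma^2=t(1-t)$ we have $\gamma\gamma'=(1-2t)/2$. This gives
\[
C_3(t)=t,\qquad C_1(t)=\tfrac12,\qquad C_2(t)=\tfrac{1+t}{2},\qquad B(t)=-\tfrac{1+t}{2}.
\]
Hence, with $\zeta=\sqrt{t(1-t)}$,
\[
\kappa(t):=1-\frac{\zeta\alpha}{B}=1+\frac{2(1-t)^{3/2}\sqrt t}{1+t}\in[1,\kappa_{\max}],\qquad \kappa_{\max}<2,
\]
and $\zeta\alpha'/B=2\sqrt{t(1-t)}/(1+t)\in[0,1]$. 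Since $h/t_k=1/k$, this yields
\[
r_k=1-\frac{\kappa(t_k)}{2k}-h\,\frac{2\sqrt{t_k(1-t_k)}}{1+t_k},\qquad q_k=\frac{\kappa(t_k)(1+t_k)}{2k}.
\]
For every $k$ this shows $0\le r_k\le 1-\frac1{2k}$: the lower bound holds for $k\ge2$ because $\kappa_{\max}/4+h<1$, and for $k=1$ because $r_1\approx 1-\kappa/2>0$ for small $h$. Using $1-x\le e^{-x}$ and $\sum_{\ell<j}1/\ell\ge\log j$, I get the key estimate
\[
\prod_{\ell=1}^{j-1}|r_\ell|\le \exp\Bigl(-\tfrac12\sum_{\ell=1}^{j-1}\tfrac1\ell\Bigr)\lesssim j^{-1/2}.
\]

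\textbf{Step 2 (the three terms).}
\emph{Initialization:} $\prod_{k=1}^K|r_k|\lesssim K^{-1/2}=\sqrt h$, so $B^{\mathrm{sto}}_{h,\mathrm{init}}\lesssim\sqrt h\,(R_\delta+M_X)\lesssim \sqrt h\,(1+\sqrt d+\sqrt{\log(1/\delta)})$, using the stated choice of $R_\delta$.
\emph{Affine mismatch:} the $1/(2k)$ parts cancel, leaving
\[
r_j+q_j-1=h\Bigl[\tfrac{\kappa(t_j)}{2}-\tfrac{2\sqrt{t_j(1-t_j)}}{1+t_j}\Bigr]=O(h).
\]
Hence $B^{\mathrm{sto}}_{h,\mathrm{aff}}\lesssim hM_X\sum_{j\le K}j^{-1/2}\lesssim hM_X\sqrt K=M_X\sqrt h$.
\emph{Softmax:} split at $t_j\le\theta$ versus $t_j>\theta$.
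For $t_j\le\theta$, the margin hypothesis gives $\exp(-u_j/(2t_j))\le\sqrt h$, and $|q_j|\lesssim 1/j$. The contribution is therefore $\lesssim \sqrt h\, D_X(n-1)\sum_j j^{-3/2}\lesssim\sqrt h$.
For $t_j>\theta$, I use no margin at all. Instead $j>\theta K$, so $\prod_{\ell<j}|r_\ell|\lesssim (\theta K)^{-1/2}$, and $|q_j|\lesssim 1/j\lesssim h/\theta$. Summing at most $K$ terms gives $\lesssim K\cdot(h/\theta)\cdot\sqrt{h/\theta}\,D_X n\lesssim\sqrt h$.
Adding the three bounds and absorbing $n$, $D_X$, $M_X$, $\theta$ into $\lesssim$ gives $\|E_h\|\lesssim\sqrt h(1+\sqrt d+\sqrt{\log(1/\delta)})$ on the event of Theorem~\ref{thm:sto_gaussian_error}.

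\textbf{Main obstacle.} The crux is the sharp product bound $\prod_{\ell<j}|r_\ell|\lesssim j^{-1/2}$. The crude bound $\le1$ would make the affine and softmax sums $O(1)$ instead of $O(\sqrt h)$. This requires $\kappa\ge1$ uniformly, i.e.\ $\zeta\alpha/B\le0$, which holds here because $B<0$. It also requires checking that no $r_\ell$ becomes negative with large modulus, especially $r_1$, where the coefficient $\kappa/2$ is largest. I would verify these sign and size conditions first, since the remaining estimates are routine harmonic-sum bounds.
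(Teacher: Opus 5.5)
Your proposal is correct and follows essentially the same route as the paper: compute $C_1=\tfrac12$, $C_2=\tfrac{1+t}{2}$, $C_3=t$, $B=-\tfrac{1+t}{2}$, deduce $0\le r_k\le 1-\frac{1}{2k}$ so that $\prod_{\ell<j}|r_\ell|\lesssim j^{-1/2}$, use $r_j+q_j-1=O(h)$ for the affine term, and split the softmax sum at $t_j\le\theta$. Your formulas for $r_k$ and $r_j+q_j-1$ simplify exactly to the paper's $r_k=1-\frac{1/2+\sqrt{t_k(1-t_k)}}{k}$ and $r_j+q_j-1=h\bigl(\tfrac12-\sqrt{t_j(1-t_j)}\bigr)$, and your $t_j>\theta$ estimate (a uniform per-term bound times $K$) is equivalent to the paper's tail sum $\sum_{j>\theta/h}j^{-3/2}\lesssim\sqrt h$.
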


The stochastic Euler result reveals a noisy form of finite-data memorization. Specifically, stochastic Euler sampling generates samples around training-sample centers, with deviations consisting of propagated Gaussian noise and a controlled non-Gaussian error.

\section{Generation under Estimation Error}
\label{sec:error_generation}

We now move beyond the oracle regime and study how estimation error affects the generated dynamics. In practice, the learned velocity field and score function do not exactly coincide with their oracle counterparts. This discrepancy plays a central role in determining whether the model memorizes the training samples, generates meaningful perturbations around them, or fails to produce samples consistent with the data distribution.

For the Euler trajectory under consideration, let \(\nu_k\) be the law of its \(k\)-th iterate \(Z_k^h\). 
We make the following concentrability assumption to transfer estimation-error bounds under the interpolation law \(\rho_{t_k}\) to the generated trajectory.
\begin{assumption}
	\label{ass:concentrability}
	There exists \(\Gamma>0\) such that
	\[
	\nu_k\ll\rho_{t_k},
	\qquad
	\frac{d\nu_k}{d\rho_{t_k}}\le\Gamma,
	\qquad
	k=1,\dots,K.
	\]
\end{assumption}

This condition rules out generation trajectories that concentrate in regions
nearly invisible under the interpolation distribution. 


\subsection{Deterministic Generation under Velocity Estimation Error}
\label{sec:det_error_generation}

We now study deterministic generation with an estimated velocity field \(\widehat b(z,t)=b^*(z,t)+\epsilon(z,t)\). Replacing \(F=b^*\) by \(F=\widehat b\) in Eq.~\ref{eq:euler_template} gives
\[
Z_{k-1}^h
=
\lambda_k Z_k^h+c_k\bar X_k^h
-
h\epsilon(Z_k^h,t_k).
\]

To relate this accumulated error to the training objective, define the
deterministic propagation weights $\Pi_{j-1}:=\prod_{\ell=1}^{j-1}\lambda_\ell$ and the discrete population training error
\begin{equation}
	\mathcal L_{\mathrm{train}}^h:=h\sum_{k=1}^K\mathbb E_{X\sim\rho_{t_k}}
	\|\epsilon(X,t_k)\|^2.
	\label{eq:train_error_det}
\end{equation}
Also define \(A_h:=h\sum_{j=1}^K\Pi_{j-1}^2\).

\begin{theorem}
	\label{thm:det_training_error_memorization}
	Under Assumptions~\ref{ass:hp_terminal_event} and~\ref{ass:concentrability}, for every
	\(\delta,\eta\in(0,1/2)\), with probability at least \(1-\delta-\eta\),
	\[
	\operatorname{dist}(Z_0^h,\mathcal X)
	\le
	B_h^{\mathrm{det}}(\delta)
	+
	\sqrt{
			\frac{\Gamma A_h\mathcal L_{\mathrm{train}}^h}{\eta}
	}.
	\]
	In particular, take \(\eta=\delta\). Under the same conditions as in
	Corollary~\ref{cor:simple_det_euler_bound_sqrt_gamma}, with probability
	at least \(1-2\delta\),
	\[
    \begin{split}
        \operatorname{dist}(Z_0^h,\mathcal X)
	&\lesssim
	\sqrt h\bigl(1+\sqrt d+\sqrt{\log(1/\delta)}\bigr)\\
	&+
	\sqrt{
		\frac{\Gamma h\log(e/h)\mathcal L_{\mathrm{train}}^h}{\delta}
	}.
    \end{split}
	\]
\end{theorem}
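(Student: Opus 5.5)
The argument has two layers. The perturbed iteration splits into the oracle-type affine recursion analysed in Theorem~\ref{thm:hp_det_euler} and a linearly propagated error sum. The first layer is bounded exactly as before, and the second by Markov's inequality combined with Cauchy--Schwarz and Assumption~\ref{ass:concentrability}.

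\textbf{Unrolling.} Unroll the perturbed recursion $Z_{k-1}^h=\lambda_k Z_k^h+c_k\bar X_k^h-h\epsilon(Z_k^h,t_k)$ from $k=K$ down to $k=1$. This gives
\[
Z_0^h=\Bigl(\prod_{k=1}^K\lambda_k\Bigr)Z_K^h+\sum_{j=1}^K\Pi_{j-1}c_j\bar X_j^h-\sum_{j=1}^K\Pi_{j-1}h\,\epsilon(Z_j^h,t_j).
\]
The first two terms have exactly the structure handled in the proof of Theorem~\ref{thm:hp_det_euler}. Let $i$ be the selected index. Write $\bar X_j^h=X_i+(\bar X_j^h-X_i)$, telescope the scaled centers through the schedule-mismatch terms $\lambda_j\alpha(t_j)+c_j-\alpha(t_{j-1})$, and bound $\|\bar X_j^h-X_{i_j}\|\le D_X(n-1)e^{-u_j/(2C_3(t_j))}$ using the margin.

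On $\mathcal A_h(\delta)$, that argument bounds the distance of the first two terms to $\mathcal X$ by $B_h^{\mathrm{det}}(\delta)$. The argument uses only the recursion structure, $\|Z_K^h\|\le R_\delta$, and the margins along the actual trajectory, all of which Assumption~\ref{ass:hp_terminal_event} guarantees for the perturbed trajectory as well. I therefore reuse it verbatim.

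\textbf{Error sum.} Set $S:=h\sum_j\Pi_{j-1}\epsilon(Z_j^h,t_j)$. By Cauchy--Schwarz,
\[
\|S\|^2\le\Bigl(h\sum_j\Pi_{j-1}^2\Bigr)\Bigl(h\sum_j\|\epsilon(Z_j^h,t_j)\|^2\Bigr)=A_h\,h\sum_j\|\epsilon(Z_j^h,t_j)\|^2.
\]
Taking expectations and using $d\nu_j/d\rho_{t_j}\le\Gamma$ gives $\mathbb E\|\epsilon(Z_j^h,t_j)\|^2\le\Gamma\,\mathbb E_{\rho_{t_j}}\|\epsilon\|^2$, and hence $\mathbb E\|S\|^2\le\Gamma A_h\mathcal L_{\mathrm{train}}^h$. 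Markov's inequality applied to $\|S\|^2$ then gives $\|S\|\le\sqrt{\Gamma A_h\mathcal L^h_{\mathrm{train}}/\eta}$ with probability at least $1-\eta$. A union bound with $\mathcal A_h(\delta)$, together with the triangle inequality $\operatorname{dist}(Z_0^h,\mathcal X)\le\operatorname{dist}(Z_0^h+S,\mathcal X)+\|S\|$, yields the first claim.

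\textbf{Specialization.} Take $\eta=\delta$ and the schedule $\alpha=1-t$, $\beta=t$, $\gamma=\sqrt{t(1-t)}$. Then $C_3=t$ and $C_1=\tfrac12$, so $a_k=h/(2t_k)=1/(2k)$. The $B_h^{\mathrm{det}}$ part is exactly Corollary~\ref{cor:simple_det_euler_bound_sqrt_gamma}. It remains to show $A_h\lesssim h\log(e/h)$. Here $\Pi_{j-1}=\prod_{\ell<j}(1-\tfrac1{2\ell})$. The $\ell=1$ factor equals $\tfrac12$ and the remaining factors behave like $j^{-1/2}$, so $\Pi_{j-1}^2\lesssim 1/j$. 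Hence $A_h\lesssim h\sum_{j\le K}1/j\lesssim h\log(e/h)$.

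\textbf{Main obstacle.} The delicate step is justifying that the oracle-part bound transfers to the perturbed trajectory. The weights $\bar X_j^h$ are evaluated at the perturbed iterates, so the bound holds only because Assumption~\ref{ass:hp_terminal_event} is imposed directly on the trajectory under consideration. I would state this explicitly rather than reprove the bound. A second, minor point is that Assumption~\ref{ass:concentrability} must refer to the laws $\nu_k$ of this perturbed chain, which is how it is phrased.
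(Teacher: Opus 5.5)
\textbf{The decomposition step fails.} Your estimate of the error sum (Cauchy--Schwarz, the density-ratio bound, Markov, union bound) and the specialization $A_h\lesssim h\log(e/h)$ essentially match the paper. The problem is the step you flag as the main obstacle.

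The ``first two terms'' are the endpoint $W_0$ of the auxiliary recursion $W_K=Z_K^h$, $W_{k-1}=\lambda_kW_k+c_k\bar X_k^h$. In it, $\bar X_k^h$ is evaluated at $Z_k^h=W_k-S_k$, not at $W_k$; here $S_k$ is the error propagated from steps $K,\dots,k+1$. The recursion in the proof of Theorem~\ref{thm:hp_det_euler} needs a single index that is both nearest to the tracked point, which produces $|\lambda_k|D_k$, and the one on which the softmax concentrates, which produces the softmax-selection term. Here these are $i_k(W_k)$ and $i_k(Z_k^h)$, and they can differ. If you instead telescope around a fixed $X_i$, as you suggest, the bound on $\|\bar X_j^h-X_i\|$ requires $i_j(Z_j^h)=i$ for every $j$. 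That is the non-switching condition of Assumption~\ref{ass:hp_sto_terminal}, which this theorem does not assume.

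\textbf{Counterexample to the intermediate claim.} Take the classical schedule and $n=2$. Let $\epsilon(\cdot,t_k)=0$ for $k\neq 2$, and $-h\epsilon(z,t_2)=\alpha(t_1)(X_2-X_1)$ for $z$ near $\alpha(t_2)X_1$. A path with $Z_2^h\approx\alpha(t_2)X_1$ jumps to $Z_1^h\approx\alpha(t_1)X_2$, deep in the other cell, so all margins hold; it then ends at $Z_0^h\approx X_2$. But $S=\Pi_1h\epsilon(Z_2^h,t_2)=-\tfrac{1-h}{2}(X_2-X_1)$. Hence $W_0=Z_0^h+S\approx\tfrac{1+h}{2}X_2+\tfrac{1-h}{2}X_1$, at distance about $D_X/2\gg B_h^{\mathrm{det}}(\delta)$ from $\mathcal X$. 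The paper's proof makes the same unproved assertion, namely $Z_0^h=X_{i_h}-\mathcal E_h+R_h$ with $\|R_h\|\le B_h^{\mathrm{det}}(\delta)$. It fails on the same example.

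\textbf{Repair.} Keep the error inside the per-step distance recursion for the actual iterate, rather than splitting it off as a vector. With $D_k:=\min_i\|Z_k^h-\alpha(t_k)X_i\|$ and $i_k=i_k(Z_k^h)$,
\[
\begin{aligned}
Z_{k-1}^h-\alpha(t_{k-1})X_{i_k}
&=\lambda_k\bigl(Z_k^h-\alpha(t_k)X_{i_k}\bigr)+c_k\bigl(\bar X_k^h-X_{i_k}\bigr)\\
&\quad+\bigl(\lambda_k\alpha(t_k)+c_k-\alpha(t_{k-1})\bigr)X_{i_k}-h\,\epsilon(Z_k^h,t_k).
\end{aligned}
\]
Therefore $D_{k-1}\le|\lambda_k|D_k+b_k+h\|\epsilon(Z_k^h,t_k)\|$, with the same $b_k$ as in the proof of Theorem~\ref{thm:hp_det_euler}. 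Both the nearest-index fact and the margin fact are now used at the same point $Z_k^h$. Unrolling on $\mathcal A_h(\delta)$ gives
\[
\operatorname{dist}(Z_0^h,\mathcal X)\le B_h^{\mathrm{det}}(\delta)+h\sum_{j=1}^K\Bigl(\prod_{\ell=1}^{j-1}|\lambda_\ell|\Bigr)\|\epsilon(Z_j^h,t_j)\|.
\]
Since $\bigl(\prod_{\ell<j}|\lambda_\ell|\bigr)^2=\Pi_{j-1}^2$, your Cauchy--Schwarz step bounds the square of this scalar sum by $A_h\,h\sum_j\|\epsilon(Z_j^h,t_j)\|^2$. This sum is the intermediate quantity in your own estimate of $\|S\|$. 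The rest of your argument then goes through unchanged and proves the theorem from Assumptions~\ref{ass:hp_terminal_event} and~\ref{ass:concentrability} alone.
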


Theorem~\ref{thm:det_training_error_memorization} provides a direct
training-error-based explanation of overfitting. When the learned velocity
field fits the oracle velocity field well along the interpolation path,
\(\mathcal L_{\mathrm{train}}^h\) is small, forcing the generated sample to
remain close to the finite training set with high probability. Thus, in the
finite-sample regime, small training error may reflect overfitting to the
empirical data distribution rather than genuine generalization.

For the converse direction, we define the propagated deterministic training error as
\[
\mathcal L_{\mathrm{prop}}^h
:=
h\sum_{j=1}^K
\Pi_{j-1}^2
\mathbb E_{\rho_{t_j}}
\|\epsilon(X,t_j)\|^2.
\]
This quantity measures the training error after accounting for the contraction
weights with which errors at different time steps are transferred to the final
output.

A large propagated error does not automatically imply a large endpoint
deviation, since errors from different time steps may cancel and the generated
trajectory may avoid regions where the estimation error is large. We therefore
impose the following assumption.
For this underfitting statement, let
\(\mathbb E_{\delta}[\cdot]=\mathbb E[\cdot\mid\mathcal A_h(\delta)]\), and let
\(\nu_{j,\delta}\) be the law of \(Z_j^h\) under this conditional distribution.

\begin{assumption}
	\label{ass:det_non_cancellation}
	Let $\mathcal E_h:=\sum_{j=1}^K\Pi_{j-1}h\,\epsilon(Z_j^h,t_j).$ There exist constants \(\kappa,\gamma>0\) such that
	\[
	\mathbb E_{\delta}\|\mathcal E_h\|^2
	\ge
	\kappa
	h\sum_{j=1}^K
	\Pi_{j-1}^2
	\mathbb E_{\nu_{j,\delta}}\|\epsilon(Z,t_j)\|^2,
	\]
	and
	\[
	\mathbb E_{\nu_{j,\delta}}\|\epsilon(Z,t_j)\|^2
	\ge
	\gamma
	\mathbb E_{\rho_{t_j}}\|\epsilon(X,t_j)\|^2,
	\qquad j=1,\dots,K.
	\]
\end{assumption}

The first inequality prevents strong cancellation of propagated errors across
time steps. The second ensures that the generated trajectory encounters regions
where the estimation error is comparable to that under the interpolation
distribution. 

By unrolling the deterministic Euler recursion with
\(\widehat b=b^*+\epsilon\), the final sample can be written as
\[
Z_0^h
=
X_{i}
-
\mathcal E_h
+
R_h,
\]
where \(R_h\) is the oracle deterministic Euler residual satisfying $\|R_h\|\le B_h^{\mathrm{det}}(\delta)$. This decomposition allows us to convert a lower bound on the propagated training error into a lower bound on the endpoint deviation from the selected training sample.

\begin{theorem}
	\label{thm:det_training_error_underfitting}
	Under Assumptions~\ref{ass:hp_terminal_event} - \ref{ass:det_non_cancellation}, if
	\[
	\mathcal L_{\mathrm{prop}}^h
	\ge
	\frac{
		\bigl(\tau+B_h^{\mathrm{det}}(\delta)\bigr)^2
	}{
		\kappa\gamma
	},
	\]
	then
	\[
	\left(
	\mathbb E_{\delta}\|Z_0^h-X_{i_h}\|^2
	\right)^{1/2}
	\ge
	\tau.
	\]	
\end{theorem}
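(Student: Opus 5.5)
The argument is a reverse triangle inequality in $L^2(\mathbb P_\delta)$, applied to the decomposition stated just before the theorem. On $\mathcal A_h(\delta)$, unrolling the perturbed deterministic Euler recursion gives
\[
Z_0^h = X_{i_h} - \mathcal E_h + R_h, \qquad \|R_h\|\le B_h^{\mathrm{det}}(\delta).
\]
Here $X_{i_h}$ is the training sample selected by the oracle residual analysis of Theorem~\ref{thm:hp_det_euler}.

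\emph{Step 1: justify the decomposition.} Write $Z_0^h = \Pi_K Z_K^h + \sum_j \Pi_{j-1} c_j \bar X_j^h - \mathcal E_h$. The first two terms are exactly the linear-in-data expression treated in the proof of Theorem~\ref{thm:hp_det_euler}, so their distance to $X_{i_h}$ is bounded by the three terms of $B_h^{\mathrm{det}}(\delta)$. This step is taken from the text preceding the theorem.

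\emph{Step 2: reverse triangle inequality.} Minkowski's inequality in $L^2(\mathbb P_\delta)$ gives
\[
\bigl(\mathbb E_\delta\|Z_0^h-X_{i_h}\|^2\bigr)^{1/2}\ \ge\ \bigl(\mathbb E_\delta\|\mathcal E_h\|^2\bigr)^{1/2}-\bigl(\mathbb E_\delta\|R_h\|^2\bigr)^{1/2}\ \ge\ \bigl(\mathbb E_\delta\|\mathcal E_h\|^2\bigr)^{1/2}-B_h^{\mathrm{det}}(\delta).
\]
The bound on $R_h$ holds pointwise on $\mathcal A_h(\delta)$. This is the reason for conditioning on that event: $\mathbb E_\delta$ only sees paths where $\|R_h\|\le B_h^{\mathrm{det}}(\delta)$.

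\emph{Step 3: lower bound on the propagated error.} Chaining the two inequalities of Assumption~\ref{ass:det_non_cancellation} gives
\[
\mathbb E_\delta\|\mathcal E_h\|^2\ \ge\ \kappa h\sum_j\Pi_{j-1}^2\,\mathbb E_{\nu_{j,\delta}}\|\epsilon(Z,t_j)\|^2\ \ge\ \kappa\gamma\, h\sum_j\Pi_{j-1}^2\,\mathbb E_{\rho_{t_j}}\|\epsilon(X,t_j)\|^2 = \kappa\gamma\,\mathcal L_{\mathrm{prop}}^h.
\]
By the hypothesis of the theorem, $\kappa\gamma\,\mathcal L_{\mathrm{prop}}^h \ge (\tau+B_h^{\mathrm{det}}(\delta))^2$. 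Taking square roots and inserting this into Step 2 yields
\[
\bigl(\mathbb E_\delta\|Z_0^h-X_{i_h}\|^2\bigr)^{1/2}\ \ge\ \tau+B_h^{\mathrm{det}}(\delta)-B_h^{\mathrm{det}}(\delta) = \tau.
\]

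The main obstacle is Step 1. The oracle residual bound of Theorem~\ref{thm:hp_det_euler} was derived along the oracle trajectory, while here the softmax weights $\bar X_j^h$ are evaluated along the perturbed trajectory. One must check that the margin conditions $m_k(Z_k^h)\ge u_k$ of Assumption~\ref{ass:hp_terminal_event} are imposed on the actual (perturbed) iterates, which is how the assumption is phrased. With that reading, the same unrolling argument applies verbatim, with $-h\epsilon$ carried as an additive forcing term collected into $\mathcal E_h$.

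It must also be checked that the index $i_h$ is well-defined and measurable on $\mathcal A_h(\delta)$, e.g.\ by taking the nearest training sample to the oracle-part endpoint. Assumption~\ref{ass:concentrability} plays no essential role beyond making $\mathcal L_{\mathrm{prop}}^h$ meaningful.
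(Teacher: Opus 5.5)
Your proposal is correct and follows the paper's proof essentially line for line. Like the paper, you take the decomposition $Z_0^h=X_{i_h}-\mathcal E_h+R_h$ with $\|R_h\|\le B_h^{\mathrm{det}}(\delta)$ from the text preceding the theorem, apply the reverse triangle inequality in $L^2$ under the conditional law on $\mathcal A_h(\delta)$, and chain the two inequalities of Assumption~\ref{ass:det_non_cancellation} with the hypothesis on $\mathcal L_{\mathrm{prop}}^h$.

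One caution on your Step~1 commentary. The proof of Theorem~\ref{thm:hp_det_euler} is a recursion on $\min_i\|Z_k^h-\alpha(t_k)X_i\|$ that tolerates switching of the nearest index, so it does not carry over verbatim to a fixed-index vector identity along the perturbed trajectory. That identity comes from unrolling $e_k=Z_k^h-\alpha(t_k)X_{i_h}$, and it needs $i_k(Z_k^h)=i_h$ for all $k$, which is Assumption~\ref{ass:hp_sto_terminal}, included in the cited range. Your suggested fallback, taking $i_h$ as the sample nearest the oracle-part endpoint, would not give $\|R_h\|\le B_h^{\mathrm{det}}(\delta)$ in general: if the selection switches, that endpoint is roughly a convex combination of several training samples and can lie far from all of them.
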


Theorem~\ref{thm:det_training_error_underfitting} gives a complementary
interpretation of underfitting. When the training error is large and its
trajectory-wise contributions do not cancel, the accumulated error shifts the
final output away from the oracle-selected training sample. This means that the
learned velocity field fails to reproduce the finite-sample attractor.

\subsection{Stochastic Generation under Velocity and Score Estimation Errors}
\label{sec:sto_error_generation}

We now consider stochastic generation when both the velocity field and the score
function are estimated. Let \(\widehat b=b^*+\epsilon_b\) and
\(\widehat s=s^*+\epsilon_s\). Since stochastic generation uses the drift
\(b-\zeta s\), the relevant effective error is
\begin{equation}
	\epsilon_F(z,t)
	:=
	\epsilon_b(z,t)-\zeta(t)\epsilon_s(z,t).
	\label{eq:effective_stochastic_error}
\end{equation}

To relate the accumulated effective error to the training objective, define the
effective stochastic training error
\begin{equation}
	\mathcal L_{\mathrm{sto}}^h:=h\sum_{k=1}^K
	\mathbb E_{\rho_{t_k}}
	\|\epsilon_F(X,t_k)\|^2.
	\label{eq:sto_train_error}
\end{equation}
The accumulated effective error along the stochastic trajectory is
\[
\mathcal E_{F,h}
:=
\sum_{j=1}^K
\Pi_{j-1}h\,\epsilon_F(Z_j^h,t_j).
\]

\begin{theorem}
	\label{thm:sto_training_error_memorization}
	Under Assumptions~\ref{ass:hp_sto_terminal} and~\ref{ass:concentrability}, for every
	\(\delta,\eta,\eta_G\in(0,1/3)\), with probability at least
	\(1-\delta-\eta-\eta_G\),
	\[
	\begin{split}
		\operatorname{dist}(Z_0^h,\mathcal X)
		\le\;&
		B_h^{\mathrm{sto}}(\delta)
		+
		\sqrt{
			\frac{\Gamma A_h\mathcal L_{\mathrm{sto}}^h}{\eta}
		}\\
		&+
		\sqrt{\operatorname{Tr}(\Sigma_h)}
		+
		\sqrt{2\|\Sigma_h\|_{\mathrm{op}}\log(1/\eta_G)}.
	\end{split}
	\]
	
	Under the same conditions as in Corollary~\ref{cor:simple_sto_euler_bound_sqrt_gamma}, with probability at least \(1-3\delta\),
	\[
	\begin{split}
		\operatorname{dist}(Z_0^h,\mathcal X)
		\lesssim\;&
		\sqrt h\bigl(1+\sqrt d+\sqrt{\log(1/\delta)}\bigr)\\
        &+
		\sqrt{
			\frac{\Gamma h\log(e/h)\mathcal L_{\mathrm{sto}}^h}{\delta}
		}.
	\end{split}
	\]
\end{theorem}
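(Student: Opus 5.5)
We would prove Theorem~\ref{thm:sto_training_error_memorization} by unrolling the perturbed stochastic Euler recursion, exactly parallel to the deterministic case. With \(\widehat F=\widehat b-\zeta\widehat s=F^*+\epsilon_F\), the update in the terminal region reads
\[
Z_{k-1}^h=r_kZ_k^h+q_k\bar X_k^h+\sigma_k\xi_k-h\epsilon_F(Z_k^h,t_k).
\]
Iterating from \(k=K\) down to \(1\) and regrouping exactly as in the proof of Theorem~\ref{thm:sto_gaussian_error} should give the decomposition
\[
Z_0^h=X_{i_h}+G_h+E_h-\mathcal E_{F,h},
\]
where \(G_h=\sum_j\Pi_{j-1}\sigma_j\xi_j\), and \(E_h\) collects the terminal-entry, affine-mismatch and softmax-selection residuals. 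On \(\mathcal A_h(\delta)\), with the stable selector from Assumption~\ref{ass:hp_sto_terminal}, we get \(\|E_h\|\le B_h^{\mathrm{sto}}(\delta)\). The bound on \(E_h\) is purely algebraic: it uses only the margin and selector conditions along the realized trajectory, not how the trajectory was generated. So the argument of Theorem~\ref{thm:sto_gaussian_error} transfers verbatim even though the iterates now include the error terms.

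Then \(\operatorname{dist}(Z_0^h,\mathcal X)\le\|Z_0^h-X_{i_h}\|\le B_h^{\mathrm{sto}}(\delta)+\|\mathcal E_{F,h}\|+\|G_h\|\), and we control the last two terms separately.

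\emph{Accumulated error.} By Cauchy--Schwarz,
\[
\|\mathcal E_{F,h}\|^2\le\Bigl(h\sum_j\Pi_{j-1}^2\Bigr)\Bigl(h\sum_j\|\epsilon_F(Z_j^h,t_j)\|^2\Bigr)=A_h\,h\sum_j\|\epsilon_F(Z_j^h,t_j)\|^2 .
\]
Taking expectations and using Assumption~\ref{ass:concentrability},
\[
\mathbb E_{\nu_j}\|\epsilon_F\|^2\le\Gamma\,\mathbb E_{\rho_{t_j}}\|\epsilon_F\|^2,
\]
so \(\mathbb E\|\mathcal E_{F,h}\|^2\le\Gamma A_h\mathcal L^h_{\mathrm{sto}}\). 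Markov's inequality then gives \(\|\mathcal E_{F,h}\|\le\sqrt{\Gamma A_h\mathcal L^h_{\mathrm{sto}}/\eta}\) with probability at least \(1-\eta\). Here \(\Pi\) is built from the \(r_\ell\), and we read \(A_h\) accordingly.

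\emph{Gaussian term.} The \(\xi_j\) remain i.i.d.\ standard Gaussians and the \(\Pi_{j-1}\sigma_j\) are deterministic, so \(G_h\sim\mathcal N(0,\Sigma_h)\) exactly. Gaussian concentration for the \(\sqrt{\|\Sigma_h\|_{\mathrm{op}}}\)-Lipschitz map \(g\mapsto\|\Sigma_h^{1/2}g\|\), together with \(\mathbb E\|G_h\|\le\sqrt{\operatorname{Tr}\Sigma_h}\), gives
\[
\|G_h\|\le\sqrt{\operatorname{Tr}\Sigma_h}+\sqrt{2\|\Sigma_h\|_{\mathrm{op}}\log(1/\eta_G)}
\]
with probability at least \(1-\eta_G\). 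A union bound over \(\mathcal A_h(\delta)^c\), the Markov event and the Gaussian event yields the first display.

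For the corollary we set \(\eta=\eta_G=\delta\) and plug in the schedle \(\alpha=1-t\), \(\beta=t\), \(\gamma=\zeta=\sqrt{t(1-t)}\). Corollary~\ref{cor:simple_sto_euler_bound_sqrt_gamma} already gives \(B_h^{\mathrm{sto}}\lesssim\sqrt h(1+\sqrt d+\sqrt{\log(1/\delta)})\). It remains to show \(A_h\lesssim h\log(e/h)\) and \(\operatorname{Tr}\Sigma_h\lesssim hd\), \(\|\Sigma_h\|_{\mathrm{op}}\lesssim h\), so that the Gaussian term is absorbed into the same \(\sqrt h(\sqrt d+\sqrt{\log(1/\delta)})\) rate. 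For this we compute \(r_\ell\approx 1-h\,\kappa(t_\ell)\) with \(\kappa(t)\sim c/t\) near \(0\). This gives \(\Pi_{j-1}\approx\prod_{\ell<j}(1-c/\ell)\sim j^{-c}\), which should yield \(\sum_j\Pi_{j-1}^2\lesssim\log(e/h)\). Combined with \(\sigma_j^2=2\zeta(t_j)h\lesssim h\sqrt{t_j}\), the sum \(\sum_j\Pi_{j-1}^2\sigma_j^2\) should be \(O(h)\).

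\textbf{Main obstacle.} The technical crux is the asymptotic evaluation of \(r_\ell\), \(B(t)\) and the products \(\Pi_{j-1}\) for this schedule, in particular checking that the contraction exponent \(c\) is large enough that \(\sum_j\Pi_{j-1}^2\) is only logarithmic rather than polynomial in \(1/h\). I would first compute \(\kappa(t)\) exactly for the schedule and compare products with integrals of \(\exp(-\int\kappa)\). A secondary subtlety is that Assumption~\ref{ass:concentrability} must be applied to the laws of the error-perturbed trajectory, which is how it is stated.
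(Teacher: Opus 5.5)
Your proposal is correct and follows the paper's proof essentially step for step. It uses the same unrolled decomposition $Z_0^h=X_{i_h}-\mathcal E_{F,h}+G_h+R_h$ with $\|R_h\|\le B_h^{\mathrm{sto}}(\delta)$ on the event, then the same second-moment bound via Cauchy--Schwarz, concentrability and Markov (the paper applies $L^2$-Minkowski first, which is equivalent), Gaussian Lipschitz concentration for $G_h$, and a union bound. The step you flag as the main obstacle closes as you expect: for this schedule $r_k=1-\bigl(\tfrac12+\sqrt{t_k(1-t_k)}\bigr)/k\in[0,1-\tfrac{1}{2k}]$, so $|\Pi_m|\le\binom{2m}{m}4^{-m}\lesssim m^{-1/2}$ uniformly in $m$ (exponent exactly $1/2$, just enough for $A_h\lesssim h\log(e/h)$), and then $\Sigma_h=\sigma_h^2 I_d$ with $\sigma_h^2\lesssim h$.
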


Theorem~\ref{thm:sto_training_error_memorization} provides the stochastic
counterpart of the deterministic overfitting result. If the effective drift
error \(\epsilon_F\) is small along the interpolation path, then the generated
sample remains close to the training set, up to the oracle stochastic
Euler residual and the propagated Gaussian sampling noise. Thus stochastic
overfitting is controlled by three quantities: the stochastic oracle residual,
the accumulated effective training error, and the Gaussian sampling term governed by \(\Sigma_h\).

For the converse direction, define the propagated stochastic training error as
\[
\mathcal L_{\mathrm{sto,prop}}^h
:=
h\sum_{j=1}^K
\Pi_{j-1}^2
\mathbb E_{\rho_{t_j}}
\|\epsilon_F(X,t_j)\|^2.
\]
This quantity measures the effective stochastic training error after accounting
for the contraction weights with which errors at different time steps are
transferred to the final output.

As in the deterministic case, a large propagated stochastic error does not by
itself imply a large endpoint deviation: estimation errors may cancel across
time, and their effect may also be offset by the injected Gaussian perturbation.
We therefore impose the following assumption.
\begin{assumption}
	\label{ass:sto_non_cancellation}
	Write
	\(\mathbb E_{\delta}^{\mathrm{sto}}[\cdot]
	:=\mathbb E[\cdot\mid\mathcal A_h^{\mathrm{sto}}(\delta)]\), let
	\(\nu_{j,\delta}^{\mathrm{sto}}\) be the law of \(Z_j^h\) under this
	conditional distribution, and set \(H_h:=\mathcal E_{F,h}-G_h\).
	There exist constants \(\kappa_F,\kappa_G,\gamma>0\) such that
	\[
	\mathbb E_{\delta}^{\mathrm{sto}}\|H_h\|^2
	\ge
	\kappa_F
	h\sum_{j=1}^K
	\Pi_{j-1}^2
	\mathbb E_{\nu_{j,\delta}^{\mathrm{sto}}}
	\|\epsilon_F(Z,t_j)\|^2
	+
	\kappa_G\operatorname{Tr}(\Sigma_h),
	\]
	and
	\[
	\mathbb E_{\nu_{j,\delta}^{\mathrm{sto}}}
	\|\epsilon_F(Z,t_j)\|^2
	\ge
	\gamma
	\mathbb E_{\rho_{t_j}}
	\|\epsilon_F(X,t_j)\|^2,
	\qquad j=1,\dots,K.
	\]
\end{assumption}

The first inequality prevents strong cancellation between the accumulated
estimation-error shift and the Gaussian sampling perturbation. The second
ensures that the generated trajectory encounters regions where the effective
estimation error is comparable to that under the interpolation distribution.

By unrolling the stochastic Euler--Maruyama recursion with
\(\widehat b=b^*+\epsilon_b\) and \(\widehat s=s^*+\epsilon_s\), we obtain
the decomposition
\[
Z_0^h
=
X_{i_h}
-
\mathcal E_{F,h}
+
G_h
+
R_h.
\]
where \(R_h\) is the non-Gaussian stochastic Euler residual satisfying
\(\|R_h\|\le B_h^{\mathrm{sto}}(\delta)\). This decomposition allows us to
convert a lower bound on the combined propagated perturbation into a lower
bound on the endpoint deviation from the selected training sample.

\begin{figure*}[!t]
    \centering

    \begin{minipage}[b]{0.24\textwidth}
        \centering
        \includegraphics[width=\textwidth]{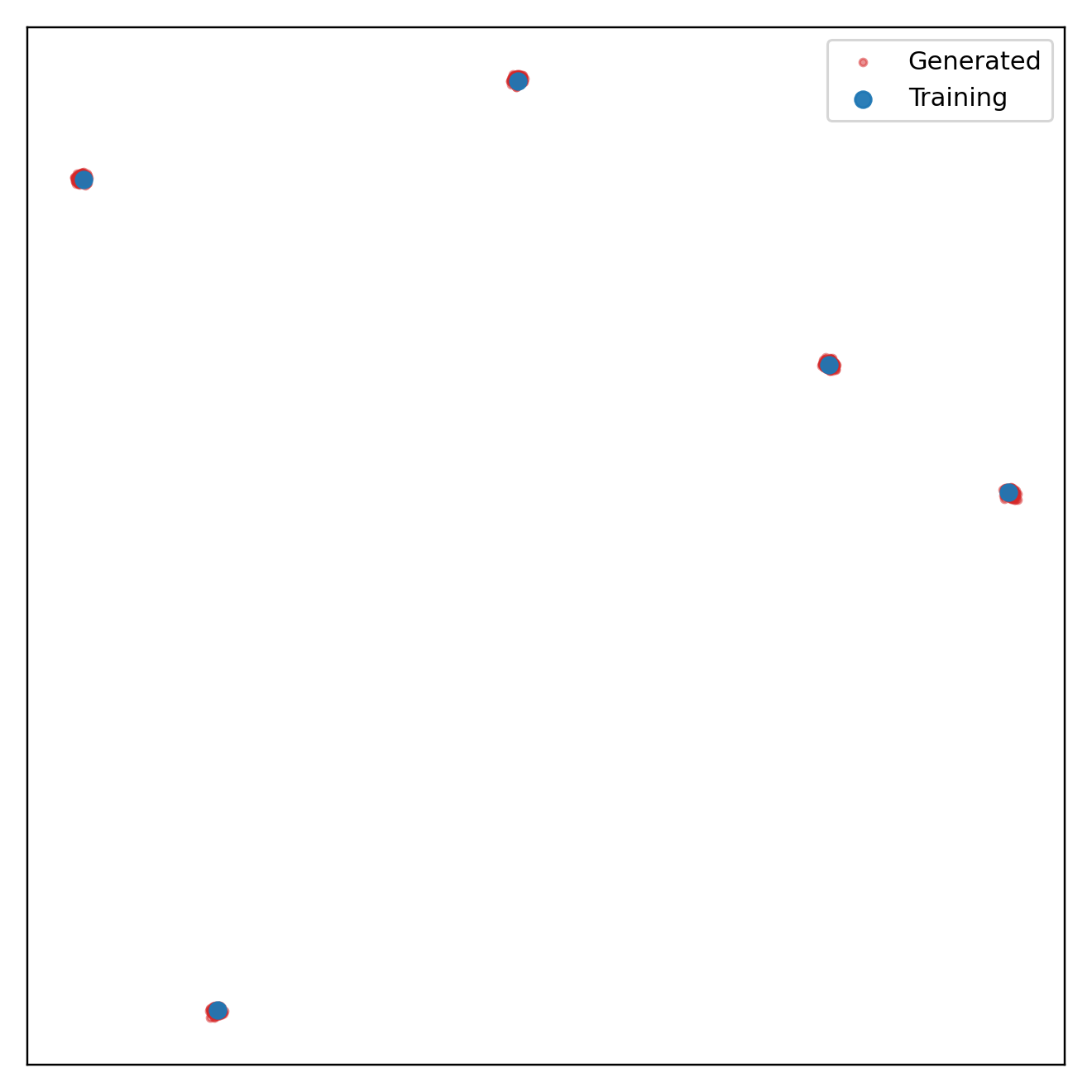}
        {\small (a) Deterministic}
    \end{minipage}
    \hfill
    \begin{minipage}[b]{0.24\textwidth}
        \centering
        \includegraphics[width=\textwidth]{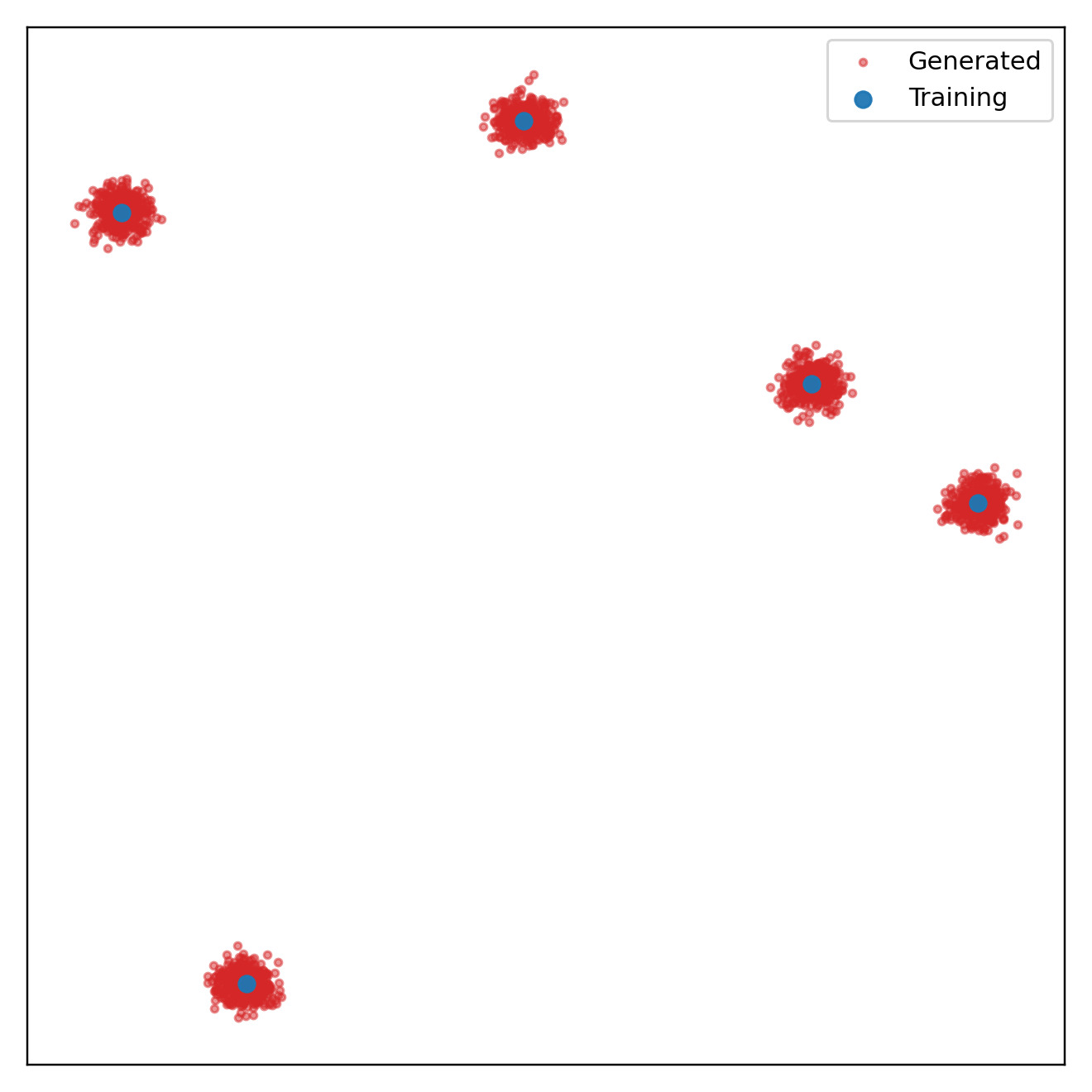}
        {\small (b) Stochastic}
    \end{minipage}
    \hfill
    \begin{minipage}[b]{0.24\textwidth}
        \centering
        \includegraphics[width=\textwidth]{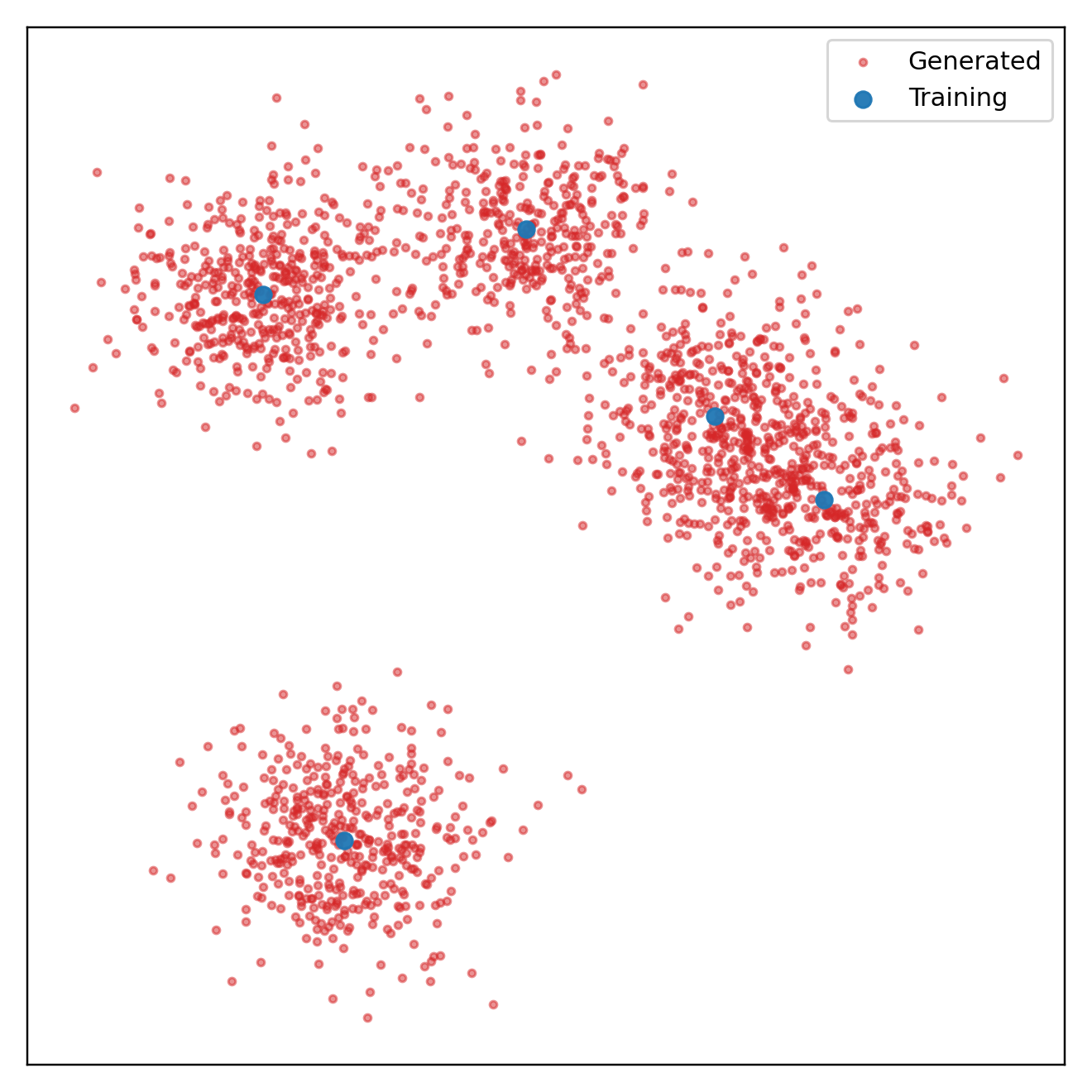}
        {\small (c) Changed \(\sigma\)}
    \end{minipage}
    \hfill
    \begin{minipage}[b]{0.24\textwidth}
        \centering
        \includegraphics[width=\textwidth]{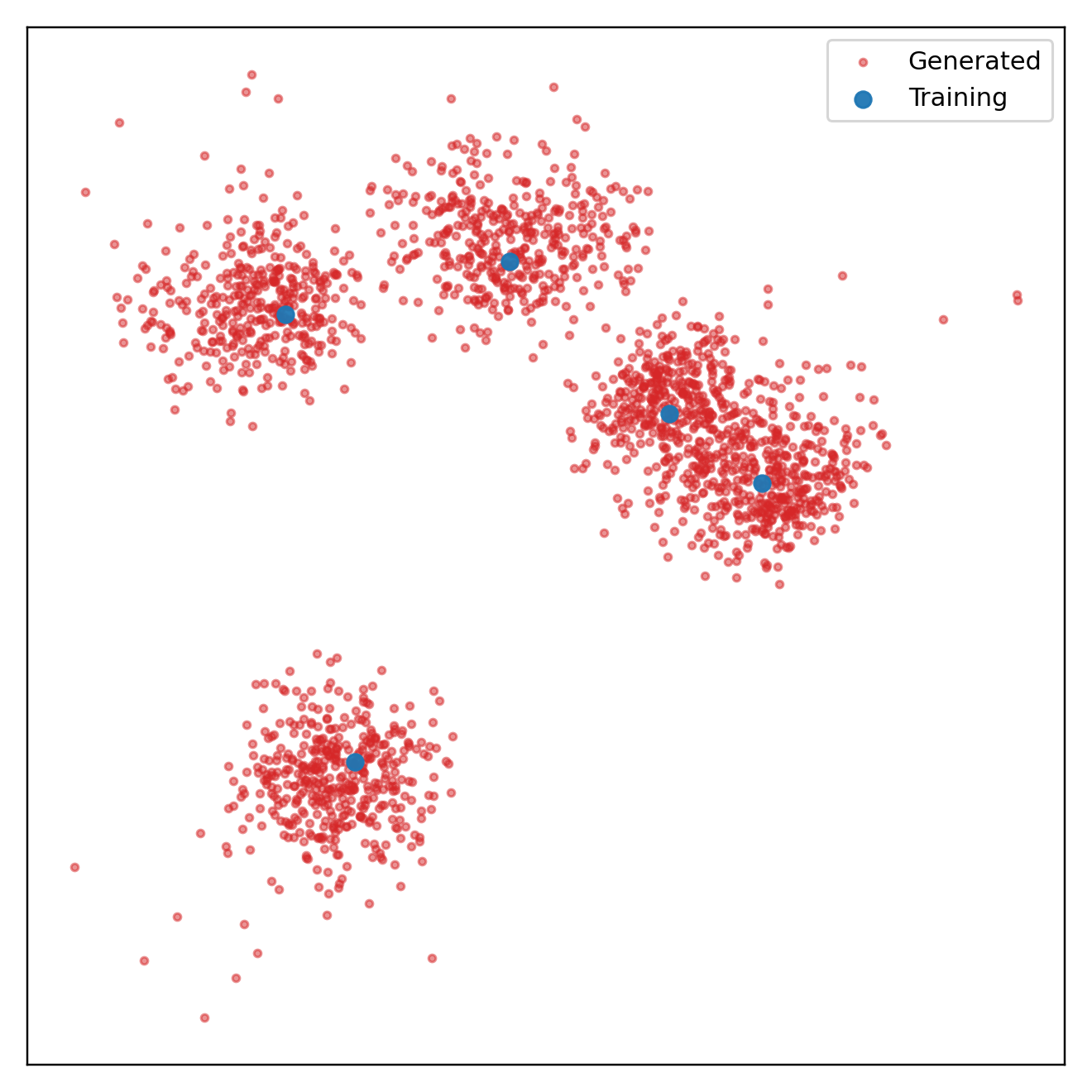}
        {\small (d) Changed \(h\)}
    \end{minipage}

    \caption{Oracle generation. Panel (a) shows deterministic oracle generation. Panel (b) shows stochastic oracle generation. Panels (c)--(d) vary the injected noise scale and Euler step size, respectively.}
    \label{fig:sim_section3}

    \vspace{0.8em}

    \begin{minipage}[b]{0.24\textwidth}
        \centering
        \includegraphics[width=\textwidth]{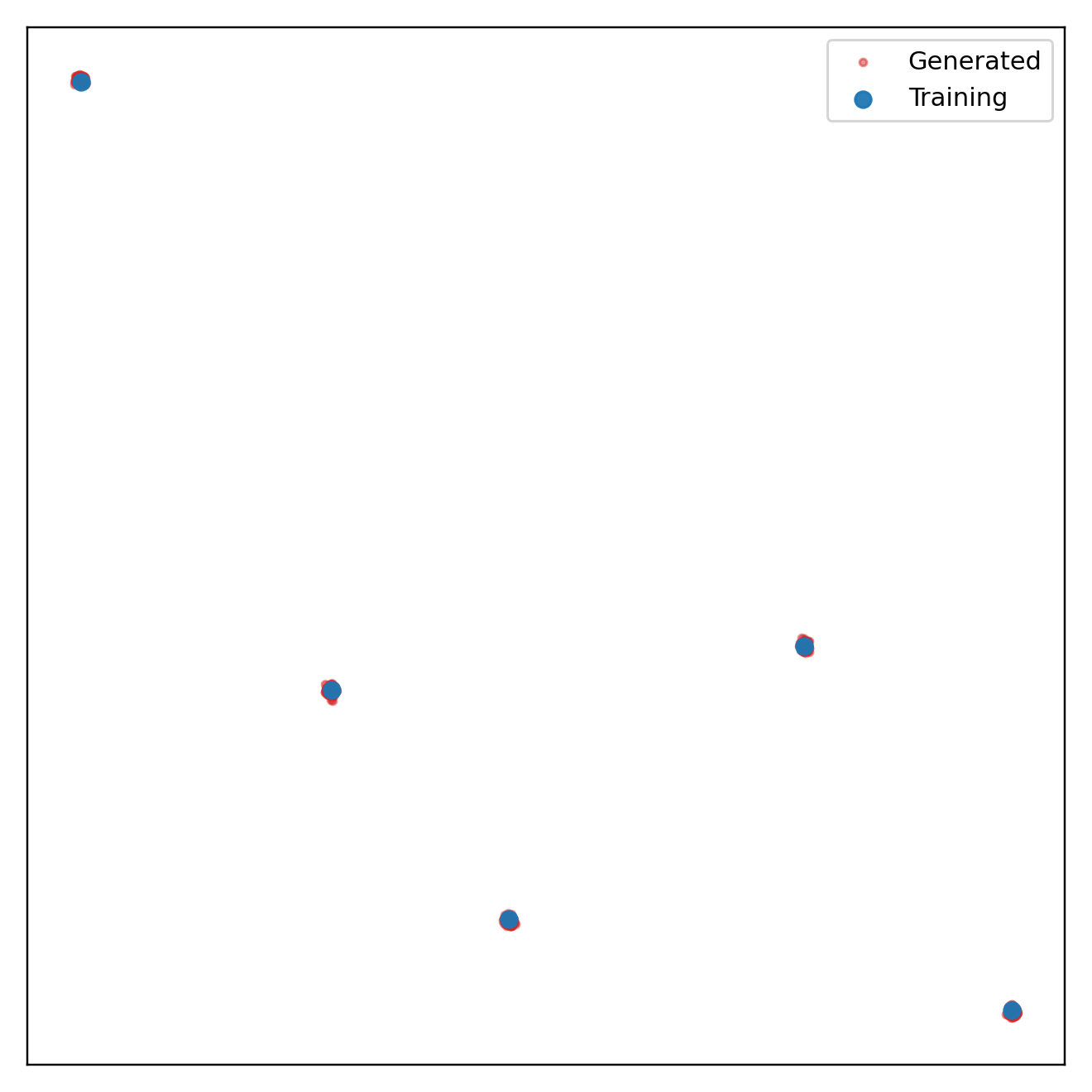}
        {\small (a) Det. overfit}
    \end{minipage}
    \hfill
    \begin{minipage}[b]{0.24\textwidth}
        \centering
        \includegraphics[width=\textwidth]{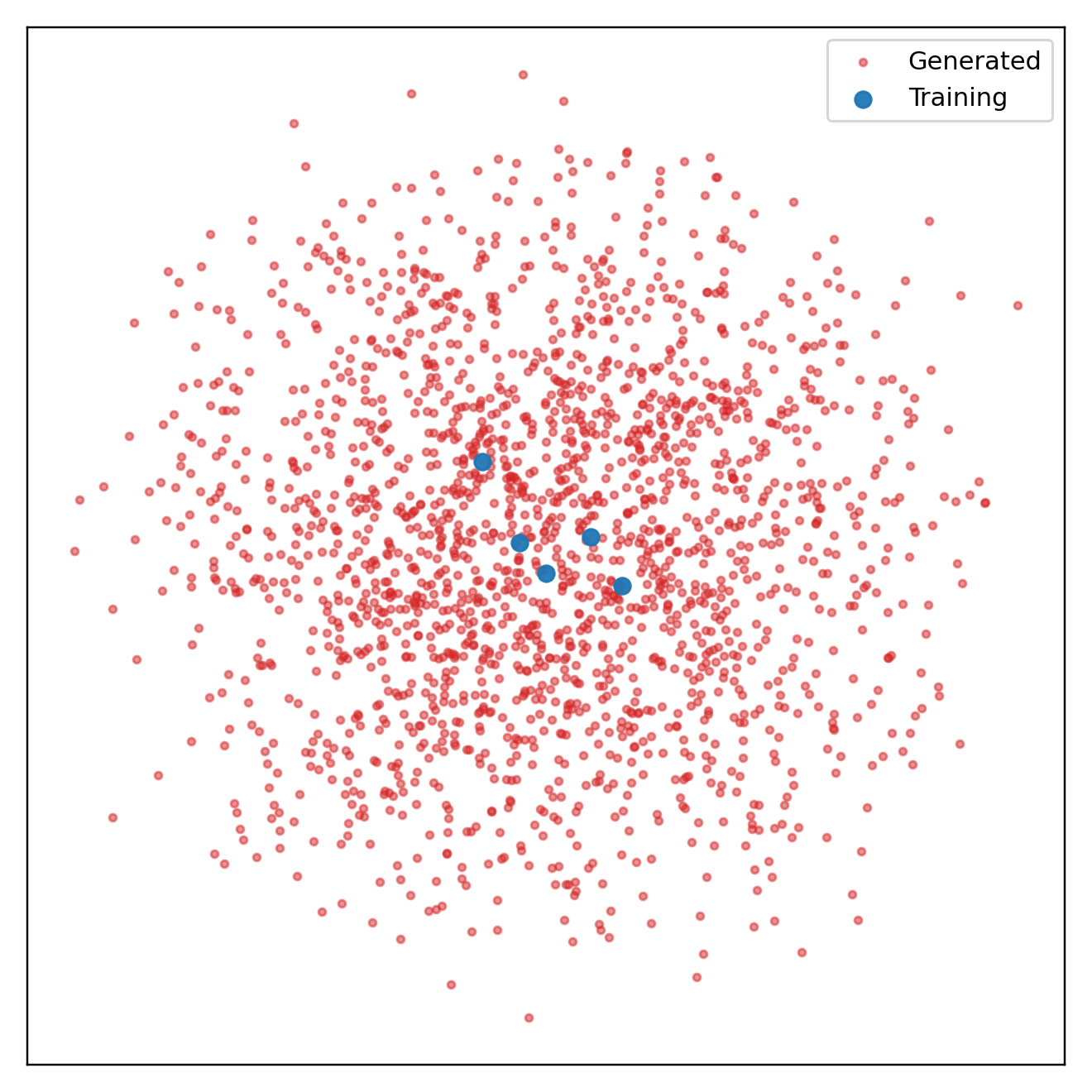}
        {\small (b) Det. underfit}
    \end{minipage}
    \hfill
    \begin{minipage}[b]{0.24\textwidth}
        \centering
        \includegraphics[width=\textwidth]{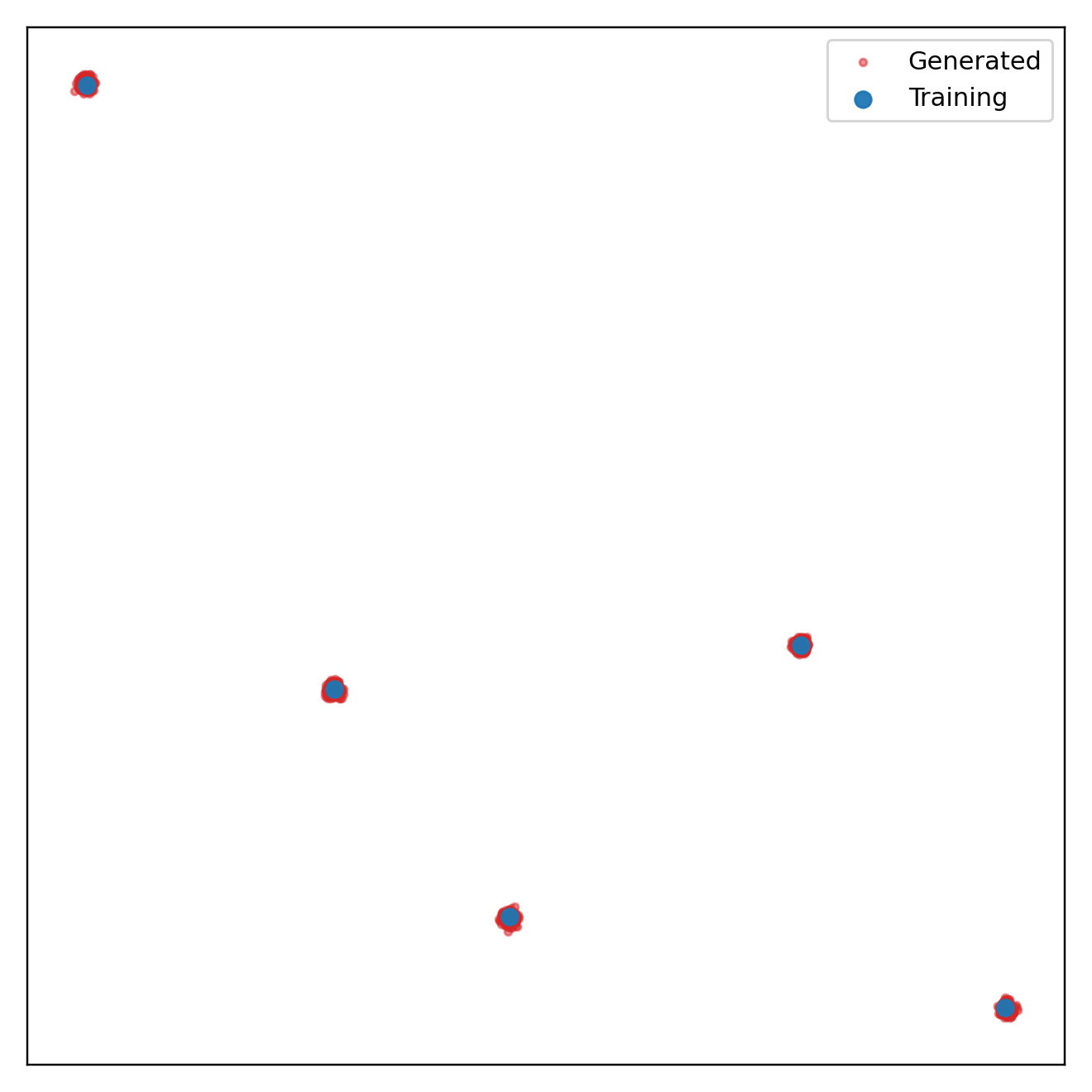}
        {\small (c) Sto. overfit}
    \end{minipage}
    \hfill
    \begin{minipage}[b]{0.24\textwidth}
        \centering
        \includegraphics[width=\textwidth]{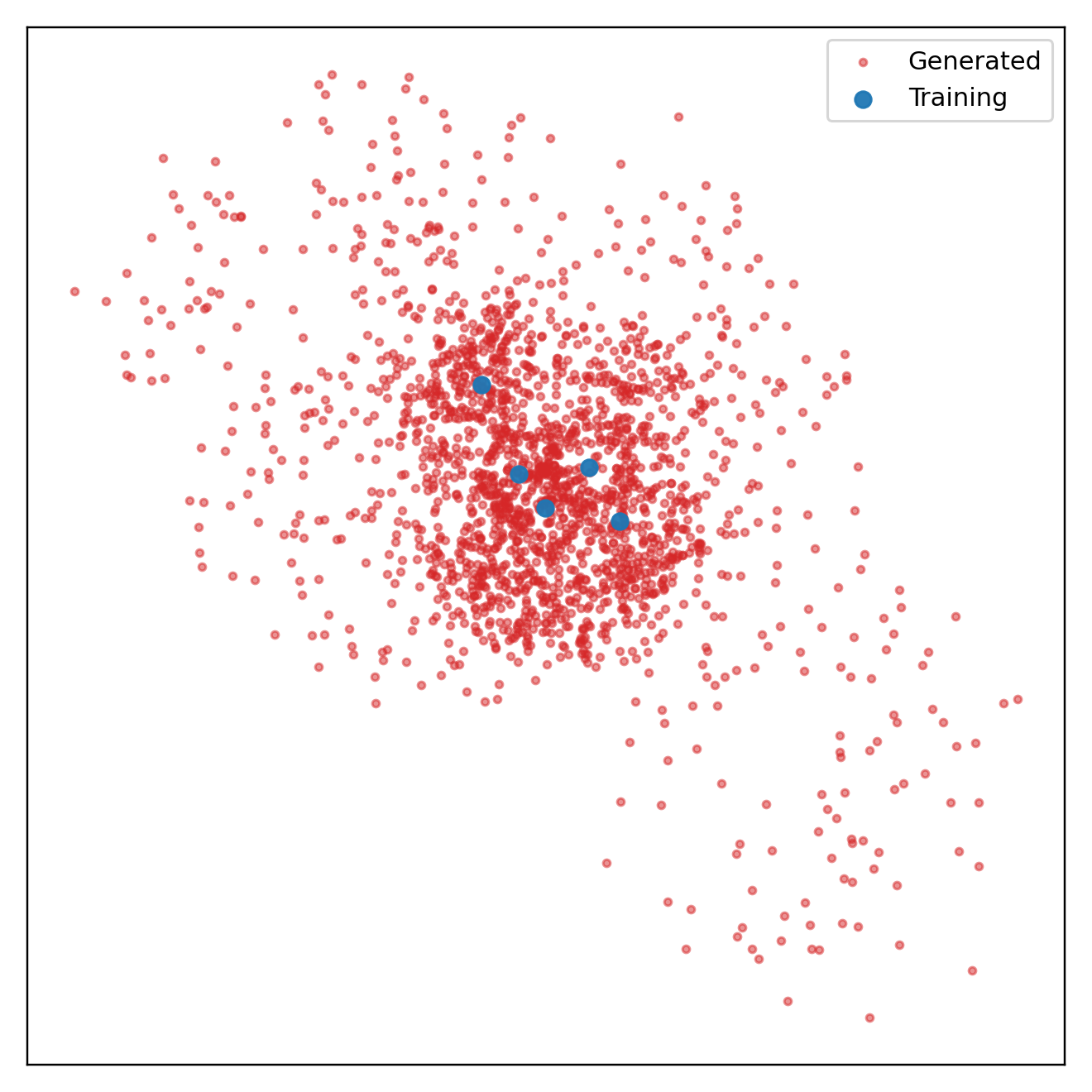}
        {\small (d) Sto. underfit}
    \end{minipage}

    \caption{Generation under estimation error. Panels (a)--(b) illustrate overfitting and underfitting in deterministic generation, respectively, while panels (c)--(d) illustrate the corresponding overfitting and underfitting behaviors in stochastic generation.}
    \label{fig:sim_section4}
\end{figure*}

\begin{theorem}
	\label{thm:sto_training_error_underfitting}
	Under Assumptions~\ref{ass:hp_sto_terminal}, \ref{ass:concentrability}, and~\ref{ass:sto_non_cancellation}, if
	\[
	\kappa_F\gamma\mathcal L_{\mathrm{sto,prop}}^h
	+
	\kappa_G\operatorname{Tr}(\Sigma_h)
	\ge
	\bigl(\tau+B_h^{\mathrm{sto}}(\delta)\bigr)^2,
	\]
	then
	\[
	\left(
	\mathbb E_{\delta}^{\mathrm{sto}}\|Z_0^h-X_{i_h}\|^2
	\right)^{1/2}
	\ge
	\tau.
	\]
\end{theorem}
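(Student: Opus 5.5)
We rely on the decomposition stated just before the theorem. On the event \(\mathcal A_h^{\mathrm{sto}}(\delta)\), unrolling the Euler--Maruyama recursion with \(\widehat b=b^*+\epsilon_b\) and \(\widehat s=s^*+\epsilon_s\) gives
\[
Z_0^h-X_{i_h}=-\mathcal E_{F,h}+G_h+R_h=-H_h+R_h,
\qquad \|R_h\|\le B_h^{\mathrm{sto}}(\delta).
\]
The residual bound is the argument of Theorem~\ref{thm:sto_gaussian_error}, with the perturbation \(-h\epsilon_F(Z_k^h,t_k)\) carried along linearly. The perturbation enters each step additively and is then multiplied by the same coefficients \(r_\ell\) as the oracle terms. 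Under Assumption~\ref{ass:hp_sto_terminal} the selected index stays fixed, so the oracle part of the unrolled sum is bounded exactly as before.

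The plan is a reverse triangle inequality in \(L^2(\mathbb P_\delta^{\mathrm{sto}})\). Minkowski's inequality gives
\[
\left(\mathbb E_{\delta}^{\mathrm{sto}}\|Z_0^h-X_{i_h}\|^2\right)^{1/2}
\ge
\left(\mathbb E_{\delta}^{\mathrm{sto}}\|H_h\|^2\right)^{1/2}
-\left(\mathbb E_{\delta}^{\mathrm{sto}}\|R_h\|^2\right)^{1/2}.
\]
Since \(\|R_h\|\le B_h^{\mathrm{sto}}(\delta)\) holds pointwise on the conditioning event, the second term is at most \(B_h^{\mathrm{sto}}(\delta)\).

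For the first term, apply the first inequality of Assumption~\ref{ass:sto_non_cancellation}, then the second inequality termwise (each weight \(h\Pi_{j-1}^2\) is nonnegative). This gives
\[
\mathbb E_{\delta}^{\mathrm{sto}}\|H_h\|^2
\ge
\kappa_F\gamma\, h\sum_{j=1}^K\Pi_{j-1}^2\mathbb E_{\rho_{t_j}}\|\epsilon_F(X,t_j)\|^2+\kappa_G\operatorname{Tr}(\Sigma_h)
=\kappa_F\gamma\mathcal L^h_{\mathrm{sto,prop}}+\kappa_G\operatorname{Tr}(\Sigma_h).
\]
By the hypothesis this is at least \((\tau+B_h^{\mathrm{sto}}(\delta))^2\). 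Taking square roots and subtracting the residual bound leaves \(\tau\), which is the claim.

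The main obstacle is justifying the decomposition with the \emph{oracle} residual bound when the trajectory is driven by the estimated drift. The residual terms \(B^{\mathrm{sto}}_{h,\mathrm{sm}}\) use the margins \(u_k\) and the non-switching index along the actual, perturbed trajectory. I would handle this by reading Assumptions~\ref{ass:hp_terminal_event} and~\ref{ass:hp_sto_terminal} as imposed on the iterates actually generated, with \(\mathcal A_h^{\mathrm{sto}}(\delta)\) being that event. Then the proof of Theorem~\ref{thm:sto_gaussian_error} goes through verbatim with the extra additive term, yielding \(R_h\) with the same bound.

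A secondary point is that \(\Pi_{j-1}\) here denotes the stochastic products \(\prod r_\ell\), matching the definition of \(G_h\). This makes \(\mathcal E_{F,h}\), \(G_h\) and \(\mathcal L^h_{\mathrm{sto,prop}}\) consistent. Assumption~\ref{ass:concentrability} is not needed beyond the hypotheses already absorbed into Assumption~\ref{ass:sto_non_cancellation}.
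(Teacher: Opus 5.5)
Your proposal is correct and follows essentially the same route as the paper: it writes $Z_0^h-X_{i_h}=-H_h+R_h$ with $\|R_h\|\le B_h^{\mathrm{sto}}(\delta)$ on the conditioning event, applies the reverse triangle inequality in the conditional $L^2$ norm, and lower-bounds $\mathbb E_{\delta}^{\mathrm{sto}}\|H_h\|^2$ by chaining the two inequalities of Assumption~\ref{ass:sto_non_cancellation} before invoking the hypothesis. Your side remarks are also accurate: the decomposition must be read along the actually generated (perturbed) trajectory, exactly as in the paper's proof of Theorem~\ref{thm:sto_training_error_memorization}, and Assumption~\ref{ass:concentrability} is never used.
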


Theorem~\ref{thm:sto_training_error_underfitting} shows that stochastic
underfitting can arise from two sources: a large effective estimation error and
excessive propagated sampling noise. In either case, if the perturbations do not
cancel, the final sample deviates from the oracle-selected training sample. This
means that the learned stochastic drift fails to reproduce the finite-sample
attractor, even after accounting for the Gaussian sampling perturbation.

\section{Experiments}
\label{sec:simulation_experiments}

We use controlled simulations to verify the theoretical mechanisms developed in
the main text. Additional experiments on real datasets are provided in the supplementary material.

We conduct controlled simulations on two-dimensional synthetic data. In each run, we draw \(n=5\) training samples
uniformly from \([-1,1]^2\), initialize the reverse sampler from
\(\mathcal N(0,I_2)\), and generate 2000 samples. The interpolation schedule is \(\alpha(t)=1-t,\ \beta(t)=t,\ \gamma(t)=\sqrt{t(1-t)},\ \zeta(t)=\sigma\sqrt{t(1-t)}\).
We implement deterministic and stochastic generation using the Euler-type updates in Eq.~\ref{eq:euler_template}.

We first consider oracle generation, where the velocity field and score function are
computed from Eq.~\ref{eq:oracle_b} and Eq.~\ref{eq:oracle_s}. To isolate the effect of stochastic
noise and discretization, we also conduct two single-factor ablations: {one varies the noise scale \(\sigma\) from 0.08 to 0.3, while the other varies the step size}
\(h\) from 0.005 to 0.08. The results are shown in Fig.~\ref{fig:sim_section3}. Panel (a) shows that deterministic oracle samples almost coincide with the training samples.
Panel (b) shows that stochastic generation produces Gaussian-like clouds around
the training samples. Panels (c) and (d) show that the spread of these clouds changes as
\(\sigma\) or \(h\) varies. These observations support the memorization
statement in Theorem~\ref{thm:hp_det_euler} and the stochastic decomposition in
Theorem~\ref{thm:sto_gaussian_error}.

We next study generation in the presence of controlled estimation errors. Specifically, we perturb the oracle velocity field and score function as
\[
\widehat b=b^*+\epsilon_b,\qquad \widehat s=s^*+\epsilon_s.
\]
The error fields are parameterized by scalar magnitudes \(\eta_b\) and
\(\eta_s\):
\[
\epsilon_b(z,t)=\eta_b\,\tilde{\epsilon}_b(z,t),\qquad
\epsilon_s(z,t)=\eta_s\,\tilde{\epsilon}_s(z,t),
\]
where the normalized vector fields satisfy
\[
\|\tilde\epsilon_b\|_\infty \le 1,\qquad
\|\tilde\epsilon_s\|_\infty \le 1.
\]
For deterministic generation, we use \(\eta_b=0.001\) in the overfitting regime
and \(\eta_b=400\) in the underfitting regime. For stochastic generation, we use
\((\eta_b,\eta_s)=(0.00008,0.00004)\) in the overfitting regime and
\((\eta_b,\eta_s)=(30,30)\) in the underfitting regime.

The results are shown in Fig.~\ref{fig:sim_section4}. In the overfitting
regimes, the generated samples remain close to the training samples,
illustrating finite-sample memorization. In contrast, in the underfitting
regimes, the generated samples spread over a much larger region and no longer
represent meaningful samples from the empirical target distribution. These
observations are consistent with the overfitting and underfitting
characterizations in
Theorems~\ref{thm:det_training_error_memorization} -- \ref{thm:sto_training_error_underfitting}.

\section{Conclusion and Future Work}

This paper develops a theoretical framework for analyzing memorization in stochastic interpolation models. Our results show that generated samples can be represented as selected training samples perturbed by a discretization-induced term, an estimation-error-induced term, and, in the stochastic case, Gaussian sampling noise. This characterization further leads to theoretical definitions of overfitting and underfitting in generative models. Controlled simulations provide empirical support for the theoretical findings. An interesting direction for future work is to investigate whether similar memorization mechanisms also arise in other generative modeling paradigms, such as VAEs, GANs, and autoregressive models. 

\bibliography{reference}

\clearpage
\onecolumn
\appendix

\setcounter{proposition}{0}
\setcounter{theorem}{0}
\setcounter{corollary}{0}
\setcounter{lemma}{0}
\setcounter{definition}{0}
\setcounter{assumption}{0}
\setcounter{remark}{0}

\renewcommand{\theproposition}{S\arabic{proposition}}
\renewcommand{\thetheorem}{S\arabic{theorem}}
\renewcommand{\thecorollary}{S\arabic{corollary}}
\renewcommand{\thelemma}{S\arabic{lemma}}
\renewcommand{\thedefinition}{S\arabic{definition}}
\renewcommand{\theassumption}{S\arabic{assumption}}
\renewcommand{\theremark}{S\arabic{remark}}

	\section{Impact of Generated Samples on Downstream Classification}
	\label{sec:classification_appendix}
	
	We evaluate how generated samples affect downstream classification on
	MNIST and FashionMNIST. This experiment is designed to provide empirical support
	for our theoretical characterization of finite-data stochastic generation. In
	particular, our theory suggests that, when the generator is trained on a limited
	empirical distribution, its generated samples tend to behave like training
	samples corrupted by additional noise rather than genuinely independent new
	samples from the population distribution. Therefore, we examine whether samples
	produced by a finite-data generator provide useful data augmentation, or whether
	they mainly act as noisy variants of the training samples.
	
	For a target sample size \(N\), we compare the following five training-set
	constructions. \textbf{Half Samples} uses \(N/2\) real samples. \textbf{Full
		Samples} uses \(N\) real samples. \textbf{Generated Samples} uses \(N/2\) real
	samples together with \(N/2\) generated samples from a generator trained on the
	same \(N/2\) real samples. \textbf{Full Generated Samples} uses generated
	samples from a generator trained on the full training set. \textbf{Noised
		Samples} uses \(N/2\) real samples together with \(N/2\) directly perturbed
	noisy variants. All values in Tables~\ref{tab:mnist_app}
	and~\ref{tab:fashionmnist_app} denote test accuracy (\%).
	
	For both MNIST and FashionMNIST, the generator is implemented as a standard DDPM
	with a U-Net backbone. It is trained using Adam with learning rate \(10^{-3}\),
	batch size \(128\), a linear noise schedule with \(T=500\), and \(25{,}000\)
	training steps. The downstream classifier is a four-layer fully connected
	network with ReLU activations. It is trained using Adam with learning rate
	\(10^{-4}\), cross-entropy loss, batch size \(100\), and up to \(10{,}000\)
	optimization steps. The noise intensity used to construct direct noisy variants
	is \(10^{-5}\).
	
	\begin{table}[ht]
		\centering
		\caption{Classification accuracy on MNIST.}
		\label{tab:mnist_app}
		\begin{tabular}{lccccc}
			\toprule
			Sample Size &
			\makecell{Half\\Samples} &
			\makecell{Noised\\Samples} &
			\makecell{Generated\\Samples} &
			\makecell{Full Generated\\Samples} &
			\makecell{Full\\Samples} \\
			\midrule
			100   & 68.84 & 75.30 & 68.61 & 73.74 & 77.57 \\
			200   & 77.57 & 83.00 & 78.33 & 81.80 & 81.70 \\
			400   & 81.70 & 84.96 & 80.43 & 84.88 & 85.97 \\
			1000  & 87.29 & 88.65 & 86.18 & 88.69 & 89.22 \\
			2000  & 89.22 & 90.44 & 89.78 & 90.48 & 91.30 \\
			4000  & 91.30 & 92.94 & 91.82 & 92.24 & 93.15 \\
			10000 & 93.15 & 94.23 & 93.36 & 93.42 & 94.48 \\
			\bottomrule
		\end{tabular}
	\end{table}
	
	\begin{table}[ht]
		\centering
		\caption{Classification accuracy on FashionMNIST.}
		\label{tab:fashionmnist_app}
		\begin{tabular}{lccccc}
			\toprule
			Sample Size &
			\makecell{Half\\Samples} &
			\makecell{Noised\\Samples} &
			\makecell{Generated\\Samples} &
			\makecell{Full Generated\\Samples} &
			\makecell{Full\\Samples} \\
			\midrule
			100   & 62.26 & 68.69 & 65.89 & 60.80 & 68.32 \\
			200   & 68.32 & 73.78 & 70.78 & 67.81 & 71.04 \\
			400   & 71.04 & 76.18 & 73.07 & 72.50 & 75.77 \\
			1000  & 77.64 & 79.60 & 78.55 & 76.90 & 80.14 \\
			2000  & 80.14 & 82.32 & 81.41 & 80.00 & 81.83 \\
			4000  & 81.83 & 84.21 & 82.65 & 81.50 & 82.64 \\
			10000 & 84.17 & 84.82 & 83.72 & 83.31 & 84.14 \\
			\bottomrule
		\end{tabular}
	\end{table}
	
	The results provide downstream evidence for noisy memorization. Generators
	trained on the full training set can provide useful augmentation, as shown by
	the competitive performance of the Full Generated Samples setting. However,
	generators trained on the same small subset often yield lower classification
	accuracy than simple noisy variants. This suggests that finite-data generation
	may stay close to the empirical training set and introduce less effective
	diversity than direct perturbations. These observations are consistent with our
	theoretical characterization of finite-data stochastic generation as training
	samples corrupted by sampling noise and accumulated approximation errors.

	\newpage
	
	\section{Theoretical Proofs for the Main Text}
	\label{sec:proofs}
	
	This section follows the order of the main theoretical results. We use the
	notation of the main text throughout.
	
	\subsection{Proof of Proposition 1: Oracle Velocity Field}
	
	\begin{proof}
		According to the definition of $b( z,t)$, when $\rho_1$ is the Gaussian distribution, the optimal velocity field can be derived as  
		
		\begin{equation*}
			\begin{split}
				b^*( z,t) &= \mathbb E \Big[\alpha'(t)Z_0+\beta'(t)Z_1+\gamma'(t)\eta|Z_t={z}\Big]\\
				&=\frac{1}{p(Z_t= z)}\int \int\Big[\alpha'(t)z_0+\beta'(t)z_1+\gamma'(t)\eta\Big]\rho_0(z_0)\rho_1(z_1)P_{\eta}\Big(\frac{{z}-\alpha(t)z_0-\beta(t)z_1}{\gamma(t)}\Big)dz_0dz_1\\
				&=\frac{1}{p(Z_t= z)}\int \int\Big\{\big[\alpha'(t)-\frac{\gamma'(t)}{\gamma(t)}\alpha(t)\big]z_0 + \big[\beta'(t)-\frac{\gamma'(t)}{\gamma(t)}\beta(t)\big]z_1 + 
				\frac{\gamma'(t)}{\gamma(t)}{z} \Big\}\\
				&\quad \quad \rho_0(z_0)\rho_1(z_1)P_{\eta}\Big(\frac{{z}-\alpha(t)z_0-\beta(t)z_1}{\gamma(t)}\Big)dz_0dz_1\\
				&=\left\{\int\int\rho_0(z_0)\rho_1(z_1)P_{\eta}\Big(\frac{{z}-\alpha(t)z_0-\beta(t)z_1)}{\gamma(t)}\Big)dz_0dz_1\right\}^{-1}\\
				&\quad\quad\int \int\Big\{\big[\alpha'(t)-\frac{\gamma'(t)}{\gamma(t)}\alpha(t)\big]z_0 + \big[\beta'(t)-\frac{\gamma'(t)}{\gamma(t)}\beta(t)\big]z_1 + 
				\frac{\gamma'(t)}{\gamma(t)}{z} \Big\}\\
				&\quad \quad \rho_0(z_0)\rho_1(z_1)P_{\eta}\Big(\frac{{z}-\alpha(t)z_0-\beta(t)z_1}{\gamma(t)}\Big)dz_0dz_1\\
				&=\left\{\int\int\rho_0(z_0)\exp\Big(-\frac{\Vert z_1-\frac{\beta(t)}{\gamma^2(t)+\beta^2(t)}( z-\alpha(t)z_0)\Vert^2}{2\frac{\gamma^2(t)}{\gamma^2(t)+\beta^2(t)}}\Big)\exp\Big(-\frac{\Vert z-\alpha(t)z_0)\Vert^2}{2(\gamma^2(t)+\beta^2(t))}\Big)dz_1dz_0\right\}^{-1}\\
				&\quad\quad\int \int\Big\{\big[\alpha'(t)-\frac{\gamma'(t)}{\gamma(t)}\alpha(t)\big]z_0 + \big[\beta'(t)-\frac{\gamma'(t)}{\gamma(t)}\beta(t)\big]z_1 + 
				\frac{\gamma'(t)}{\gamma(t)}{z} \Big\}\\
				&\quad \quad \rho_0(z_0)\exp\Big(-\frac{\Vert z_1-\frac{\beta(t)}{\gamma^2(t)+\beta^2(t)}( z-\alpha(t)z_0)\Vert^2}{2\frac{\gamma^2(t)}{\gamma^2(t)+\beta^2(t)}}\Big)\exp\Big(-\frac{\Vert z-\alpha(t)z_0)\Vert^2}{2(\gamma^2(t)+\beta^2(t))}\Big)dz_1dz_0\\
			\end{split}
		\end{equation*}
		
		Treating $z_1$ as a Gaussian random variable, we obtain
		\begin{equation*}
			\begin{split}
				&\rho_1(z_1)P_{\eta}\Big(\frac{ z-\alpha(t)z_0-\beta(t)z_1}{\gamma(t)}\Big)\\
				\propto&\exp\Big(-\frac{\Vert z_1\Vert^2}{2}\Big)\exp\Big(-\frac{\Vert  z-\alpha(t)z_0-\beta(t)z_1\Vert^2}{2\gamma^2(t)}\Big)\\
				\propto&\exp\Big(-\frac{\Vert z_1\Vert^2}{2}\Big)\exp\Big(-\frac{\beta^2(t)\Vert z_1\Vert^2-2\beta(t)z_1^T( z-\alpha(t)z_0)+\Vert z-\alpha(t)z_0\Vert^2}{2\gamma^2(t)}\Big)\\
				\propto&\exp\Big(-\frac{(\gamma^2(t)+\beta^2(t))\Vert z_1\Vert^2-2\beta(t)z_1^T( z-\alpha(t)z_0)+\Vert z-\alpha(t)z_0\Vert^2}{2\gamma^2(t)}\Big)\\
				\propto&\exp\Big(-\frac{\Vert z_1-\frac{\beta(t)}{\gamma^2(t)+\beta^2(t)}( z-\alpha(t)z_0)\Vert^2}{2\frac{\gamma^2(t)}{\gamma^2(t)+\beta^2(t)}}\Big)\exp\Big(-\frac{\Vert z-\alpha(t)z_0)\Vert^2}{2(\gamma^2(t)+\beta^2(t))}\Big).
			\end{split}
		\end{equation*}
		
		Therefore $b^*( z,t)$ has expression that
		\begin{equation*}
			\begin{split}
				b^*( z,t)&= \left\{\int \rho_0(z_0)\exp\Big(-\frac{\Vert{z}-\alpha(t)z_0\Vert^2}{2(\gamma^2(t)+\beta^2(t))}\Big)dz_0\right\}^{-1}\int \Big\{\big[\alpha'(t)-\frac{\gamma'(t)}{\gamma(t)}\alpha(t)\big]z_0+\big[\beta'(t)-\frac{\gamma'(t)}{\gamma(t)}\beta(t)\big]\\
				&\quad\quad\frac{\beta(t)({z}-{\alpha(t)}z_0)}{\gamma^2(t)+\beta^2(t)}+\frac{\gamma'(t)}{\gamma(t)}{z}\Big\}\rho_0(z_0)\exp\big(-\frac{\Vert {z}-\alpha(t)z_0\Vert^2}{2(\gamma^2(t)+\beta^2(t))}\big)dz_0\\
				&=\int\Big[\alpha'(t)-\frac{\gamma'(t)}{\gamma(t)}\alpha(t)-\frac{\beta'(t)\gamma(t)-\gamma'(t)\beta(t)}{\gamma^3(t)+\beta^2(t)\gamma(t)}\alpha(t)\beta(t)\Big]z_0+\Big[\frac{\gamma'(t)}{\gamma(t)}+\frac{\beta'(t)\gamma(t)-\gamma'(t)\beta(t)}{\gamma^3(t)+\beta^2(t)\gamma(t)}\beta(t)\Big]{z}\\
				&\quad\quad\frac{\exp\big(-\frac{\Vert {z}-\alpha(t)z_0\Vert^2}{2(\gamma^2(t)+\beta^2(t))}\big)}{\int \exp\big(-\frac{\Vert {z}-\alpha(t)z_0\Vert^2}{2(\gamma^2(t)+\beta^2(t))}\big)\rho_0(z_0)dz_0}\rho_0(z_0)dz_0.\\
			\end{split}
		\end{equation*}

		Considering that $\rho_0 = \sum_{i=1}^n\delta(X_i)$, the integral can be written as a summation over the samples $\{X_i\}_{i=1}^n$, thus $b^*({z},t)$ has the following expression
		\begin{equation*}
			\begin{split}
				b^*({z},t) &=\sum_{i=1}^n\Big\{\Big[\frac{\gamma(t)\gamma'(t)+\beta'(t)\beta(t)}{\gamma^2(t)+\beta^2(t)}\Big]{z}-\Big[\frac{\gamma(t)\gamma'(t)+\beta'(t)\beta(t)}{\gamma^2(t)+\beta^2(t)}\alpha(t)-\alpha'(t)\Big]X_i\Big\}\frac{\exp\big(-\frac{\Vert {z}-\alpha(t)X_i\Vert^2}{2(\gamma^2(t)+\beta^2(t))}\big)}{\sum_{j=1}^n\exp\big(-\frac{\Vert {z}-\alpha(t)X_j\Vert^2}{2(\gamma^2(t)+\beta^2(t))}\big)}\\
				&:=\sum_{i=1}^n\frac{1}{C_3(t)}\Big[C_1(t){z}-C_2(t)X_i\Big]\frac{\exp\big(-\frac{\Vert {z}-\alpha(t)X_i\Vert^2}{2C_3(t)}\big)}{\sum_{j=1}^n\exp\big(-\frac{\Vert {z}-\alpha(t)X_j\Vert^2}{2C_3(t)}\big)},
			\end{split}
		\end{equation*}
		
		where
		\begin{equation*}
			\begin{split}
				&C_1(t) = \gamma(t)\gamma'(t)+\beta'(t)\beta(t),\\
				&C_2(t)=\Big[\gamma(t)\gamma'(t)+\beta'(t)\beta(t)\Big]\alpha(t)-\Big[\gamma^2(t)+\beta^2(t)\Big]\alpha'(t),\\
				&C_3(t)=\gamma^2(t)+\beta^2(t).
			\end{split}
		\end{equation*}
	\end{proof}
	
	\subsection{Proof of Proposition 1: Oracle Score}
	
	We begin by presenting two related lemmas that establish some useful properties of $Z_t$ with respect to $Z_0$ and $Z_1$.
	\begin{lemma}[Tweedie's Formula]
		Let \( Y \) be a random variable following an exponential family distribution, such that \( Y = \theta + \varepsilon \), where
		\begin{itemize}
			\item \(\theta\) is an unknown parameter follows a prior distribution,
			\item \(\varepsilon\) is independent noise with \(\mathbb{E}[\varepsilon] = 0\) and \(\text{Var }(\varepsilon) = \sigma^2\).
		\end{itemize}
		Then, the posterior expectation of \(\theta\) given \( Y =  y \) is:
		\[
		\mathbb{E}\big[\theta \mid Y = y \big] = y  + \sigma^2 \frac{d}{dy} \log P_Y( y).
		\]
		
		\label{lemma:tweedie}
	\end{lemma}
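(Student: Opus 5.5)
The plan is to prove the formula through the score: $\frac{d}{dy}\log P_Y(y)$ is the Gaussian score, so the argument is the classical Robbins--Tweedie computation. I will first treat the Gaussian case $\varepsilon\sim\mathcal N(0,\sigma^2)$, which is the form in which the lemma is used for $Z_t$. There I write the marginal density of $Y$ as a mixture over the prior $\pi$ of $\theta$,
\[
P_Y(y)=\int \pi(\theta)\,\varphi_\sigma(y-\theta)\,d\theta,\qquad \varphi_\sigma(u)=\frac{1}{\sqrt{2\pi}\sigma}\exp\Bigl(-\frac{u^2}{2\sigma^2}\Bigr).
\]
The whole argument rests on the identity $\frac{d}{dy}\varphi_\sigma(y-\theta)=-\frac{y-\theta}{\sigma^2}\varphi_\sigma(y-\theta)$.

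The key steps, in order, are as follows. First, I differentiate $P_Y$ under the integral sign. This is justified by dominated convergence, because $|u|\varphi_\sigma(u)$ is bounded uniformly and $\pi$ is a probability measure. It gives
\[
P_Y'(y)=\frac{1}{\sigma^2}\int(\theta-y)\,\pi(\theta)\varphi_\sigma(y-\theta)\,d\theta.
\]
Second, I divide by $P_Y(y)>0$ and recognize the posterior density $\pi(\theta\mid y)=\pi(\theta)\varphi_\sigma(y-\theta)/P_Y(y)$ via Bayes' rule. This yields $\frac{d}{dy}\log P_Y(y)=\sigma^{-2}\bigl(\mathbb E[\theta\mid Y=y]-y\bigr)$. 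Rearranging gives $\mathbb E[\theta\mid Y=y]=y+\sigma^2\frac{d}{dy}\log P_Y(y)$. The multivariate version, with covariance $\sigma^2 I_d$ and gradient in place of derivative, follows by the identical computation componentwise.

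For the stated exponential-family generality, I would redo the same steps with $P(y\mid\theta)=h(y)\exp(\theta y-A(\theta))$ in natural parametrization. Differentiating the marginal gives $\mathbb E[\theta\mid y]=\frac{d}{dy}\log P_Y(y)-\frac{d}{dy}\log h(y)$. For the Gaussian base measure $h(y)\propto e^{-y^2/(2\sigma^2)}$, with the natural parameter rescaled by $\sigma^2$, this reduces exactly to the displayed formula. In the paper's notation, then, the lemma is to be read with $\varepsilon$ Gaussian.

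The main obstacle is the mild mismatch in the hypotheses. As literally stated (only mean zero and variance $\sigma^2$), the formula is false for non-Gaussian noise. So the real content is to fix the Gaussian (location) reading and verify the interchange of derivative and integral, which is routine. Beyond that the proof is a two-line computation.
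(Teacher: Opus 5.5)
The paper gives no proof of this lemma. It quotes the classical Robbins--Tweedie identity and only invokes it: once in the proof that $s(z,t)=-\gamma(t)^{-1}\mathbb E[\eta\mid Z_t=z]$, and once in the proof of the oracle-score formula. So there is no competing route to compare with.

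Your argument is the standard one, and it is correct. Differentiating $P_Y(y)=\int\varphi_\sigma(y-\theta)\,\pi(d\theta)$ under the integral is justified by the uniform bound on $\|u\|\varphi_\sigma(u)$ that you cite. Moreover $P_Y>0$ everywhere, and Bayes' rule turns the quotient into $\sigma^{-2}(\mathbb E[\theta\mid Y=y]-y)$.

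Your reservation about the hypotheses is also justified, and it can be made sharp. For the point-mass prior $\theta\equiv0$, the claimed identity reads $\frac{d}{dy}\log P_Y(y)=-y/\sigma^2$, which forces $\varepsilon\sim\mathcal N(0,\sigma^2)$. The lemma must therefore be read with Gaussian noise independent of $\theta$; in $d$ dimensions this means $\mathcal N(0,\sigma^2 I_d)$, with a gradient in place of the derivative.

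Both uses in the paper fit that reading:
\begin{itemize}
\item noise $\gamma(t)\eta$ with $\theta=\alpha(t)Z_0+\beta(t)Z_1$;
\item noise $\beta(t)Z_1+\gamma(t)\eta\sim\mathcal N(0,C_3(t)I_d)$ with $\theta=\alpha(t)Z_0$.
\end{itemize}
In the second case the prior is the discrete law $\frac1n\sum_i\delta_{\alpha(t)X_i}$, and the posterior weights are exactly the softmax weights $\omega_i(z,t)$ of Proposition 1. You should therefore write the mixture and the posterior against a general probability measure $\pi(d\theta)$, not a density $\pi(\theta)\,d\theta$. Your domination argument needs no change for this. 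The same bound also shows $\mathbb E[\|\theta\|\mid Y=y]<\infty$ without any moment assumption on the prior.

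The exponential-family digression (natural parameter in place of the mean, extra $-\frac{d}{dy}\log h(y)$ term) is accurate, but nothing in the paper needs it.
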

	
	\begin{lemma} The score function $s({z},t)$ can be expressed as 
		\begin{equation*}
			s({z},t)=-\frac{1}{\gamma(t)}\mathbb{E}\big[\eta|Z_t={z}\big].
		\end{equation*}
		\label{lemma:expression_score}
	\end{lemma}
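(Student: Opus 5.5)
We prove the identity \(s(z,t)=-\gamma(t)^{-1}\mathbb E[\eta\mid Z_t=z]\) by conditioning on \((Z_0,Z_1)\) and using that, given these endpoints, \(Z_t\) is Gaussian with a score that is linear in \(\eta\). Concretely, for \(t\in(0,1)\) with \(\gamma(t)>0\), write \(\mu=\alpha(t)Z_0+\beta(t)Z_1\). Then \(Z_t\mid(Z_0,Z_1)\sim\mathcal N(\mu,\gamma^2(t)I_d)\), so the marginal density is the mixture
\[
\rho_t(z)=\mathbb E_{Z_0,Z_1}\bigl[\varphi_{\gamma(t)}(z-\mu)\bigr],
\qquad
\varphi_{\gamma}(u)=(2\pi\gamma^2)^{-d/2}\exp\bigl(-\|u\|^2/(2\gamma^2)\bigr).
\]

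\textbf{Step 1 (differentiate the mixture).} Since \(\nabla_z\varphi_\gamma(z-\mu)=-\gamma^{-2}(z-\mu)\varphi_\gamma(z-\mu)\), we get
\[
\nabla_z\rho_t(z)=-\frac{1}{\gamma^2(t)}\,\mathbb E_{Z_0,Z_1}\bigl[(z-\mu)\varphi_{\gamma(t)}(z-\mu)\bigr].
\]
Differentiation under the expectation is justified by dominated convergence. The Gaussian factor and its gradient are bounded uniformly in \(\mu\), and \(\rho_1\) and \(\rho_0\) have enough moments; in the empirical case \(\rho_0\) is a finite sum, which makes this immediate.

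\textbf{Step 2 (divide by \(\rho_t\) and read off a posterior mean).} Dividing by \(\rho_t(z)>0\) gives
\[
\nabla_z\log\rho_t(z)=-\frac{1}{\gamma^2(t)}\,\mathbb E\bigl[Z_t-\mu\mid Z_t=z\bigr].
\]
The key observation is that \(\varphi_{\gamma(t)}(z-\mu)/\rho_t(z)\) is exactly the Bayes weight for the conditional law of \((Z_0,Z_1)\) given \(Z_t=z\). Because \(Z_t-\mu=\gamma(t)\eta\), the posterior mean equals \(\gamma(t)\,\mathbb E[\eta\mid Z_t=z]\), and therefore
\[
s(z,t)=-\frac{1}{\gamma(t)}\,\mathbb E[\eta\mid Z_t=z].
\]
Equivalently, this is Lemma~\ref{lemma:tweedie} applied with \(\theta=\mu\) and \(\varepsilon=\gamma(t)\eta\): it gives \(\mathbb E[\mu\mid Z_t=z]=z+\gamma^2(t)\,s(z,t)\), and substituting \(\mu=z-\gamma(t)\eta\) rearranges to the same identity.

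\textbf{Main obstacle.} The delicate point is justifying the conditional-expectation identification, i.e.\ that the normalized weight above defines a regular conditional distribution. This holds because \((Z_0,Z_1,Z_t)\) has a joint density with respect to \(\rho_0\otimes\rho_1\otimes\mathrm{Leb}\), so we can handle it by Bayes' rule written at the level of densities. We also restrict to \(t\in(0,1)\), where \(\gamma(t)>0\). At the endpoints \(\gamma(t)=0\), so the right-hand side is undefined; there the statement is read as holding wherever \(\gamma(t)>0\).
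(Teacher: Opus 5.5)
Your proof is correct and is essentially the paper's argument: the paper applies Tweedie's formula with $\theta=\alpha(t)Z_0+\beta(t)Z_1$ and noise $\gamma(t)\eta$ to get $\mathbb E[\alpha(t)Z_0+\beta(t)Z_1\mid Z_t=z]=z+\gamma^2(t)s(z,t)$, then subtracts this from the conditioned identity $Z_t=\mathbb E[\alpha(t)Z_0+\beta(t)Z_1\mid Z_t]+\gamma(t)\mathbb E[\eta\mid Z_t]$, which is exactly your closing remark. Your Steps 1--2 prove that instance of Tweedie's formula inline by differentiating the Gaussian mixture, and this makes explicit the Gaussianity of the noise, which the paper's stated version of the lemma (mean zero, variance $\sigma^2$) leaves implicit.
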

	\begin{proof}
		Recalling that the definition of $Z_t$ is
		\begin{equation*}
			Z_t = \alpha(t)Z_0 +\beta(t)Z_1 +\gamma(t)\eta.
		\end{equation*}
		Taking conditional expectation with respect to $Z_t$ on both sides, we obtain
		\begin{equation*}
			Z_t = \mathbb E\big[\alpha(t)Z_0+\beta(t)Z_1|Z_t\big] + \gamma(t)\mathbb E\big[\eta|Z_t\big].
		\end{equation*}
		
		Applying Lemma~\ref{lemma:tweedie}, it follows that
		\begin{equation*}
			\mathbb E\big[\alpha(t)Z_0 +\beta(t)Z_1|Z_t={z}\big] = {z} + \gamma^2(t)s({z},t).
		\end{equation*}
		
		Combining the above equation, we have
		\begin{equation}
			s({z},t)=-\frac{1}{\gamma(t)}\mathbb E\big[\eta|Z_t={z}\big].
			\label{s_expression3}
		\end{equation}
		
	\end{proof}

	Next we present the proof of Proposition~1.
	
	\begin{proof}
		The velocity field $b({z},t)$ can be further expressed as
		\begin{equation}
			\begin{split}
				b({z},t)&=\mathbb E\big[\alpha'(t)Z_0+\beta'(t)Z_1+\gamma'(t)\eta|Z_t={z}\big]\\
				&=\alpha'(t)\mathbb E\big[Z_0|Z_t={z}\big] +\beta'(t)\mathbb  E\big[Z_1|Z_t={z}\big]+\gamma'(t)\mathbb E\big[\eta|Z_t={z}\big].
			\end{split}
			\label{s_expression1}
		\end{equation}
		
		Applying Lemma~\ref{lemma:tweedie}, it follows that
		\begin{equation}
			\alpha(t)\mathbb E\big[Z_0|Z_t={z}\big]={z}+(\beta^2(t)+\gamma^2(t))s({z},t).
			\label{s_expression2}
		\end{equation}
		
		In addition, by Lemma~\ref{lemma:expression_score}, we have
		\begin{equation}
			\begin{split}
				Z_t &= \mathbb E\big[\alpha(t)Z_0 +\beta(t)Z_1|Z_t\big] +\gamma(t)\mathbb E\big[\eta|Z_t\big]\\
				&=\alpha(t)\mathbb E\big[Z_0|Z_t\big] +\beta(t)\mathbb E\big[Z_1|Z_t\big] +\gamma(t)\mathbb E\big[\eta|Z_t\big].
			\end{split}
			\label{s_expression4}
		\end{equation}
		
		Combining Eq.~\ref{s_expression3}, Eq.~\ref{s_expression1}, Eq.~\ref{s_expression2}, and Eq.~\ref{s_expression4}, we arrive at the closed-form expression for the score function
		\begin{equation*}
			s({z},t) = \frac{\alpha(t)}{B(t)}b({z},t)-\frac{\alpha'(t)}{B(t)} z,
		\end{equation*}
		where
		\begin{equation*}
			B(t) = \beta(t)\big[\alpha'(t)\beta(t)-\alpha(t)\beta'(t)\big]+\gamma(t)\big[\gamma(t)\alpha'(t)-\gamma'(t)\alpha(t)\big].
		\end{equation*}
		Thus we have 
		\begin{equation*}
			s^*({z},t) = \frac{\alpha(t)}{B(t)}b^*({z},t)-\frac{\alpha'(t)}{B(t)}z.
		\end{equation*}
		
	\end{proof}
	
	\subsection{Proof of Theorem 1: Deterministic Generation}
	
	\begin{proof}
		Let us define $A(t)=\exp\big(-\int_t^1\frac{C_1(u)}{C_3(u)}du\big)$, $X=[X_1,\dots X_n]$ and introduce the weight function
		\begin{equation*}
			\omega({z},t)= \text{softmax}\Big[\frac{\Vert {z}-\alpha(t)X_1\Vert^2}{2(\gamma^2(t)+\beta^2(t))},\cdots,\frac{\Vert {z}-\alpha(t)X_n\Vert^2}{2(\gamma^2(t)+\beta^2(t))}\Big]^T.
		\end{equation*}
		Let $Z_t = A(t)\kappa(t)$, where $\kappa(t)$ is a time-dependent auxiliary function. Then, by differentiating $\kappa(t)$, we have
		\begin{equation*}
			\begin{split}
				\frac{d\kappa(t)}{dt}&=\frac{A(t)\frac{dZ_t(x)}{dt}-Z_tA'(t)}{A^2(t)}\\
				&=\frac{A(t)\frac{C_1(t)}{C_3(t)}Z_t-A(t)\frac{C_2(t)}{C_3(t)}\cdot X\cdot\omega\big(Z_t,t\big)-\frac{C_1(t)}{C_3(t)}A_tZ_t}{A^2(t)}\\
				&=-\frac{C_2(t)}{C_3(t)A(t)}\cdot X\cdot \omega\big(Z_t,t\big).
			\end{split}
		\end{equation*}
		Let $Z_t(x)$ denote the random variable at time $t$ of the process initialized at point $x$. Then, $Z_t(x)$ can be expressed as follows
		\begin{equation*}
			\begin{split}
				Z_t(x) &= A(t)\kappa(t)\\
				&=\exp\Big(-\int_{t}^1\frac{C_1(u)}{C_3(u)}du\Big)\Big(x+\int_t^1\frac{C_2(u)}{C_3(u)A(u)}\cdot X\cdot \omega\big(Z_u(x),u\big)du\Big).
			\end{split}
		\end{equation*}
		
		Therefore, at $t = 0$, there exists $i\in\{1\cdots n\}$ that the generation result $Z_0(x)$ takes the following value
		\begin{equation}
			\begin{split}
				Z_0(x) &= \lim_{t\rightarrow0}\exp\Big(-\int_t^1\frac{C_1(u)}{C_3(u)}du\Big)\Big(x+\int_{t}^1\frac{C_2(u)}{C_3(u)A(u)}\cdot X\cdot \omega\big(Z_u(x),u\big)du\Big)\\
				&=\lim_{t\rightarrow0}\frac{\int_t^1\frac{C_2(u)}{C_3(u)A(u)}\cdot X\cdot \omega\big(Z_u(x),u\big)du}{\frac{1}{A(t)}}\\
				&=\lim_{t\rightarrow0}\frac{\frac{C_2(t)}{C_3(t)A(t)}\cdot X\cdot \omega\big(Z_t(x),t\big)}{\frac{A'(t)}{A^2(t)}}\\
				&=\lim_{t\rightarrow0}\frac{C_2(t)}{C_1(t)} \cdot X \cdot \omega\big(Z_t(x),t\big)\\
				&=X\cdot\lim_{t\rightarrow0}  \omega\big(Z_t(x),t\big)\\
				&=X_i.
			\end{split}
			\label{eq:oracle_deter_gen}
		\end{equation}
		The sixth equation holds because as $t$ approaches $0$, due to the nature of the softmax function, one element in $\omega$ tends to $1$, while the other elements tend to $0$. Therefore, there exists $i\in\{1,\dots,n\}$ such that the final result will be one of the $X_i$.
		
	\end{proof}
	
	\subsection{Proof of Theorem 1: Stochastic Generation}
	
	\begin{proof}
		The proof has two parts. First, we show that the oracle stochastic generation
		process has the same time marginals as the stochastic interpolation. Second, we
		compute this marginal explicitly under the empirical training distribution.
		
		Let \(q_t\) denote the density of the oracle stochastic generation process at
		time \(t\). The interpolation density \(\rho_t\) associated with the oracle
		velocity field satisfies the continuity equation
		\begin{equation}
			\partial_t\rho_t
			=
			-\nabla\cdot\bigl(b^*(\cdot,t)\rho_t\bigr).
			\label{eq:app_continuity}
		\end{equation}
		This follows from the definition of \(b^*\) as the conditional mean velocity of
		the interpolation path. Moreover, by definition of the oracle score,
		\[
		s^*(z,t)=\nabla_z\log\rho_t(z),
		\]
		and therefore
		\begin{equation}
			s^*(z,t)\rho_t(z)=\nabla\rho_t(z).
			\label{eq:app_score_identity}
		\end{equation}
		
		The oracle stochastic generation dynamics uses the drift \(b^*-\zeta s^*\) and
		diffusion coefficient \(\sqrt{2\zeta}\). Its Fokker--Planck equation, written
		in the same time orientation as the interpolation parameter, is
		\begin{equation}
			\begin{split}
				\partial_t q_t
				=
				&-\nabla\cdot\bigl(b^*(\cdot,t)q_t\bigr)
				+
				\zeta(t)\nabla\cdot\bigl(s^*(\cdot,t)q_t\bigr)
				-
				\zeta(t)\Delta q_t .
			\end{split}
			\label{eq:app_fp}
		\end{equation}
		We now verify that \(q_t=\rho_t\) solves this equation. Substituting
		\(q_t=\rho_t\) into the right-hand side of
		Eq. \ref{eq:app_fp} gives
		\[
		-\nabla\cdot\bigl(b^*(\cdot,t)\rho_t\bigr)
		+
		\zeta(t)\nabla\cdot\bigl(s^*(\cdot,t)\rho_t\bigr)
		-
		\zeta(t)\Delta\rho_t .
		\]
		Using Eq. \ref{eq:app_score_identity},
		\[
		\nabla\cdot\bigl(s^*(\cdot,t)\rho_t\bigr)
		=
		\nabla\cdot(\nabla\rho_t)
		=
		\Delta\rho_t.
		\]
		Hence the two \(\zeta(t)\)-terms cancel exactly, leaving
		\[
		-\nabla\cdot\bigl(b^*(\cdot,t)\rho_t\bigr),
		\]
		which is \(\partial_t\rho_t\) by Eq. (1). Thus
		\(\rho_t\) satisfies the same Fokker--Planck equation as the oracle stochastic
		generation marginal.
		
		The generation process is initialized at \(t=1\) with \(Z_1\sim\rho_1\), and
		the stochastic interpolation also satisfies \(\rho_t|_{t=1}=\rho_1\). Assuming
		the standard uniqueness of solutions to the Fokker--Planck equation for this
		smooth positive density on every compact subinterval of \((0,1)\), we conclude
		that
		\[
		q_t=\rho_t,\qquad 0<t<1.
		\]
		In particular,
		\[
		Z_\varepsilon\sim\rho_\varepsilon
		\]
		for every \(\varepsilon\in(0,1)\).
		
		It remains to compute \(\rho_\varepsilon\). Since
		\[
		\rho_0=\frac1n\sum_{i=1}^n\delta_{X_i},
		\]
		we may write \(Z_0=X_i\), where
		\[
		i\sim\mathrm{Unif}\{1,\dots,n\}.
		\]
		The stochastic interpolation at time \(\varepsilon\) is
		\[
		Z_\varepsilon
		=
		\alpha(\varepsilon)X_i
		+
		\beta(\varepsilon)Z_1
		+
		\gamma(\varepsilon)\eta,
		\]
		where \(Z_1\sim\mathcal N(0,I_d)\), \(\eta\sim\mathcal N(0,I_d)\), and
		\((i,Z_1,\eta)\) are mutually independent. Conditional on the selected index \(i\), the first
		term is deterministic and the remaining two terms are independent Gaussian
		vectors. Therefore
		\[
		\beta(\varepsilon)Z_1+\gamma(\varepsilon)\eta
		\sim
		\mathcal N\bigl(0,
		[\beta^2(\varepsilon)+\gamma^2(\varepsilon)]I_d\bigr)
		=
		\mathcal N(0,C_3(\varepsilon)I_d).
		\]
		Consequently,
		\[
		Z_\varepsilon\mid i
		\sim
		\mathcal N\bigl(
		\alpha(\varepsilon)X_i,
		C_3(\varepsilon)I_d
		\bigr).
		\]
		Averaging over the uniform index \(i\) yields
		\[
		\rho_\varepsilon
		=
		\frac1n\sum_{i=1}^n
		\mathcal N\bigl(
		\alpha(\varepsilon)X_i,
		C_3(\varepsilon)I_d
		\bigr).
		\]
		Equivalently, if
		\[
		\xi_\varepsilon
		:=
		\beta(\varepsilon)Z_1+\gamma(\varepsilon)\eta,
		\]
		then \(\xi_\varepsilon\sim\mathcal N(0,C_3(\varepsilon)I_d)\) and
		\(\xi_\varepsilon\) is independent of \(i\). Hence
		\[
		Z_\varepsilon
		\stackrel{d}{=}
		\alpha(\varepsilon)X_i+\xi_\varepsilon,
		\]
		which proves the claimed finite-time noisy memorization representation.
	\end{proof}

	\subsection{Proof of Theorem 2}
	We first use a lemma to control the distance between the average sample and each individual sample.
	\begin{lemma}
		For an arbitrary point \(z\), define
		\[
		d_{i,k}^h(z):=\|z-\alpha(t_k)X_i\|^2,
		\qquad
		w_{i,k}^h(z):=
		\frac{
			\exp\{-d_{i,k}^h(z)/(2C_3(t_k))\}
		}{
			\sum_{\ell=1}^n
			\exp\{-d_{\ell,k}^h(z)/(2C_3(t_k))\}
		},
		\]
		and
		\[
		\bar X_k^h(z):=\sum_{i=1}^n w_{i,k}^h(z)X_i.
		\]
		When \(z=Z_k^h\), these coincide with the main-text definitions of
		\(w_{i,k}^h\) and \(\bar X_k^h\). For any \(u_k>0\), if
		\(m_k(z)\ge u_k\), then
		\[
		\|\bar X_k^h(z)-X_{i_k(z)}\|
		\le
		D_X(n-1)
		\exp\left(
		-\frac{u_k}{2C_3(t_k)}
		\right).
		\]
	\end{lemma}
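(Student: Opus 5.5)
Write $i^\star:=i_k(z)$. Since the weights sum to one, the difference between the weighted average and the selected sample can be expanded as
\[
\bar X_k^h(z)-X_{i^\star}=\sum_{i=1}^n w_{i,k}^h(z)\,(X_i-X_{i^\star})=\sum_{j\neq i^\star} w_{j,k}^h(z)\,(X_j-X_{i^\star}).
\]
In this form the $i^\star$ term drops out. Applying the triangle inequality and the bound $\|X_j-X_{i^\star}\|\le D_X$ gives
\[
\|\bar X_k^h(z)-X_{i^\star}\|\le D_X\sum_{j\neq i^\star} w_{j,k}^h(z).
\]
This reduces the lemma to bounding the total softmax mass placed on the non-selected indices.

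For each $j\neq i^\star$ we bound the weight by dropping every term in the denominator except the one for $i^\star$:
\[
w_{j,k}^h(z)\le\frac{\exp\{-d_{j,k}^h(z)/(2C_3(t_k))\}}{\exp\{-d_{i^\star,k}^h(z)/(2C_3(t_k))\}}=\exp\Bigl\{-\frac{d_{j,k}^h(z)-d_{i^\star,k}^h(z)}{2C_3(t_k)}\Bigr\}.
\]
By the definition of the squared margin, $d_{j,k}^h(z)-d_{i^\star,k}^h(z)\ge m_k(z)\ge u_k$ for every $j\neq i^\star$. Here we use $C_3(t_k)>0$, which holds because $\beta(t_k)>0$ for $t_k\in(0,1]$. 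It follows that each such weight is at most $\exp(-u_k/(2C_3(t_k)))$.

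There are exactly $n-1$ indices $j\neq i^\star$. Summing the per-weight bound over them and multiplying by $D_X$ gives the claimed inequality
\[
\|\bar X_k^h(z)-X_{i_k(z)}\|\le D_X(n-1)\exp\Bigl(-\frac{u_k}{2C_3(t_k)}\Bigr).
\]
The argument is elementary throughout. The only points needing care are, first, to subtract $X_{i^\star}$ inside the sum using $\sum_i w_{i,k}^h=1$, so that the unselected terms carry the full deviation. Second, to use that the minimum in $m_k$ runs over all $j\neq i^\star$, so the margin bound applies uniformly to every non-selected weight.
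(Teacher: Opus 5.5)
Your proof is correct and matches the paper's argument: subtract $X_{i_k(z)}$ using $\sum_i w_{i,k}^h(z)=1$, apply the triangle inequality with $D_X$, and bound each non-selected weight by $\exp(-u_k/(2C_3(t_k)))$ via the margin. The only cosmetic difference is that the paper bounds the ratio $w_{j,k}^h(z)/w_{i,k}^h(z)$ and then uses $w_{i,k}^h(z)\le 1$, while you drop all but the selected term from the denominator, which gives the same inequality.
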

	
	\begin{proof}
		Let \(i=i_k(z)\). If \(m_k(z)\ge u_k\), then for each \(j\ne i\),
		\[
		\frac{w_{j,k}^h(z)}{w_{i,k}^h(z)}
		=
		\exp\left(
		-\frac{\|z-\alpha(t_k)X_j\|^2-\|z-\alpha(t_k)X_i\|^2}
		{2C_3(t_k)}
		\right)
		\le
		\exp\left(-\frac{u_k}{2C_3(t_k)}\right).
		\]
		Since \(w_{i,k}^h(z)\le 1\), it follows that
		\[
		\sum_{j\ne i}w_{j,k}^h(z)
		\le
		(n-1)\exp\left(-\frac{u_k}{2C_3(t_k)}\right).
		\]
		Since \(\sum_j w_{j,k}^h(z)(X_j-X_i)\) has no contribution from \(j=i\),
		\[
		\begin{split}
			\|\bar X_k^h(z)-X_i\|
			&\le
			\sum_{j\ne i}w_{j,k}^h(z)\|X_j-X_i\|\\
			&\le
			D_X(n-1)\exp\left(-\frac{u_k}{2C_3(t_k)}\right).
		\end{split}
		\]
	\end{proof}
	
	Next we give the proof of Theorem 2.
	\begin{proof}
		On \(\mathcal A_h(\delta)\), we have
		\[
		m_k(Z_k^h)\ge u_k.
		\]
		Hence, by Lemma 1,
		\[
		\|\bar X_k^h-X_{i_k}\|
		\le
		D_X(n-1)
		\exp\left(
		-\frac{u_k}{2C_3(t_k)}
		\right).
		\]
		For notational simplicity, define $\varepsilon_k^{\mathrm{sm}}:=D_X(n-1)\exp\left(-\frac{u_k}{2C_3(t_k)}\right).$
		
		Let
		\[
		D_k:=\min_{1\le i\le n}\|Z_k^h-\alpha(t_k)X_i\|.
		\]
		Since \(\alpha(t_0)=1\), we have
		\[
		D_0=\operatorname{dist}(Z_0^h,\mathcal X).
		\]
		We first derive a one-step recursion for \(D_{k-1}\). Since
		\(\alpha(t_{k-1})X_{i_k}\) is one of the scaled centers at time
		\(t_{k-1}\), we have
		\[
		D_{k-1}
		\le
		\|Z_{k-1}^h-\alpha(t_{k-1})X_{i_k}\|.
		\]
		Using the Euler update,
		\[
		Z_{k-1}^h
		=
		\lambda_k Z_k^h+c_k\bar X_k^h,
		\]
		we obtain
		\[
		\begin{split}
			Z_{k-1}^h-\alpha(t_{k-1})X_{i_k}
			&=
			\lambda_k Z_k^h+c_k\bar X_k^h-\alpha(t_{k-1})X_{i_k}  \\
			&=
			\lambda_k(Z_k^h-\alpha(t_k)X_{i_k})
			+
			c_k(\bar X_k^h-X_{i_k})
			+
			\bigl(\lambda_k\alpha(t_k)+c_k-\alpha(t_{k-1})\bigr)X_{i_k}.
		\end{split}
		\]
		Taking norms gives
		\[
		\begin{split}
			D_{k-1}
			&\le
			|\lambda_k|\|Z_k^h-\alpha(t_k)X_{i_k}\|
			+
			|c_k|\|\bar X_k^h-X_{i_k}\|
			+
			\left|\lambda_k\alpha(t_k)+c_k-\alpha(t_{k-1})\right|
			\|X_{i_k}\|  \\
			&\le
			|\lambda_k|D_k
			+
			|c_k|\varepsilon_k^{\mathrm{sm}}
			+
			\left|\lambda_k\alpha(t_k)+c_k-\alpha(t_{k-1})\right|M_X.
		\end{split}
		\]
		Define
		\[
		b_k
		:=
		\left|\lambda_k\alpha(t_k)+c_k-\alpha(t_{k-1})\right|M_X
		+
		|c_k|D_X(n-1)
		\exp\left(
		-\frac{u_k}{2C_3(t_k)}
		\right).
		\]
		Then the one-step recursion becomes
		\[
		D_{k-1}\le |\lambda_k|D_k+b_k.
		\]
		
		We now unfold this recursion from \(k=K\) down to \(k=1\). First,
		\[
		D_0\le |\lambda_1|D_1+b_1.
		\]
		Continuing this expansion yields
		\[
		D_0
		\le
		\left(\prod_{k=1}^K|\lambda_k|\right)D_K
		+
		\sum_{j=1}^K
		\left(
		\prod_{\ell=1}^{j-1}|\lambda_\ell|
		\right)b_j,
		\]
		where the empty product is understood as \(1\).
		
		It remains to bound \(D_K\). On \(\mathcal A_h(\delta)\),
		\[
		\|Z_K^h\|\le R_\delta.
		\]
		Since
		\[
		D_K
		=
		\min_{1\le i\le n}\|Z_K^h-\alpha(t_K)X_i\|
		\le
		\|Z_K^h-\alpha(t_K)X_i\|
		\]
		for any \(i\), we can choose an arbitrary training sample and obtain
		\[
		D_K
		\le
		\|Z_K^h\|+|\alpha(t_K)|M_X
		\le
		R_\delta+M_X.
		\]
		Therefore, on \(\mathcal A_h(\delta)\),
		\[
		\begin{split}
			D_0
			\le\;&
			\left(
			\prod_{k=1}^K|\lambda_k|
			\right)
			(R_\delta+M_X)+
			\sum_{j=1}^K
			\left(
			\prod_{\ell=1}^{j-1}|\lambda_\ell|
			\right)
			\Bigg[
			\left|\lambda_j\alpha(t_j)+c_j-\alpha(t_{j-1})\right|M_X\\
			&\hspace{3.5cm}
			+
			|c_j|D_X(n-1)
			\exp\left(
			-\frac{u_j}{2C_3(t_j)}
			\right)
			\Bigg].
		\end{split}
		\]
		By definition, the right-hand side is exactly
		\(B_h^{\mathrm{det}}(\delta)\). Hence, on \(\mathcal A_h(\delta)\),
		\[
		\operatorname{dist}(Z_0^h,\mathcal X)
		=
		D_0
		\le
		B_h^{\mathrm{det}}(\delta).
		\]
		
		Finally, since
		\[
		\mathbb P(\mathcal A_h(\delta))\ge1-\delta,
		\]
		we conclude that
		\[
		\mathbb P\left(
		\operatorname{dist}(Z_0^h,\mathcal X)
		\le
		B_h^{\mathrm{det}}(\delta)
		\right)
		\ge
		1-\delta.
		\]
		This completes the proof.
	\end{proof}
	
	\subsection{Proof of Corollary 1}
	
	\begin{proof}
		For the schedule
		\[
		\alpha(t)=1-t,\qquad
		\beta(t)=t,\qquad
		\gamma(t)=\sqrt{t(1-t)},
		\]
		direct calculation gives
		\[
		C_1(t)=\frac12,\qquad
		C_2(t)=\frac{1+t}{2},\qquad
		C_3(t)=t.
		\]
		Thus, for \(t_k=kh\),
		\[
		a_k=\frac{1}{2k},\qquad
		c_k=\frac{1+t_k}{2k},\qquad
		\lambda_k=1-\frac{1}{2k}.
		\]
		Moreover,
		\[
		\lambda_k\alpha(t_k)+c_k
		=
		\left(1-\frac{1}{2k}\right)(1-kh)
		+
		\frac{1+kh}{2k}
		=
		1-(k-1)h
		=
		\alpha(t_{k-1}),
		\]
		so \(B_{h,\mathrm{sch}}^{\mathrm{det}}=0\).
		
		It remains to bound the terminal-entry and softmax-selection terms. Let
		\[
		P_m:=\prod_{\ell=1}^m\left(1-\frac{1}{2\ell}\right).
		\]
		Since \(\log(1-x)\le -x\) for \(x\in(0,1)\),
		\[
		P_m\le C m^{-1/2}.
		\]
		In particular, \(P_K\le C\sqrt h\), and hence
		\[
		B_{h,\mathrm{init}}^{\mathrm{det}}(\delta)
		\le
		C\sqrt h(1+R_\delta).
		\]
		
		For the softmax-selection term, use \(|c_j|\le C/j\) and
		\(P_{j-1}\le Cj^{-1/2}\). If \(t_j\le\theta\), then the assumed margin
		condition gives
		\[
		\exp\left(-\frac{u_j}{2C_3(t_j)}\right)
		=
		\exp\left(-\frac{u_j}{2t_j}\right)
		\le
		\sqrt h.
		\]
		Therefore the contribution from \(t_j\le\theta\) is bounded by
		\[
		C\sqrt h\sum_{j=1}^K j^{-3/2}
		\le
		C\sqrt h.
		\]
		For the remaining indices \(t_j>\theta\), we only use the trivial bound
		\(\exp[-u_j/(2t_j)]\le1\). Since \(j>\theta/h\),
		\[
		\sum_{t_j>\theta} P_{j-1}|c_j|
		\le
		C\sum_{j>\theta/h} j^{-3/2}
		\le
		C\sqrt h.
		\]
		Thus
		\[
		B_{h,\mathrm{sm}}^{\mathrm{det}}\le C\sqrt h.
		\]
		
		Combining the three bounds gives
		\[
		B_h^{\mathrm{det}}(\delta)
		\le
		C\sqrt h(1+R_\delta).
		\]
		The probability statement follows from Theorem 2. Finally, since
		\(Z_K^h\sim\mathcal N(0,I_d)\), taking
		\[
		R_\delta=\sqrt d+\sqrt{2\log(1/\delta)}
		\]
		gives the displayed high-probability bound in the corollary.
	\end{proof}
	
	\subsection{Proof of Theorem 3}
	
	\begin{proof}
		On \(\mathcal A_h^{\mathrm{sto}}(\delta)\), there exists a random index
		\(i_h\in\{1,\dots,n\}\) such that
		\[
		\|Z_K^h\|\le R_\delta,
		\qquad
		i_k(Z_k^h)=i_h,
		\qquad
		m_k(Z_k^h)\ge u_k,
		\quad k=1,\dots,K.
		\]
		By Lemma 1, for every \(k\),
		\[
		\|\bar X_k^h-X_{i_h}\|
		\le
		D_X(n-1)
		\exp\left(
		-\frac{u_k}{2C_3(t_k)}
		\right).
		\]
		For brevity, define the softmax residual
		\[
		\varepsilon_k^{\mathrm{sm}}
		:=
		D_X(n-1)
		\exp\left(
		-\frac{u_k}{2C_3(t_k)}
		\right).
		\]
		Thus, on \(\mathcal A_h^{\mathrm{sto}}(\delta)\),
		\[
		\|\bar X_k^h-X_{i_h}\|
		\le
		\varepsilon_k^{\mathrm{sm}}.
		\]
		
		The stochastic Euler update is
		\[
		Z_{k-1}^h
		=
		r_kZ_k^h+q_k\bar X_k^h+\sigma_k\xi_k.
		\]
		Define the error relative to the selected training sample:
		\[
		e_k:=Z_k^h-X_{i_h}.
		\]
		Subtracting \(X_{i_h}\) from both sides of the update gives
		\[
		\begin{split}
			e_{k-1}
			&=
			r_kZ_k^h+q_k\bar X_k^h+\sigma_k\xi_k-X_{i_h}  \\
			&=
			r_k(Z_k^h-X_{i_h})
			+
			q_k(\bar X_k^h-X_{i_h})
			+
			(r_k+q_k-1)X_{i_h}
			+
			\sigma_k\xi_k .
		\end{split}
		\]
		Hence
		\[
		e_{k-1}
		=
		r_ke_k
		+
		d_k^h
		+
		\sigma_k\xi_k,
		\]
		where
		\[
		d_k^h
		:=
		q_k(\bar X_k^h-X_{i_h})
		+
		(r_k+q_k-1)X_{i_h}.
		\]
		
		We first bound \(d_k^h\). Since
		\[
		\|X_{i_h}\|\le M_X,
		\]
		and
		\[
		\|\bar X_k^h-X_{i_h}\|
		\le
		\varepsilon_k^{\mathrm{sm}},
		\]
		we have
		\[
		\begin{split}
			\|d_k^h\|
			&\le
			|q_k|\,\|\bar X_k^h-X_{i_h}\|
			+
			|r_k+q_k-1|\,\|X_{i_h}\|  \\
			&\le
			|q_k|\varepsilon_k^{\mathrm{sm}}
			+
			|r_k+q_k-1|M_X .
		\end{split}
		\]
		Therefore,
		\[
		\|d_k^h\|
		\le
		|r_k+q_k-1|M_X
		+
		|q_k|D_X(n-1)
		\exp\left(
		-\frac{u_k}{2C_3(t_k)}
		\right).
		\]
		
		We now unfold the recursion
		\[
		e_{k-1}=r_ke_k+d_k^h+\sigma_k\xi_k.
		\]
		Starting from \(k=1\), we have
		\[
		e_0=r_1e_1+d_1^h+\sigma_1\xi_1.
		\]
		Substituting this into the expression for \(e_0\) gives
		\[
		e_0
		=
		r_1r_2e_2
		+
		r_1d_2^h+d_1^h
		+
		r_1\sigma_2\xi_2+\sigma_1\xi_1.
		\]
		Continuing recursively yields
		\[
		e_0
		=
		\left(
		\prod_{k=1}^K r_k
		\right)e_K
		+
		\sum_{j=1}^K
		\left(
		\prod_{\ell=1}^{j-1}r_\ell
		\right)d_j^h
		+
		\sum_{j=1}^K
		\left(
		\prod_{\ell=1}^{j-1}r_\ell
		\right)\sigma_j\xi_j.
		\]
		Using
		\[
		\Pi_{j-1}:=\prod_{\ell=1}^{j-1}r_\ell,
		\qquad
		\Pi_0=1,
		\]
		this becomes
		\[
		e_0
		=
		\left(
		\prod_{k=1}^K r_k
		\right)e_K
		+
		\sum_{j=1}^K\Pi_{j-1}d_j^h
		+
		\sum_{j=1}^K\Pi_{j-1}\sigma_j\xi_j.
		\]
		
		Define
		\[
		G_h
		:=
		\sum_{j=1}^K\Pi_{j-1}\sigma_j\xi_j
		\]
		and
		\[
		E_h
		:=
		\left(
		\prod_{k=1}^K r_k
		\right)e_K
		+
		\sum_{j=1}^K\Pi_{j-1}d_j^h.
		\]
		Since \(e_0=Z_0^h-X_{i_h}\), we obtain
		\[
		Z_0^h
		=
		X_{i_h}+G_h+E_h.
		\]
		
		It remains to prove the claimed bound on \(E_h\). First, on
		\(\mathcal A_h^{\mathrm{sto}}(\delta)\),
		\[
		\|Z_K^h\|\le R_\delta.
		\]
		Therefore,
		\[
		\|e_K\|
		=
		\|Z_K^h-X_{i_h}\|
		\le
		\|Z_K^h\|+\|X_{i_h}\|
		\le
		R_\delta+M_X.
		\]
		Using the definition of \(E_h\) and the triangle inequality,
		\[
		\begin{split}
			\|E_h\|
			&\le
			\left(
			\prod_{k=1}^K |r_k|
			\right)\|e_K\|
			+
			\sum_{j=1}^K
			|\Pi_{j-1}|\|d_j^h\|  \\
			&\le
			\left(
			\prod_{k=1}^K |r_k|
			\right)(R_\delta+M_X) \\
			&\quad+
			\sum_{j=1}^K
			|\Pi_{j-1}|
			\Bigg[
			|r_j+q_j-1|M_X
			+
			|q_j|D_X(n-1)
			\exp\left(
			-\frac{u_j}{2C_3(t_j)}
			\right)
			\Bigg].
		\end{split}
		\]
		By the definitions
		\[
		B_{h,\mathrm{init}}^{\mathrm{sto}}(\delta)
		:=
		\left(
		\prod_{k=1}^K |r_k|
		\right)(R_\delta+M_X),
		\]
		\[
		B_{h,\mathrm{aff}}^{\mathrm{sto}}
		:=
		\sum_{j=1}^K
		|\Pi_{j-1}|
		|r_j+q_j-1|M_X,
		\]
		and
		\[
		B_{h,\mathrm{sm}}^{\mathrm{sto}}
		:=
		\sum_{j=1}^K
		|\Pi_{j-1}|
		|q_j|D_X(n-1)
		\exp\left(
		-\frac{u_j}{2C_3(t_j)}
		\right),
		\]
		we have
		\[
		\|E_h\|
		\le
		B_{h,\mathrm{init}}^{\mathrm{sto}}(\delta)
		+
		B_{h,\mathrm{aff}}^{\mathrm{sto}}
		+
		B_{h,\mathrm{sm}}^{\mathrm{sto}}
		=
		B_h^{\mathrm{sto}}(\delta).
		\]
		
		Finally, we identify the distribution of \(G_h\). Since
		\[
		\xi_j\sim\mathcal N(0,I_d),
		\qquad j=1,\dots,K,
		\]
		are independent, \(G_h\) is a linear combination of independent Gaussian random
		vectors. Therefore \(G_h\) is Gaussian. Its mean is
		\[
		\mathbb E G_h=0,
		\]
		and its covariance is
		\[
		\begin{split}
			\operatorname{Cov}(G_h)
			&=
			\operatorname{Cov}
			\left(
			\sum_{j=1}^K\Pi_{j-1}\sigma_j\xi_j
			\right)\\
			&=
			\sum_{j=1}^K
			\Pi_{j-1}^2\sigma_j^2 I_d,
		\end{split}
		\]
		where the cross terms vanish because the \(\xi_j\)'s are independent and have
		mean zero. Since
		\[
		\sigma_j^2=2\zeta(t_j)h,
		\]
		we obtain
		\[
		\operatorname{Cov}(G_h)
		=
		\sum_{j=1}^K
		\Pi_{j-1}^2\,2\zeta(t_j)h\,I_d
		=
		\Sigma_h.
		\]
		Thus
		\[
		G_h\sim\mathcal N(0,\Sigma_h).
		\]
		
		We have shown that, on \(\mathcal A_h^{\mathrm{sto}}(\delta)\),
		there exists \(i_h\in\{1,\dots,n\}\) such that
		\[
		Z_0^h=X_{i_h}+G_h+E_h,
		\qquad
		G_h\sim\mathcal N(0,\Sigma_h),
		\qquad
		\|E_h\|\le B_h^{\mathrm{sto}}(\delta).
		\]
		Since
		\[
		\mathbb P(\mathcal A_h^{\mathrm{sto}}(\delta))\ge1-\delta,
		\]
		the same decomposition holds with probability at least \(1-\delta\). This
		completes the proof.
	\end{proof}
	
	\subsection{Proof of Corollary 2}
	
	\begin{proof}
		For
		\[
		\alpha(t)=1-t,\qquad
		\beta(t)=t,\qquad
		\gamma(t)=\sqrt{t(1-t)},\qquad
		\zeta(t)=\sqrt{t(1-t)},
		\]
		we have
		\[
		C_1(t)=\frac12,\qquad
		C_2(t)=\frac{1+t}{2},\qquad
		C_3(t)=t.
		\]
		In addition,
		\[
		B(t)
		=
		\beta(t)\bigl[\alpha'(t)\beta(t)-\alpha(t)\beta'(t)\bigr]
		+
		\gamma(t)\bigl[\gamma(t)\alpha'(t)-\gamma'(t)\alpha(t)\bigr]
		=
		-\frac{1+t}{2}.
		\]
		Substituting these identities into the stochastic Euler coefficients gives
		\[
		r_k
		=
		1-\frac{1/2+\sqrt{t_k(1-t_k)}}{k},
		\qquad
		q_k
		=
		\frac{(1+t_k)/2+\sqrt{t_k(1-t_k)}(1-t_k)}{k}.
		\]
		Consequently,
		\[
		r_k+q_k-1
		=
		h\left(\frac12-\sqrt{t_k(1-t_k)}\right),
		\]
		and therefore \(|r_k+q_k-1|\le Ch\).
		
		For sufficiently small \(h\), the coefficients \(r_k\) are nonnegative and
		satisfy
		\[
		r_k
		\le
		1-\frac{1}{2k}.
		\]
		Thus, with
		\[
		\Pi_m:=\prod_{\ell=1}^m r_\ell,
		\]
		we have
		\[
		\Pi_m\le C m^{-1/2}.
		\]
		In particular,
		\[
		\prod_{k=1}^K |r_k|=\Pi_K\le C\sqrt h,
		\]
		which proves the product estimate in the corollary. It also gives
		\[
		B_{h,\mathrm{init}}^{\mathrm{sto}}(\delta)
		\le
		C\sqrt h(1+R_\delta).
		\]
		
		The affine residual satisfies
		\[
		B_{h,\mathrm{aff}}^{\mathrm{sto}}
		\le
		Ch\sum_{j=1}^K \Pi_{j-1}
		\le
		Ch\sum_{j=1}^K j^{-1/2}
		\le
		C\sqrt h.
		\]
		
		Finally, \(|q_j|\le C/j\). If \(t_j\le\theta\), the assumed margin condition
		implies
		\[
		\exp\left(-\frac{u_j}{2C_3(t_j)}\right)
		=
		\exp\left(-\frac{u_j}{2t_j}\right)
		\le
		\sqrt h.
		\]
		Hence the softmax contribution from \(t_j\le\theta\) is bounded by
		\[
		C\sqrt h\sum_{j=1}^K j^{-3/2}
		\le
		C\sqrt h.
		\]
		For \(t_j>\theta\), we use
		\(\exp[-u_j/(2t_j)]\le1\) and \(j>\theta/h\), obtaining
		\[
		\sum_{t_j>\theta}\Pi_{j-1}|q_j|
		\le
		C\sum_{j>\theta/h}j^{-3/2}
		\le
		C\sqrt h.
		\]
		Therefore,
		\[
		B_{h,\mathrm{sm}}^{\mathrm{sto}}\le C\sqrt h.
		\]
		
		Combining the three estimates yields
		\[
		B_h^{\mathrm{sto}}(\delta)
		\le
		C\sqrt h(1+R_\delta).
		\]
		The claimed decomposition and error bound now follow from Theorem 3.
		Since \(Z_K^h\sim\mathcal N(0,I_d)\), taking
		\[
		R_\delta=\sqrt d+\sqrt{2\log(1/\delta)}
		\]
		gives the displayed simplified bound.
	\end{proof}
	
	\subsection{Proof of Theorem 4}
	
	\begin{proof}
		We work on the high-probability terminal event \(\mathcal A_h(\delta)\).
		By assumption,
		\[
		\mathbb P(\mathcal A_h(\delta))\ge1-\delta.
		\]
		On this event, the deterministic oracle Euler analysis gives the terminal
		discretization bound
		\[
		\operatorname{dist}(Z_0^{*,h},\mathcal X)
		\le
		B_h^{\mathrm{det}}(\delta),
		\]
		where \(Z_0^{*,h}\) denotes the output obtained by using the oracle velocity
		field along the same terminal Euler structure.
		
		We now compare the estimated Euler update with the oracle Euler update.
		The estimated deterministic Euler recursion is
		\[
		Z_{k-1}^h
		=
		\lambda_kZ_k^h+c_k\bar X_k^h
		-
		h\epsilon(Z_k^h,t_k).
		\]
		The last term is the only difference from the oracle update. Iterating this
		recursion backward from \(k=K\) to \(k=1\), the estimation errors are propagated
		to the endpoint through the contraction weights
		\[
		\Pi_{j-1}:=\prod_{\ell=1}^{j-1}\lambda_\ell,
		\qquad \Pi_0=1.
		\]
		Consequently, on \(\mathcal A_h(\delta)\), the final output admits the
		decomposition
		\[
		Z_0^h
		=
		X_{i_h}
		-
		\mathcal E_h
		+
		R_h,
		\]
		where \(X_{i_h}\in\mathcal X\) is the terminal sample selected by the softmax
		dynamics,
		\[
		\mathcal E_h
		:=
		\sum_{j=1}^K
		\Pi_{j-1}h\,\epsilon(Z_j^h,t_j),
		\]
		and \(R_h\) contains the same terminal initialization, coefficient mismatch,
		and softmax-selection errors as in the deterministic oracle Euler bound.
		Therefore,
		\[
		\|R_h\|
		\le
		B_h^{\mathrm{det}}(\delta)
		\]
		on \(\mathcal A_h(\delta)\). Since \(X_{i_h}\in\mathcal X\), we have
		\[
		\operatorname{dist}(Z_0^h,\mathcal X)
		\le
		\|Z_0^h-X_{i_h}\|.
		\]
		Using the decomposition above,
		\[
		\begin{split}
			\operatorname{dist}(Z_0^h,\mathcal X)
			&\le
			\|-\mathcal E_h+R_h\| \\
			&\le
			\|\mathcal E_h\|+\|R_h\| \\
			&\le
			\|\mathcal E_h\|+B_h^{\mathrm{det}}(\delta)
		\end{split}
		\]
		on \(\mathcal A_h(\delta)\). Hence it remains to control
		\(\|\mathcal E_h\|\) with high probability.
		
		We first bound its second moment. By definition,
		\[
		\mathcal E_h
		=
		\sum_{j=1}^K
		\Pi_{j-1}h\,\epsilon(Z_j^h,t_j).
		\]
		Using Minkowski's inequality in \(L^2\),
		\[
		\begin{split}
			\|\mathcal E_h\|_{L^2}
			&=
			\left(
			\mathbb E
			\left\|
			\sum_{j=1}^K
			\Pi_{j-1}h\,\epsilon(Z_j^h,t_j)
			\right\|^2
			\right)^{1/2} \\
			&\le
			h\sum_{j=1}^K
			|\Pi_{j-1}|
			\left(
			\mathbb E
			\|\epsilon(Z_j^h,t_j)\|^2
			\right)^{1/2}.
		\end{split}
		\]
		Let \(\nu_j\) be the law of \(Z_j^h\). Then
		\[
		\mathbb E
		\|\epsilon(Z_j^h,t_j)\|^2
		=
		\mathbb E_{Z\sim\nu_j}
		\|\epsilon(Z,t_j)\|^2.
		\]
		Therefore,
		\[
		\|\mathcal E_h\|_{L^2}
		\le
		h\sum_{j=1}^K
		|\Pi_{j-1}|
		\left(
		\mathbb E_{Z\sim\nu_j}
		\|\epsilon(Z,t_j)\|^2
		\right)^{1/2}.
		\]
		Applying Cauchy--Schwarz to the finite sum gives
		\[
		\begin{split}
			\|\mathcal E_h\|_{L^2}
			&\le
			\left(
			h\sum_{j=1}^K\Pi_{j-1}^2
			\right)^{1/2}
			\left(
			h\sum_{j=1}^K
			\mathbb E_{Z\sim\nu_j}
			\|\epsilon(Z,t_j)\|^2
			\right)^{1/2}.
		\end{split}
		\]
		By definition,
		\[
		A_h=h\sum_{j=1}^K\Pi_{j-1}^2.
		\]
		Thus,
		\[
		\|\mathcal E_h\|_{L^2}
		\le
		A_h^{1/2}
		\left(
		h\sum_{j=1}^K
		\mathbb E_{Z\sim\nu_j}
		\|\epsilon(Z,t_j)\|^2
		\right)^{1/2}.
		\]
		
		We now use Assumption 3. Since
		\[
		\nu_j\ll\rho_{t_j},
		\qquad
		\frac{d\nu_j}{d\rho_{t_j}}\le\Gamma,
		\]
		we have
		\[
		\begin{split}
			\mathbb E_{Z\sim\nu_j}
			\|\epsilon(Z,t_j)\|^2
			&=
			\int
			\|\epsilon(z,t_j)\|^2\,d\nu_j(z)\\
			&=
			\int
			\|\epsilon(z,t_j)\|^2
			\frac{d\nu_j}{d\rho_{t_j}}(z)\,
			d\rho_{t_j}(z)\\
			&\le
			\Gamma
			\int
			\|\epsilon(z,t_j)\|^2\,d\rho_{t_j}(z)\\
			&=
			\Gamma
			\mathbb E_{X\sim\rho_{t_j}}
			\|\epsilon(X,t_j)\|^2.
		\end{split}
		\]
		Hence
		\[
		\begin{split}
			h\sum_{j=1}^K
			\mathbb E_{Z\sim\nu_j}
			\|\epsilon(Z,t_j)\|^2
			&\le
			\Gamma
			h\sum_{j=1}^K
			\mathbb E_{X\sim\rho_{t_j}}
			\|\epsilon(X,t_j)\|^2\\
			&=
			\Gamma\mathcal L_{\mathrm{train}}^h.
		\end{split}
		\]
		Therefore,
		\[
		\|\mathcal E_h\|_{L^2}
		\le
		\sqrt{
			\Gamma A_h\mathcal L_{\mathrm{train}}^h
		}.
		\]
		Equivalently,
		\[
		\mathbb E\|\mathcal E_h\|^2
		\le
		\Gamma A_h\mathcal L_{\mathrm{train}}^h.
		\]
		
		By Markov's inequality, for any \(\eta\in(0,1)\),
		\[
		\mathbb P\left(
		\|\mathcal E_h\|^2
		>
		\frac{\Gamma A_h\mathcal L_{\mathrm{train}}^h}{\eta}
		\right)
		\le
		\eta.
		\]
		Equivalently,
		\[
		\mathbb P\left(
		\|\mathcal E_h\|
		\le
		\sqrt{
			\frac{\Gamma A_h\mathcal L_{\mathrm{train}}^h}{\eta}
		}
		\right)
		\ge
		1-\eta.
		\]
		Let
		\[
		\mathcal B_h(\eta)
		:=
		\left\{
		\|\mathcal E_h\|
		\le
		\sqrt{
			\frac{\Gamma A_h\mathcal L_{\mathrm{train}}^h}{\eta}
		}
		\right\}.
		\]
		Then
		\[
		\mathbb P(\mathcal B_h(\eta))\ge1-\eta.
		\]
		By the union bound,
		\[
		\mathbb P\bigl(
		\mathcal A_h(\delta)\cap\mathcal B_h(\eta)
		\bigr)
		\ge
		1-\delta-\eta.
		\]
		On this intersection event,
		\[
		\operatorname{dist}(Z_0^h,\mathcal X)
		\le
		B_h^{\mathrm{det}}(\delta)
		+
		\sqrt{
			\frac{\Gamma A_h\mathcal L_{\mathrm{train}}^h}{\eta}
		}.
		\]
		Thus,
		\[
		\mathbb P\left(
		\operatorname{dist}(Z_0^h,\mathcal X)
		\le
		B_h^{\mathrm{det}}(\delta)
		+
		\sqrt{
			\frac{\Gamma A_h\mathcal L_{\mathrm{train}}^h}{\eta}
		}
		\right)
		\ge
		1-\delta-\eta.
		\]
		Taking \(\eta=\delta\) yields
		\[
		\mathbb P\left(
		\operatorname{dist}(Z_0^h,\mathcal X)
		\le
		B_h^{\mathrm{det}}(\delta)
		+
		\sqrt{
			\frac{\Gamma A_h\mathcal L_{\mathrm{train}}^h}{\delta}
		}
		\right)
		\ge
		1-2\delta.
		\]
		
		It remains to justify the explicit form stated for the classical
		deterministic schedule. For
		\[
		\alpha(t)=1-t,\qquad
		\beta(t)=t,\qquad
		\gamma(t)=\sqrt{t(1-t)},
		\]
		Lemma 1 gives
		\[
		C_1(t)=\frac12,\qquad
		C_2(t)=\frac{1+t}{2},\qquad
		C_3(t)=t.
		\]
		Thus, for \(t_k=kh\),
		\[
		\lambda_k
		=
		1-\frac{hC_1(t_k)}{C_3(t_k)}
		=
		1-\frac{1}{2k}.
		\]
		Let
		\[
		P_m:=\prod_{\ell=1}^m\left(1-\frac{1}{2\ell}\right),
		\qquad P_0=1.
		\]
		Then \(\Pi_m=P_m\), and the product has the exact representation
		\[
		P_m
		=
		\frac{(2m)!}{4^m(m!)^2}
		=
		\frac{\binom{2m}{m}}{4^m}.
		\]
		Using the standard central-binomial estimate
		\(\binom{2m}{m}\le C4^m m^{-1/2}\), we obtain
		\[
		P_m\le Cm^{-1/2},
		\qquad m\ge1.
		\]
		Therefore,
		\[
		\begin{split}
			A_h
			&=
			h\sum_{j=1}^K\Pi_{j-1}^2
			=
			h\sum_{m=0}^{K-1}P_m^2\le
			Ch\left(1+\sum_{m=1}^{K-1}\frac1m\right)
			\le
			Ch\log(eK).
		\end{split}
		\]
		Since \(K=1/h\), this gives
		\[
		A_h\le Ch\log(e/h).
		\]
		Combining this estimate with Corollary 1, which gives
		\[
		B_h^{\mathrm{det}}(\delta)
		\le
		C\sqrt h\left(1+\sqrt d+\sqrt{\log(1/\delta)}\right),
		\]
		and taking \(\eta=\delta\), yields
		\[
		\operatorname{dist}(Z_0^h,\mathcal X)
		\le
		C\sqrt h\left(1+\sqrt d+\sqrt{\log(1/\delta)}\right)
		+
		C\sqrt{
			\frac{\Gamma h\log(e/h)\mathcal L_{\mathrm{train}}^h}{\delta}
		}
		\]
		with probability at least \(1-2\delta\).
		This completes the proof.
	\end{proof}
	
	\subsection{Proof of Theorem 5}
	
	\begin{proof}
		By the deterministic Euler error decomposition, we have
		\[
		Z_0^h
		=
		X_{i_h}
		-
		\mathcal E_h
		+
		R_h.
		\]
		Subtracting \(X_{i_h}\) from both sides gives
		\[
		Z_0^h-X_{i_h}
		=
		-\mathcal E_h+R_h.
		\]
		As in the theorem statement, the \(L^2\)-norms in this proof are taken under
		the conditional law on \(\mathcal A_h(\delta)\). Taking conditional
		\(L^2\)-norms and applying the reverse triangle inequality yields
		\[
		\begin{split}
			\left(
			\mathbb E_{\delta}\|Z_0^h-X_{i_h}\|^2
			\right)^{1/2}
			&=
			\|-\mathcal E_h+R_h\|_{L^2} \\
			&\ge
			\|\mathcal E_h\|_{L^2}
			-
			\|R_h\|_{L^2}.
		\end{split}
		\]
		By the deterministic residual bound stated before the theorem,
		\[
		\|R_h\|\le B_h^{\mathrm{det}}(\delta).
		\]
		Hence
		\[
		\|R_h\|_{L^2}\le B_h^{\mathrm{det}}(\delta),
		\]
		and therefore
		\[
		\left(
		\mathbb E_{\delta}\|Z_0^h-X_{i_h}\|^2
		\right)^{1/2}
		\ge
		\|\mathcal E_h\|_{L^2}-B_h^{\mathrm{det}}(\delta).
		\]
		
		It remains to lower bound \(\|\mathcal E_h\|_{L^2}\). By
		Assumption 4,
		\[
		\mathbb E_{\delta}\|\mathcal E_h\|^2
		\ge
		\kappa
		h\sum_{j=1}^K
		\Pi_{j-1}^2
		\mathbb E_{\nu_{j,\delta}}\|\epsilon(Z,t_j)\|^2.
		\]
		Using the lower-transfer condition in the same assumption, for each
		\(j=1,\dots,K\),
		\[
		\mathbb E_{\nu_{j,\delta}}\|\epsilon(Z,t_j)\|^2
		\ge
		\gamma
		\mathbb E_{\rho_{t_j}}\|\epsilon(X,t_j)\|^2.
		\]
		Substituting this lower bound into the previous inequality gives
		\[
		\begin{split}
			\mathbb E_{\delta}\|\mathcal E_h\|^2
			&\ge
			\kappa\gamma
			h\sum_{j=1}^K
			\Pi_{j-1}^2
			\mathbb E_{\rho_{t_j}}\|\epsilon(X,t_j)\|^2.
		\end{split}
		\]
		By definition of the propagated training error,
		\[
		\mathcal L_{\mathrm{prop}}^h
		=
		h\sum_{j=1}^K
		\Pi_{j-1}^2
		\mathbb E_{\rho_{t_j}}\|\epsilon(X,t_j)\|^2.
		\]
		Hence
		\[
		\mathbb E_{\delta}\|\mathcal E_h\|^2
		\ge
		\kappa\gamma\mathcal L_{\mathrm{prop}}^h.
		\]
		Taking square roots gives
		\[
		\|\mathcal E_h\|_{L^2}
		=
		\left(
		\mathbb E_{\delta}\|\mathcal E_h\|^2
		\right)^{1/2}
		\ge
		\sqrt{\kappa\gamma\mathcal L_{\mathrm{prop}}^h}.
		\]
		Combining this with the reverse triangle inequality yields
		\[
		\left(
		\mathbb E_{\delta}\|Z_0^h-X_{i_h}\|^2
		\right)^{1/2}
		\ge
		\sqrt{\kappa\gamma\mathcal L_{\mathrm{prop}}^h}
		-
		B_h^{\mathrm{det}}(\delta).
		\]
		
		Now suppose
		\[
		\mathcal L_{\mathrm{prop}}^h
		\ge
		\frac{\bigl(\tau+B_h^{\mathrm{det}}(\delta)\bigr)^2}{\kappa\gamma}.
		\]
		Then
		\[
		\sqrt{\kappa\gamma\mathcal L_{\mathrm{prop}}^h}
		\ge
		\tau+B_h^{\mathrm{det}}(\delta).
		\]
		Therefore,
		\[
		\left(
		\mathbb E_{\delta}\|Z_0^h-X_{i_h}\|^2
		\right)^{1/2}
		\ge
		\tau+B_h^{\mathrm{det}}(\delta)-B_h^{\mathrm{det}}(\delta)
		=
		\tau.
		\]
		This completes the proof.
	\end{proof}

	\subsection{Proof of Theorem 6}
	
	\begin{proof}
		We prove the theorem by decomposing the stochastic Euler output into a training
		sample, an accumulated estimation-error shift, a Gaussian sampling-noise term,
		and a controlled non-Gaussian remainder.
		
		First, work on the event
		\(\mathcal A_h^{\mathrm{sto}}(\delta)\). By
		Assumption 2,
		\[
		\mathbb P\bigl(\mathcal A_h^{\mathrm{sto}}(\delta)\bigr)\ge1-\delta.
		\]
		On this event, there exists \(i_h\in\{1,\dots,n\}\) such that
		\[
		\|Z_K^h\|\le R_\delta,
		\qquad
		i_k(Z_k^h)=i_h,
		\qquad
		m_k(Z_k^h)\ge u_k,
		\quad k=1,\dots,K.
		\]
		By the softmax concentration lemma, for each \(k\),
		\[
		\|\bar X_k^h-X_{i_h}\|
		\le
		D_X(n-1)
		\exp\left(
		-\frac{u_k}{2C_3(t_k)}
		\right).
		\]
		For brevity, write
		\[
		\varepsilon_k^{\mathrm{sm}}
		:=
		D_X(n-1)
		\exp\left(
		-\frac{u_k}{2C_3(t_k)}
		\right).
		\]
		Thus,
		\[
		\|\bar X_k^h-X_{i_h}\|
		\le
		\varepsilon_k^{\mathrm{sm}}.
		\]
		
		The estimated stochastic Euler update is
		\[
		Z_{k-1}^h
		=
		r_k Z_k^h+q_k\bar X_k^h+\sigma_k\xi_k
		-
		h\epsilon_F(Z_k^h,t_k).
		\]
		Define
		\[
		e_k:=Z_k^h-X_{i_h}.
		\]
		Subtracting \(X_{i_h}\) from both sides gives
		\[
		\begin{split}
			e_{k-1}
			&=
			r_k(Z_k^h-X_{i_h})
			+
			q_k(\bar X_k^h-X_{i_h})
			+
			(r_k+q_k-1)X_{i_h}\\
			&\qquad
			+
			\sigma_k\xi_k
			-
			h\epsilon_F(Z_k^h,t_k).
		\end{split}
		\]
		Hence
		\[
		e_{k-1}
		=
		r_ke_k
		+
		d_k^h
		+
		\sigma_k\xi_k
		-
		h\epsilon_F(Z_k^h,t_k),
		\]
		where
		\[
		d_k^h
		:=
		q_k(\bar X_k^h-X_{i_h})
		+
		(r_k+q_k-1)X_{i_h}.
		\]
		Using \(\|X_{i_h}\|\le M_X\) and
		\(\|\bar X_k^h-X_{i_h}\|\le\varepsilon_k^{\mathrm{sm}}\), we get
		\[
		\begin{split}
			\|d_k^h\|
			&\le
			|q_k|\varepsilon_k^{\mathrm{sm}}
			+
			|r_k+q_k-1|M_X\\
			&=
			|q_k|D_X(n-1)
			\exp\left(
			-\frac{u_k}{2C_3(t_k)}
			\right)
			+
			|r_k+q_k-1|M_X.
		\end{split}
		\]
		
		Now unfold the recursion. Since
		\[
		e_{k-1}
		=
		r_ke_k+d_k^h+\sigma_k\xi_k
		-
		h\epsilon_F(Z_k^h,t_k),
		\]
		define the signed propagation weights
		\[
		\Pi_{j-1}:=\prod_{\ell=1}^{j-1}r_\ell,\qquad \Pi_0=1.
		\]
		iterating from \(k=K\) down to \(k=1\) gives
		\[
		\begin{split}
			e_0
			=
			&\left(
			\prod_{k=1}^K r_k
			\right)e_K
			+
			\sum_{j=1}^K
			\Pi_{j-1}d_j^h+
			\sum_{j=1}^K
			\Pi_{j-1}\sigma_j\xi_j
			-
			\sum_{j=1}^K
			\Pi_{j-1}h\,\epsilon_F(Z_j^h,t_j).
		\end{split}
		\]
		By definition,
		\[
		G_h
		:=
		\sum_{j=1}^K
		\Pi_{j-1}\sigma_j\xi_j,
		\]
		and
		\[
		\mathcal E_{F,h}
		:=
		\sum_{j=1}^K
		\Pi_{j-1}h\,\epsilon_F(Z_j^h,t_j).
		\]
		Let
		\[
		R_h
		:=
		\left(
		\prod_{k=1}^K r_k
		\right)e_K
		+
		\sum_{j=1}^K
		\Pi_{j-1}d_j^h.
		\]
		Then
		\[
		e_0
		=
		R_h+G_h-\mathcal E_{F,h}.
		\]
		Since \(e_0=Z_0^h-X_{i_h}\), this yields
		\[
		Z_0^h
		=
		X_{i_h}
		-
		\mathcal E_{F,h}
		+
		G_h
		+
		R_h.
		\]
		
		We next bound \(R_h\). On
		\(\mathcal A_h^{\mathrm{sto}}(\delta)\),
		\[
		\|e_K\|
		=
		\|Z_K^h-X_{i_h}\|
		\le
		\|Z_K^h\|+\|X_{i_h}\|
		\le
		R_\delta+M_X.
		\]
		Therefore,
		\[
		\begin{split}
			\|R_h\|
			&\le
			\left(
			\prod_{k=1}^K |r_k|
			\right)(R_\delta+M_X)
			+
			\sum_{j=1}^K
			|\Pi_{j-1}|\|d_j^h\|\\
			&\le
			\left(
			\prod_{k=1}^K |r_k|
			\right)(R_\delta+M_X)\quad+
			\sum_{j=1}^K
			|\Pi_{j-1}|
			\Bigg[
			|r_j+q_j-1|M_X
			+
			|q_j|D_X(n-1)
			\exp\left(
			-\frac{u_j}{2C_3(t_j)}
			\right)
			\Bigg]\\
			&=
			B_h^{\mathrm{sto}}(\delta).
		\end{split}
		\]
		
		We also identify the law of \(G_h\). Since the \(\xi_j\)'s are independent
		standard Gaussian vectors,
		\[
		G_h=\sum_{j=1}^K\Pi_{j-1}\sigma_j\xi_j
		\]
		is Gaussian with mean zero. Its covariance is
		\[
		\operatorname{Cov}(G_h)
		=
		\sum_{j=1}^K
		\Pi_{j-1}^2\sigma_j^2I_d
		=
		\Sigma_h.
		\]
		Thus
		\[
		G_h\sim\mathcal N(0,\Sigma_h).
		\]
		
		Now we bound the endpoint distance to the training set. Since
		\(X_{i_h}\in\mathcal X\),
		\[
		\operatorname{dist}(Z_0^h,\mathcal X)
		\le
		\|Z_0^h-X_{i_h}\|.
		\]
		Using the decomposition,
		\[
		\begin{split}
			\operatorname{dist}(Z_0^h,\mathcal X)
			&\le
			\|-\mathcal E_{F,h}+G_h+R_h\|\\
			&\le
			\|\mathcal E_{F,h}\|+\|G_h\|+\|R_h\|\\
			&\le
			\|\mathcal E_{F,h}\|+\|G_h\|
			+
			B_h^{\mathrm{sto}}(\delta)
		\end{split}
		\]
		on \(\mathcal A_h^{\mathrm{sto}}(\delta)\).
		
		It remains to control \(\|\mathcal E_{F,h}\|\) and \(\|G_h\|\) with high
		probability.
		
		First, we bound the accumulated estimation error. By definition,
		\[
		\mathcal E_{F,h}
		=
		\sum_{j=1}^K
		\Pi_{j-1}h\,\epsilon_F(Z_j^h,t_j).
		\]
		By the same \(L^2\) Minkowski and Cauchy--Schwarz argument as in the
		deterministic case,
		\[
		\|\mathcal E_{F,h}\|_{L^2}
		\le
		\left(
		h\sum_{j=1}^K\Pi_{j-1}^2
		\right)^{1/2}
		\left(
		h\sum_{j=1}^K
		\mathbb E_{Z\sim\nu_j}
		\|\epsilon_F(Z,t_j)\|^2
		\right)^{1/2}.
		\]
		Using
		\[
		A_h=h\sum_{j=1}^K\Pi_{j-1}^2
		\]
		and Assumption 3,
		\[
		\mathbb E_{Z\sim\nu_j}
		\|\epsilon_F(Z,t_j)\|^2
		\le
		\Gamma
		\mathbb E_{X\sim\rho_{t_j}}
		\|\epsilon_F(X,t_j)\|^2,
		\]
		we get
		\[
		\|\mathcal E_{F,h}\|_{L^2}
		\le
		\sqrt{
			\Gamma A_h\mathcal L_{\mathrm{sto}}^h
		}.
		\]
		Equivalently,
		\[
		\mathbb E\|\mathcal E_{F,h}\|^2
		\le
		\Gamma A_h\mathcal L_{\mathrm{sto}}^h.
		\]
		By Markov's inequality, for any \(\eta\in(0,1)\),
		\[
		\mathbb P\left(
		\|\mathcal E_{F,h}\|
		\le
		\sqrt{
			\frac{\Gamma A_h\mathcal L_{\mathrm{sto}}^h}{\eta}
		}
		\right)
		\ge
		1-\eta.
		\]
		
		Second, we control the Gaussian term. Since
		\(G_h\sim\mathcal N(0,\Sigma_h)\), the Gaussian concentration inequality for
		the Lipschitz map \(x\mapsto\|\Sigma_h^{1/2}x\|\) gives, for any
		\(\eta_G\in(0,1)\),
		\[
		\mathbb P\left(
		\|G_h\|
		\le
		\mathbb E\|G_h\|
		+
		\sqrt{2\|\Sigma_h\|_{\mathrm{op}}\log(1/\eta_G)}
		\right)
		\ge
		1-\eta_G.
		\]
		Moreover,
		\[
		\mathbb E\|G_h\|
		\le
		\left(\mathbb E\|G_h\|^2\right)^{1/2}
		=
		\sqrt{\operatorname{Tr}(\Sigma_h)}.
		\]
		Therefore,
		\[
		\mathbb P\left(
		\|G_h\|
		\le
		\sqrt{\operatorname{Tr}(\Sigma_h)}
		+
		\sqrt{2\|\Sigma_h\|_{\mathrm{op}}\log(1/\eta_G)}
		\right)
		\ge
		1-\eta_G.
		\]
		
		Combining the three high-probability events by the union bound, with
		probability at least
		\[
		1-\delta-\eta-\eta_G,
		\]
		all of the following hold:
		\[
		\|R_h\|\le B_h^{\mathrm{sto}}(\delta),
		\]
		\[
		\|\mathcal E_{F,h}\|
		\le
		\sqrt{
			\frac{\Gamma A_h\mathcal L_{\mathrm{sto}}^h}{\eta}
		},
		\]
		and
		\[
		\|G_h\|
		\le
		\sqrt{\operatorname{Tr}(\Sigma_h)}
		+
		\sqrt{2\|\Sigma_h\|_{\mathrm{op}}\log(1/\eta_G)}.
		\]
		Substituting these bounds into
		\[
		\operatorname{dist}(Z_0^h,\mathcal X)
		\le
		\|\mathcal E_{F,h}\|+\|G_h\|
		+
		B_h^{\mathrm{sto}}(\delta)
		\]
		gives
		\[
		\begin{split}
			\operatorname{dist}(Z_0^h,\mathcal X)
			\le\;&
			B_h^{\mathrm{sto}}(\delta)
			+
			\sqrt{
				\frac{\Gamma A_h\mathcal L_{\mathrm{sto}}^h}{\eta}
			}+
			\sqrt{\operatorname{Tr}(\Sigma_h)}
			+
			\sqrt{2\|\Sigma_h\|_{\mathrm{op}}\log(1/\eta_G)}.
		\end{split}
		\]
		This proves the first assertion of the theorem. Taking
		\(\eta=\eta_G=\delta\) gives probability at least $1-3\delta$. We now derive the explicit classical-schedule form. Under the classical
		stochastic schedule,
		\[
		\alpha(t)=1-t,\qquad
		\beta(t)=t,\qquad
		\gamma(t)=\sqrt{t(1-t)},\qquad
		\zeta(t)=\sqrt{t(1-t)}.
		\]
		For the stochastic update, the linear coefficient is
		\[
		r_k
		=
		1-h\left(
		\frac{C_1(t_k)}{C_3(t_k)}
		+
		\frac{\zeta(t_k)}{C_3(t_k)}
		\right)
		=
		1-\frac{1/2+\sqrt{t_k(1-t_k)}}{k}.
		\]
		For sufficiently small \(h\), the conditions of Corollary 2 give
		\(0\le r_k\le 1-\frac{1}{2k}\). Hence, with
		\[
		P_m:=\prod_{\ell=1}^m\left(1-\frac{1}{2\ell}\right),
		\qquad P_0=1,
		\]
		we have
		\[
		|\Pi_m|
		=
		\prod_{\ell=1}^m |r_\ell|
		\le
		P_m
		\le
		Cm^{-1/2},
		\qquad m\ge1,
		\]
		where the last inequality follows from the same central-binomial estimate
		used in the proof of Theorem 4. Consequently,
		\[
		A_h
		=
		h\sum_{j=1}^K\Pi_{j-1}^2
		\le
		Ch\left(1+\sum_{m=1}^{K-1}\frac1m\right)
		\le
		Ch\log(e/h).
		\]
		
		The Gaussian covariance is scalar in this setting. Since
		\[
		\sigma_j^2=2\zeta(t_j)h=2h\sqrt{t_j(1-t_j)},
		\]
		we have
		\[
		\Sigma_h
		=
		\sum_{j=1}^K\Pi_{j-1}^2\sigma_j^2I_d
		=
		\sigma_h^2I_d,
		\]
		where $\sigma_h^2=2h\sum_{j=1}^K\Pi_{j-1}^2\sqrt{t_j(1-t_j)}.$ Using \(\sqrt{t_j(1-t_j)}\le\sqrt{t_j}=\sqrt{jh}\) and
		\(\Pi_{j-1}^2\le Cj^{-1}\) for \(j\ge2\), we obtain
		\[
		\begin{split}
			\sigma_h^2
			&\le
			Ch
			+
			Ch\sum_{j=2}^K\frac{\sqrt{jh}}{j}=
			Ch
			+
			Ch^{3/2}\sum_{j=2}^Kj^{-1/2}
			\le
			Ch.
		\end{split}
		\]
		Therefore,
		\[
		\sqrt{\operatorname{Tr}(\Sigma_h)}
		\le
		C\sqrt{dh},
		\qquad
		\sqrt{2\|\Sigma_h\|_{\mathrm{op}}\log(1/\delta)}
		\le
		C\sqrt{h\log(1/\delta)}.
		\]
		Combining these estimates with the bound above gives, with probability at
		least \(1-3\delta\),
		\[
		\begin{split}
			\operatorname{dist}(Z_0^h,\mathcal X)
			\le\;&
			B_h^{\mathrm{sto}}(\delta)
			+
			C\sqrt{
				\frac{\Gamma h\log(e/h)\mathcal L_{\mathrm{sto}}^h}{\delta}
			}+
			C\sqrt{dh}
			+
			C\sqrt{h\log(1/\delta)}.
		\end{split}
		\]
		Finally, Corollary 2 implies
		\[
		B_h^{\mathrm{sto}}(\delta)
		\le
		C\sqrt h\left(1+\sqrt d+\sqrt{\log(1/\delta)}\right).
		\]
		The two Gaussian terms are of the same order and are absorbed into this
		leading discretization term. Thus,
		\[
		\operatorname{dist}(Z_0^h,\mathcal X)
		\le
		C\sqrt h\left(1+\sqrt d+\sqrt{\log(1/\delta)}\right)
		+
		C\sqrt{
			\frac{\Gamma h\log(e/h)\mathcal L_{\mathrm{sto}}^h}{\delta}
		}
		\]
		with probability at least \(1-3\delta\).
		This completes the proof.
	\end{proof}
	
	\subsection{Proof of Theorem 7}
	
	\begin{proof}
		By the stochastic Euler decomposition, we have
		\[
		Z_0^h
		=
		X_{i_h}
		-
		\mathcal E_{F,h}
		+
		G_h
		+
		R_h.
		\]
		Recall that
		\[
		H_h:=\mathcal E_{F,h}-G_h,
		\]
		and hence
		\[
		Z_0^h-X_{i_h}
		=
		-H_h+R_h.
		\]
		As in the theorem statement, the \(L^2\)-norms in this proof are taken under
		the conditional law on \(\mathcal A_h^{\mathrm{sto}}(\delta)\). Taking
		conditional \(L^2\)-norms and using the reverse triangle inequality gives
		\[
		\begin{split}
			\left(
			\mathbb E_{\delta}^{\mathrm{sto}}\|Z_0^h-X_{i_h}\|^2
			\right)^{1/2}
			&=
			\|-H_h+R_h\|_{L^2}\\
			&\ge
			\|H_h\|_{L^2}
			-
			\|R_h\|_{L^2}.
		\end{split}
		\]
		By the stochastic residual bound stated before the theorem, $\|R_h\|\le B_h^{\mathrm{sto}}(\delta).$ Hence
		\[
		\|R_h\|_{L^2}\le B_h^{\mathrm{sto}}(\delta).
		\]
		Therefore,
		\[
		\left(
		\mathbb E_{\delta}^{\mathrm{sto}}\|Z_0^h-X_{i_h}\|^2
		\right)^{1/2}
		\ge
		\|H_h\|_{L^2}-B_h^{\mathrm{sto}}(\delta).
		\]
		
		It remains to lower bound \(\|H_h\|_{L^2}\). By
		Assumption 5,
		\[
		\mathbb E_{\delta}^{\mathrm{sto}}\|H_h\|^2
		\ge
		\kappa_F
		h\sum_{j=1}^K
		\Pi_{j-1}^2
		\mathbb E_{\nu_{j,\delta}^{\mathrm{sto}}}
		\|\epsilon_F(Z,t_j)\|^2
		+
		\kappa_G\operatorname{Tr}(\Sigma_h).
		\]
		Using the lower-transfer condition in the same assumption,
		\[
		\mathbb E_{\nu_{j,\delta}^{\mathrm{sto}}}
		\|\epsilon_F(Z,t_j)\|^2
		\ge
		\gamma
		\mathbb E_{\rho_{t_j}}
		\|\epsilon_F(X,t_j)\|^2.
		\]
		Substituting this into the previous lower bound gives
		\[
		\begin{split}
			\mathbb E_{\delta}^{\mathrm{sto}}\|H_h\|^2
			&\ge
			\kappa_F\gamma
			h\sum_{j=1}^K
			\Pi_{j-1}^2
			\mathbb E_{\rho_{t_j}}
			\|\epsilon_F(X,t_j)\|^2
			+
			\kappa_G\operatorname{Tr}(\Sigma_h).
		\end{split}
		\]
		By definition,
		\[
		\mathcal L_{\mathrm{sto,prop}}^h
		=
		h\sum_{j=1}^K
		\Pi_{j-1}^2
		\mathbb E_{\rho_{t_j}}
		\|\epsilon_F(X,t_j)\|^2.
		\]
		Therefore,
		\[
		\mathbb E_{\delta}^{\mathrm{sto}}\|H_h\|^2
		\ge
		\kappa_F\gamma\mathcal L_{\mathrm{sto,prop}}^h
		+
		\kappa_G\operatorname{Tr}(\Sigma_h).
		\]
		Taking square roots yields
		\[
		\|H_h\|_{L^2}
		\ge
		\sqrt{
			\kappa_F\gamma\mathcal L_{\mathrm{sto,prop}}^h
			+
			\kappa_G\operatorname{Tr}(\Sigma_h)
		}.
		\]
		Combining this with the reverse triangle inequality gives
		\[
		\left(
		\mathbb E_{\delta}^{\mathrm{sto}}\|Z_0^h-X_{i_h}\|^2
		\right)^{1/2}
		\ge
		\sqrt{
			\kappa_F\gamma\mathcal L_{\mathrm{sto,prop}}^h
			+
			\kappa_G\operatorname{Tr}(\Sigma_h)
		}
		-
		B_h^{\mathrm{sto}}(\delta).
		\]
		
		Now suppose
		\[
		\kappa_F\gamma\mathcal L_{\mathrm{sto,prop}}^h
		+
		\kappa_G\operatorname{Tr}(\Sigma_h)
		\ge
		\bigl(\tau+B_h^{\mathrm{sto}}(\delta)\bigr)^2.
		\]
		Then
		\[
		\sqrt{
			\kappa_F\gamma\mathcal L_{\mathrm{sto,prop}}^h
			+
			\kappa_G\operatorname{Tr}(\Sigma_h)
		}
		\ge
		\tau+B_h^{\mathrm{sto}}(\delta).
		\]
		Therefore,
		\[
		\left(
		\mathbb E_{\delta}^{\mathrm{sto}}\|Z_0^h-X_{i_h}\|^2
		\right)^{1/2}
		\ge
		\tau+B_h^{\mathrm{sto}}(\delta)-B_h^{\mathrm{sto}}(\delta)
		=
		\tau.
		\]
		This proves the result.
	\end{proof}
	
	\newpage
	
	\section{Generation with Finite-Sample Endpoints}
	\label{sec:finite_sample_endpoints_app}
	
	The main text focuses on the case where the source distribution is Gaussian and
	the target distribution is a finite empirical measure. We now record the
	corresponding extension when both endpoint distributions are finite-sample
	measures:
	\[
	\rho_0=\frac1n\sum_{i=1}^n\delta_{X_i},
	\qquad
	\rho_1=\frac1m\sum_{j=1}^m\delta_{Y_j}.
	\]
	In this setting, the interpolation marginal is a finite Gaussian mixture whose
	components are indexed by endpoint pairs. More precisely, the mixture centers
	are
	\[
	\alpha(t)X_i+\beta(t)Y_j,
	\qquad
	1\le i\le n,\;1\le j\le m.
	\]
	Thus, the oracle velocity field selects endpoint pairs \((X_i,Y_j)\), rather
	than selecting only a single target sample as in the Gaussian-source case.
	
	For \(z\in\mathbb R^d\), define the pairwise softmax weights
	\[
	\omega_{ij}^{XY}(z,t)
	:=
	\frac{
		\exp\!\left(
		-\frac{\|z-\alpha(t)X_i-\beta(t)Y_j\|^2}{2\gamma^2(t)}
		\right)
	}{
		\sum_{a=1}^n\sum_{b=1}^m
		\exp\!\left(
		-\frac{\|z-\alpha(t)X_a-\beta(t)Y_b\|^2}{2\gamma^2(t)}
		\right)
	}.
	\]
	The corresponding pairwise-selected endpoint averages are
	\[
	\bar X^{XY}(z,t)
	:=
	\sum_{i=1}^n\sum_{j=1}^m \omega_{ij}^{XY}(z,t)X_i,
	\qquad
	\bar Y^{XY}(z,t)
	:=
	\sum_{i=1}^n\sum_{j=1}^m \omega_{ij}^{XY}(z,t)Y_j.
	\]
	The oracle velocity field in the finite-sample endpoint setting is
	\[
	b^{*,XY}(z,t)
	=
	\sum_{i=1}^n\sum_{j=1}^m
	\omega_{ij}^{XY}(z,t)
	\left[
	\alpha'(t)X_i+\beta'(t)Y_j
	+
	\frac{\gamma'(t)}{\gamma(t)}
	\bigl(
	z-\alpha(t)X_i-\beta(t)Y_j
	\bigr)
	\right].
	\]
	Equivalently,
	\[
	b^{*,XY}(z,t)
	=
	\frac{\gamma'(t)}{\gamma(t)}z
	+
	\left(\alpha'(t)-\frac{\gamma'(t)}{\gamma(t)}\alpha(t)\right)
	\bar X^{XY}(z,t)
	+
	\left(\beta'(t)-\frac{\gamma'(t)}{\gamma(t)}\beta(t)\right)
	\bar Y^{XY}(z,t).
	\]
	
	Let \(Z_k^h\) denote the backward Euler sampler initialized from \(\rho_1\), and
	let \(t_k=kh\). For the oracle field \(b^{*,XY}\), the deterministic Euler update
	can be written as
	\[
	Z_{k-1}^h
	=
	\lambda_k^{XY} Z_k^h
	+
	c_{k,X}^{XY}\bar X_k^{XY}
	+
	c_{k,Y}^{XY}\bar Y_k^{XY},
	\]
	where
	\[
	\lambda_k^{XY}
	:=
	1-h\frac{\gamma'(t_k)}{\gamma(t_k)},
	\qquad
	c_{k,X}^{XY}
	:=
	h\left(
	\frac{\gamma'(t_k)}{\gamma(t_k)}\alpha(t_k)-\alpha'(t_k)
	\right),
	\]
	and
	\[
	c_{k,Y}^{XY}
	:=
	h\left(
	\frac{\gamma'(t_k)}{\gamma(t_k)}\beta(t_k)-\beta'(t_k)
	\right).
	\]
	Here
	\[
	\bar X_k^{XY}:=\bar X^{XY}(Z_k^h,t_k),
	\qquad
	\bar Y_k^{XY}:=\bar Y^{XY}(Z_k^h,t_k).
	\]
	
	For \(z\in\mathbb R^d\), define the selected endpoint pair by
	\[
	(i_k(z),j_k(z))
	\in
	\arg\min_{1\le i\le n,\;1\le j\le m}
	\|z-\alpha(t_k)X_i-\beta(t_k)Y_j\|,
	\]
	and define the pairwise squared margin
	\[
	m_k^{XY}(z)
	:=
	\min_{(i,j)\neq(i_k(z),j_k(z))}
	\Bigl\{
	\|z-\alpha(t_k)X_i-\beta(t_k)Y_j\|^2
	-
	\|z-\alpha(t_k)X_{i_k(z)}-\beta(t_k)Y_{j_k(z)}\|^2
	\Bigr\}.
	\]
	We also write
	\[
	D_X:=\max_{i,i'}\|X_i-X_{i'}\|,
	\qquad
	D_Y:=\max_{j,j'}\|Y_j-Y_{j'}\|,
	\]
	and
	\[
	M_X:=\max_i\|X_i\|,
	\qquad
	M_Y:=\max_j\|Y_j\|.
	\]
	Define the propagation weights
	\[
	\Pi_{j-1}^{XY}
	:=
	\prod_{\ell=1}^{j-1}|\lambda_\ell^{XY}|,
	\qquad
	\Pi_0^{XY}:=1.
	\]
	
	We introduce the oracle discretization error associated with the
	finite-sample endpoint dynamics. It is decomposed as
	\[
	B_h^{XY}(\delta)
	:=
	B_{h,\mathrm{init}}^{XY}(\delta)
	+
	B_{h,\mathrm{aff}}^{XY}
	+
	B_{h,\mathrm{sm}}^{XY},
	\]
	where
	\[
	B_{h,\mathrm{init}}^{XY}(\delta)
	:=
	\left(\prod_{k=1}^K|\lambda_k^{XY}|\right)
	(R_\delta+M_X+M_Y),
	\]
	\[
	B_{h,\mathrm{aff}}^{XY}
	:=
	\sum_{j=1}^K
	\Pi_{j-1}^{XY}
	\left[
	\left|
	\lambda_j^{XY}\alpha(t_j)
	+
	c_{j,X}^{XY}
	-
	\alpha(t_{j-1})
	\right|M_X
	+
	\left|
	\lambda_j^{XY}\beta(t_j)
	+
	c_{j,Y}^{XY}
	-
	\beta(t_{j-1})
	\right|M_Y
	\right],
	\]
	and
	\[
	B_{h,\mathrm{sm}}^{XY}
	:=
	\sum_{j=1}^K
	\Pi_{j-1}^{XY}
	\left[
	|c_{j,X}^{XY}|D_X
	+
	|c_{j,Y}^{XY}|D_Y
	\right]
	(nm-1)
	\exp\!\left(
	-\frac{u_j}{2\gamma^2(t_j)}
	\right).
	\]
	The term \(B_{h,\mathrm{init}}^{XY}\) measures the residual influence of the
	terminal initialization, \(B_{h,\mathrm{aff}}^{XY}\) measures the mismatch between
	the Euler affine update and the scaled endpoint centers, and
	\(B_{h,\mathrm{sm}}^{XY}\) controls the residual error due to the pairwise
	softmax weights not being exactly one-hot.
	
	We next introduce the quantities used to quantify velocity estimation error. Let
	\[
	\widehat b^{XY}(z,t)=b^{*,XY}(z,t)+\epsilon(z,t)
	\]
	be a learned velocity field. The discrete training error along the finite-sample
	endpoint interpolation path is
	\[
	\mathcal L_{\mathrm{train}}^{h,XY}
	:=
	h\sum_{k=1}^K
	\mathbb E_{\rho_{t_k}^{XY}}
	\|\epsilon(Z,t_k)\|^2.
	\]
	The propagation factor induced by the deterministic Euler dynamics is
	\[
	A_h^{XY}
	:=
	h\sum_{k=1}^K(\Pi_{k-1}^{XY})^2.
	\]
	The corresponding propagated training error is
	\[
	\mathcal L_{\mathrm{prop}}^{h,XY}
	:=
	h\sum_{k=1}^K
	(\Pi_{k-1}^{XY})^2
	\mathbb E_{Z\sim\rho_{t_k}^{XY}}
	\|\epsilon(Z,t_k)\|^2.
	\]
	
	We first characterize the oracle behavior in the finite-sample endpoint setting.
	The interpolation marginal is a Gaussian mixture over all endpoint pairs, and
	the oracle stochastic generation dynamics preserves this marginal structure.
	Since \(\beta(t)\to0\) and \(\gamma(t)\to0\) as \(t\to0\), the endpoint law still
	converges to the finite target measure.
	
	\begin{theorem}
		\label{thm:finite_sample_endpoint_oracle_app}
		For \(t\in(0,1)\), the finite-sample endpoint interpolation has marginal
		\[
		\rho_t^{XY}
		=
		\frac1{nm}
		\sum_{i=1}^n\sum_{j=1}^m
		\mathcal N\!\left(
		\alpha(t)X_i+\beta(t)Y_j,\,
		\gamma^2(t)I_d
		\right).
		\]
		Consequently, for every \(\varepsilon\in(0,1)\), the oracle stochastic
		generation satisfies
		\[
		Z_\varepsilon
		\overset{d}{=}
		\alpha(\varepsilon)X_I
		+
		\beta(\varepsilon)Y_J
		+
		\gamma(\varepsilon)\xi,
		\]
		where \(I\sim \operatorname{Unif}\{1,\ldots,n\}\),
		\(J\sim \operatorname{Unif}\{1,\ldots,m\}\), and
		\(\xi\sim\mathcal N(0,I_d)\) are mutually independent. 
	\end{theorem}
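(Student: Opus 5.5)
The proof follows the same two-step route as the stochastic part of Theorem 1. We first compute the interpolation marginal directly, and then show that the oracle stochastic dynamics reproduces that marginal at every intermediate time.

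\textbf{Step 1: the marginal.} With $\rho_0=\frac1n\sum_i\delta_{X_i}$ and $\rho_1=\frac1m\sum_j\delta_{Y_j}$, we realize $Z_0=X_I$ and $Z_1=Y_J$ with $I\sim\mathrm{Unif}\{1,\dots,n\}$ and $J\sim\mathrm{Unif}\{1,\dots,m\}$. Because the interpolation couples $Z_0$ and $Z_1$ independently, $I$ and $J$ are independent of each other and of $\eta$. Conditional on $(I,J)=(i,j)$, the interpolant $Z_t=\alpha(t)X_i+\beta(t)Y_j+\gamma(t)\eta$ is $\mathcal N(\alpha(t)X_i+\beta(t)Y_j,\gamma^2(t)I_d)$. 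Averaging over the $nm$ equally likely pairs gives the stated mixture $\rho_t^{XY}$. For $t\in(0,1)$ we have $\gamma(t)>0$, so this is a smooth, strictly positive density. Its score $s^{*,XY}=\nabla\log\rho_t^{XY}$ is therefore well defined, and differentiating under the finite sum shows it is a softmax-weighted combination of the terms $-(z-\alpha X_i-\beta Y_j)/\gamma^2$.

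\textbf{Step 2: the generation process preserves the marginals.} We reuse the Fokker--Planck argument from the proof of Theorem 1. First, the conditional-expectation definition of $b^{*,XY}$ gives the continuity equation $\partial_t\rho_t^{XY}=-\nabla\cdot(b^{*,XY}\rho_t^{XY})$; this can also be checked directly, component by component, on the finite Gaussian mixture. Second, the identity $s^{*,XY}\rho_t^{XY}=\nabla\rho_t^{XY}$ makes the two $\zeta$-terms in the Fokker--Planck equation of $dZ_t=(b^{*,XY}-\zeta s^{*,XY})dt+\sqrt{2\zeta}\,dW_t$ cancel. Hence $\rho_t^{XY}$ solves the same equation as the law $q_t$ of the generated process. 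Both start from $\rho_1$ at $t=1$, so uniqueness on compact subintervals $[\varepsilon,1-\varepsilon']\subset(0,1)$ gives $q_t=\rho_t^{XY}$, and in particular $Z_\varepsilon\sim\rho_\varepsilon^{XY}$. Writing $\xi:=\eta$ then yields $Z_\varepsilon\overset{d}{=}\alpha(\varepsilon)X_I+\beta(\varepsilon)Y_J+\gamma(\varepsilon)\xi$ with $I$, $J$, $\xi$ mutually independent.

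\textbf{Main obstacle.} The delicate point is the endpoint $t=1$. Here $\rho_1$ is atomic and $\gamma(1)=0$, so the marginal is not a smooth density, and $b^{*,XY}$ and $s^{*,XY}$ contain the singular factor $\gamma'/\gamma^2$. The Fokker--Planck uniqueness therefore cannot be applied naively from $t=1$. I would handle this as Theorem 1 does: pass to the limit $t\uparrow1$ in the weak sense, using $\rho_t^{XY}\to\rho_1$ weakly, and adopt the same standing uniqueness assumption on compact subintervals. Alternatively, the process can be started at $t=1-\varepsilon'$ from $\rho_{1-\varepsilon'}^{XY}$ and then $\varepsilon'\to0$. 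Beyond that, only routine Gaussian-mixture bookkeeping remains.
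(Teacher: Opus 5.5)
Your proposal is correct and follows the paper's proof step for step; the paper only runs the two steps in the opposite order. Conditioning on the independent uniform pair $(I,J)$ gives the mixture $\rho_t^{XY}$, the identity $s^{*,XY}\rho_t^{XY}=\nabla\rho_t^{XY}$ cancels the $\zeta$-terms so that $\rho_t^{XY}$ solves the oracle Fokker--Planck equation, and an assumed uniqueness gives $q_t=\rho_t^{XY}$ with $\xi:=\eta$.

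The $t=1$ singularity you flag is genuine, and the paper simply absorbs it into that uniqueness assumption; in Theorem 1 the issue does not arise at $t=1$, since $C_3(1)=1$ there. Of your two remedies, use the second: start the sampler at $1-\varepsilon'$ from $\rho^{XY}_{1-\varepsilon'}$ and let $\varepsilon'\to0$. Matching the weak trace $\rho_1$ alone need not single out a solution. For example, when $\zeta(1)=0$, the non-integrable $\gamma'/\gamma$ drift admits a whole family of solutions leaving each atom $Y_j$, and not all of them have marginals $\rho_t^{XY}$.
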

	
	We next study the effect of velocity estimation error. The following theorem is
	the finite-sample endpoint analogue of the deterministic training-error bound in
	the main text. It states that, if the learned velocity field approximates the
	oracle field well along the pairwise interpolation path, then the generated
	endpoint remains close to the finite target set.
	
	\begin{theorem}
		\label{thm:finite_sample_endpoint_overfitting_app}
		Let \(\widehat b^{XY}=b^{*,XY}+\epsilon\). Assume that for every
		\(\delta\in(0,1/2)\), there exists an event \(\mathcal A_h^{XY}(\delta)\) such
		that
		\[
		\mathbb P(\mathcal A_h^{XY}(\delta))\ge 1-\delta,
		\]
		and on \(\mathcal A_h^{XY}(\delta)\),
		\[
		\|Z_K^h\|\le R_\delta,
		\qquad
		m_k^{XY}(Z_k^h)\ge u_k,
		\qquad
		k=1,\ldots,K.
		\]
		Assume further that the law \(\nu_k^h\) of the learned Euler iterate \(Z_k^h\)
		satisfies the concentrability condition
		\[
		\nu_k^h\ll \rho_{t_k}^{XY},
		\qquad
		\frac{d\nu_k^h}{d\rho_{t_k}^{XY}}\le \Gamma,
		\qquad
		k=1,\ldots,K.
		\]
		Then, for every \(\delta,\eta\in(0,1/2)\), with probability at least
		\(1-\delta-\eta\),
		\[
		\operatorname{dist}(Z_0^h,\mathcal X)
		\le
		B_h^{XY}(\delta)
		+
		\sqrt{
			\frac{
				\Gamma A_h^{XY}\mathcal L_{\mathrm{train}}^{h,XY}
			}{\eta}
		}.
		\]
	\end{theorem}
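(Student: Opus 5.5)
We mirror the proofs of Theorem~2 and Theorem~4, now tracking the pair $(X_i,Y_j)$ instead of a single target sample. First, a one-step recursion for the learned update. Substituting $\widehat b^{XY}=b^{*,XY}+\epsilon$ into the backward Euler scheme gives
\[
Z_{k-1}^h=\lambda_k^{XY}Z_k^h+c_{k,X}^{XY}\bar X_k^{XY}+c_{k,Y}^{XY}\bar Y_k^{XY}-h\epsilon(Z_k^h,t_k).
\]
On $\mathcal A_h^{XY}(\delta)$ the pairwise margin satisfies $m_k^{XY}(Z_k^h)\ge u_k$. The argument of the softmax concentration lemma preceding the proof of Theorem~2 then applies verbatim to the $nm$ pairwise weights with variance scale $\gamma^2(t_k)$. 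It gives
\[
\|\bar X_k^{XY}-X_{i_k}\|\le D_X(nm-1)e^{-u_k/(2\gamma^2(t_k))},
\]
and the same bound with $D_Y$ for $\|\bar Y_k^{XY}-Y_{j_k}\|$, where $(i_k,j_k)=(i_k(Z_k^h),j_k(Z_k^h))$.

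Next, define $D_k:=\min_{i,j}\|Z_k^h-\alpha(t_k)X_i-\beta(t_k)Y_j\|$ and bound $D_{k-1}$ by the distance to the particular center $\alpha(t_{k-1})X_{i_k}+\beta(t_{k-1})Y_{j_k}$. Writing
\[
Z_{k-1}^h-\alpha(t_{k-1})X_{i_k}-\beta(t_{k-1})Y_{j_k}
\]
as $\lambda_k^{XY}(Z_k^h-\alpha(t_k)X_{i_k}-\beta(t_k)Y_{j_k})$ plus softmax residuals, plus the affine mismatch terms $(\lambda_k^{XY}\alpha(t_k)+c_{k,X}^{XY}-\alpha(t_{k-1}))X_{i_k}$ and its $Y$-analogue, minus $h\epsilon(Z_k^h,t_k)$, yields the inequality
\[
D_{k-1}\le|\lambda_k^{XY}|D_k+b_k+h\|\epsilon(Z_k^h,t_k)\|,
\]
where $b_k$ collects exactly the $k$-th summands of $B_{h,\mathrm{aff}}^{XY}$ and $B_{h,\mathrm{sm}}^{XY}$.

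\textbf{Main obstacle: the error term.} Unrolling the recursion with the triangle inequality at every step would only produce $\sum_j\Pi^{XY}_{j-1}h\|\epsilon(Z_j^h,t_j)\|$. This is still controllable, because Minkowski in $L^2$ gives the same bound for this scalar sum as for the vector $\mathcal E_h$. So I will not keep the error as a vector. Instead I unroll the scalar recursion directly, using $D_K\le R_\delta+M_X+M_Y$ since $\alpha(t_K)=0$ and $\beta(t_K)=1$. This gives
\[
D_0\le B_h^{XY}(\delta)+S_h,\qquad S_h:=\sum_{j=1}^K\Pi_{j-1}^{XY}h\|\epsilon(Z_j^h,t_j)\|.
\]
Since $\alpha(0)=1$ and $\beta(0)=0$, the quantity $D_0$ equals $\operatorname{dist}(Z_0^h,\mathcal X)$ (the $Y$-dependence disappears because $\beta(0)=0$).

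Finally, I bound $S_h$. Minkowski in $L^2$ followed by Cauchy--Schwarz over $j$ gives
\[
\|S_h\|_{L^2}^2\le A_h^{XY}\,h\sum_j\mathbb E_{\nu_j^h}\|\epsilon\|^2.
\]
The concentrability condition with ratio bound $\Gamma$ then turns this into $\le\Gamma A_h^{XY}\mathcal L^{h,XY}_{\mathrm{train}}$. Markov's inequality bounds $S_h$ by $\sqrt{\Gamma A_h^{XY}\mathcal L^{h,XY}_{\mathrm{train}}/\eta}$ with probability at least $1-\eta$. A union bound with $\mathcal A_h^{XY}(\delta)$ gives probability at least $1-\delta-\eta$, which completes the proof.
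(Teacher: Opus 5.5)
Your proof is correct. From the $L^2$ estimate onward it matches the paper: Minkowski and Cauchy--Schwarz, the concentrability transfer, Markov, and the union bound with $\mathcal A_h^{XY}(\delta)$. The first half takes a different and more careful route.

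The paper argues by comparison with the oracle chain. It writes $Z_0^h=Z_0^{h,*}-\sum_k\Pi_{k-1}^{XY}h\,\epsilon(Z_k^h,t_k)$ ``up to the oracle residual'' and then invokes $\operatorname{dist}(Z_0^{h,*},\mathcal X)\le B_h^{XY}(\delta)$. That step is left informal. The learned and oracle chains evaluate the pairwise softmax weights at different points. Also, the margin hypothesis $m_k^{XY}(Z_k^h)\ge u_k$ is stated for the learned iterates, not the oracle ones. So the residual along the learned chain still has to be controlled directly.

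Your approach does exactly that. You fold the error into the scalar distance recursion from the proof of Theorem~\ref{thm:hp_det_euler}, $D_{k-1}\le|\lambda_k^{XY}|D_k+b_k+h\|\epsilon(Z_k^h,t_k)\|$, and work only on the learned chain. This has three advantages: it uses the hypothesis exactly as stated, it allows the selected pair $(i_k,j_k)$ to switch from step to step, and it yields the absolute-value weights $\Pi_{j-1}^{XY}$ directly.

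The only thing you give up is the vector form of the accumulated error. That costs nothing here, because the paper also immediately bounds $\|\sum_k\Pi_{k-1}^{XY}h\,\epsilon(Z_k^h,t_k)\|$ by a sum of norms via Cauchy--Schwarz. The vector form $E_h^{XY}$ matters only for the companion lower bound, Theorem~\ref{thm:finite_sample_endpoint_underfitting_app}, where cancellation is the issue.
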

	
	Finally, we state the complementary lower-bound result. While the previous
	theorem shows that small training error enforces proximity to the finite target
	set, the following result shows that sufficiently large propagated error yields
	a non-negligible deviation from the selected target sample, provided that the
	propagated errors do not strongly cancel and the generated trajectory encounters
	regions where the velocity error is comparable to its interpolation-path
	average.
	
	\begin{theorem}
		\label{thm:finite_sample_endpoint_underfitting_app}
		Let \(\widehat b^{XY}=b^{*,XY}+\epsilon\). Assume that for some
		\(\delta\in(0,1/2)\), the event \(\mathcal A_h^{XY}(\delta)\) satisfies
		\[
		\mathbb P(\mathcal A_h^{XY}(\delta))\ge 1-\delta,
		\]
		and on \(\mathcal A_h^{XY}(\delta)\),
		\[
		\|Z_K^h\|\le R_\delta,
		\qquad
		m_k^{XY}(Z_k^h)\ge u_k,
		\qquad
		k=1,\ldots,K.
		\]
		Let \(\mathbb E_\delta[\cdot]\) denote conditional expectation given
		\(\mathcal A_h^{XY}(\delta)\), and let \(\nu_{k,\delta}^h\) be the conditional
		law of \(Z_k^h\) on this event. Define the propagated estimation error
		\[
		E_h^{XY}
		:=
		\sum_{k=1}^K
		\Pi_{k-1}^{XY}h\,\epsilon(Z_k^h,t_k).
		\]
		Assume that there exist constants \(\kappa,\gamma_0>0\) such that
		\[
		\mathbb E_\delta\|E_h^{XY}\|^2
		\ge
		\kappa h\sum_{k=1}^K
		(\Pi_{k-1}^{XY})^2
		\mathbb E_{Z\sim\nu_{k,\delta}^h}
		\|\epsilon(Z,t_k)\|^2,
		\]
		and
		\[
		\mathbb E_{\nu_{k,\delta}^h}
		\|\epsilon(Z,t_k)\|^2
		\ge
		\gamma_0
		\mathbb E_{\rho_{t_k}^{XY}}
		\|\epsilon(Z,t_k)\|^2,
		\qquad
		k=1,\ldots,K.
		\]
		If
		\[
		\mathcal L_{\mathrm{prop}}^{h,XY}
		\ge
		\frac{
			\bigl(\tau+B_h^{XY}(\delta)\bigr)^2
		}{
			\kappa\gamma_0
		},
		\]
		then
		\[
		\left(
		\mathbb E_\delta
		\|Z_0^h-X_{i_h}\|^2
		\right)^{1/2}
		\ge
		\tau,
		\]
		where \(i_h\in\{1,\ldots,n\}\) is the target index selected by the pairwise
		endpoint selector on \(\mathcal A_h^{XY}(\delta)\).
	\end{theorem}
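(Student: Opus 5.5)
The plan mirrors the proof of Theorem 5 (Theorem~\ref{thm:det_training_error_underfitting}), adapted to the pairwise selector. The first step is to establish an endpoint decomposition on \(\mathcal A_h^{XY}(\delta)\) of the form
\[
Z_0^h = X_{i_h} - E_h^{XY} + R_h^{XY}, \qquad \|R_h^{XY}\|\le B_h^{XY}(\delta).
\]
To obtain it, write the learned Euler update as
\[
Z_{k-1}^h=\lambda_k^{XY}Z_k^h+c_{k,X}^{XY}\bar X_k^{XY}+c_{k,Y}^{XY}\bar Y_k^{XY}-h\epsilon(Z_k^h,t_k).
\]
Let \((i_k,j_k)\) be the selected pair. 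Subtracting the scaled center \(\alpha(t_{k-1})X_{i_k}+\beta(t_{k-1})Y_{j_k}\) splits the one-step error into four pieces:
\begin{itemize}
\item a contraction term \(\lambda_k^{XY}(Z_k^h-\alpha(t_k)X_{i_k}-\beta(t_k)Y_{j_k})\);
\item softmax terms \(c_{k,X}^{XY}(\bar X_k^{XY}-X_{i_k})\) and \(c_{k,Y}^{XY}(\bar Y_k^{XY}-Y_{j_k})\);
\item affine mismatch terms in \(X_{i_k}\) and \(Y_{j_k}\);
\item the error term \(-h\epsilon(Z_k^h,t_k)\).
\end{itemize}

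The softmax terms are controlled by the pairwise analogue of Lemma~S3. The margin \(m_k^{XY}\ge u_k\) gives total off-selected weight at most \((nm-1)\exp(-u_j/(2\gamma^2(t_j)))\), so \(\|\bar X_k^{XY}-X_{i_k}\|\le D_X(nm-1)e^{-u_k/(2\gamma^2(t_k))}\), and similarly for \(Y\) with \(D_Y\).

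Unrolling from \(k=K\) to \(1\) uses the fact that \(\alpha(0)=1\) and \(\beta(0)=\gamma(0)=0\), so the time-\(0\) center is \(X_{i_1}\). The propagated estimation term then appears exactly as \(E_h^{XY}\), and everything else is collected into \(R_h^{XY}\). The initial discrepancy is bounded by \(R_\delta+M_X+M_Y\), using \(|\alpha|,|\beta|\le1\) near \(t=1\). Bounding \(R_h^{XY}\) term by term with the propagation weights reproduces \(B_{h,\mathrm{init}}^{XY}+B_{h,\mathrm{aff}}^{XY}+B_{h,\mathrm{sm}}^{XY}\).

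\textbf{Main obstacle: selector switches.} The estimation term must enter linearly with the same weights \(\Pi^{XY}\) while the remainder stays within \(B_h^{XY}\). If \((i_k,j_k)\) changes along the trajectory, the unrolled error is no longer relative to a single center. I would handle this the way the paper's own deterministic argument does, in one of two ways:
\begin{itemize}
\item take \(i_h:=i_1\) and interpret the decomposition as \(R_h^{XY}:=Z_0^h-X_{i_h}+E_h^{XY}\), bounding it through the distance recursion \(D_{k-1}\le|\lambda_k|D_k+b_k\) applied to the oracle part;
\item more cleanly, assume a stable pairwise selector on \(\mathcal A_h^{XY}(\delta)\), as in Assumption~\ref{ass:hp_sto_terminal}, so that the recursion is a genuine linear one in \(e_k=Z_k^h-\alpha(t_k)X_{i_h}-\beta(t_k)Y_{j_h}\).
\end{itemize}
I would take the second route, noting that this stability is the implicit content of ``the target index selected by the pairwise endpoint selector.''

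With the decomposition in hand, take conditional \(L^2\) norms under \(\mathbb E_\delta\) and apply the reverse triangle inequality:
\[
(\mathbb E_\delta\|Z_0^h-X_{i_h}\|^2)^{1/2}\ge \|E_h^{XY}\|_{L^2(\mathbb P_\delta)}-B_h^{XY}(\delta).
\]
The first non-cancellation hypothesis, followed by the lower-transfer hypothesis term by term, gives
\[
\mathbb E_\delta\|E_h^{XY}\|^2\ge\kappa\gamma_0\,\mathcal L_{\mathrm{prop}}^{h,XY}.
\]
The assumed lower bound on \(\mathcal L_{\mathrm{prop}}^{h,XY}\) then gives \(\sqrt{\kappa\gamma_0\mathcal L_{\mathrm{prop}}^{h,XY}}\ge\tau+B_h^{XY}(\delta)\), and subtracting \(B_h^{XY}(\delta)\) yields \(\tau\). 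This final step is routine and identical to the proof of Theorem~\ref{thm:det_training_error_underfitting}.
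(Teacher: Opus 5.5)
Your proposal is correct and follows the paper's proof. Both use the decomposition \(Z_0^h=X_{i_h}-E_h^{XY}+R_h^{XY}\) with \(\|R_h^{XY}\|\le B_h^{XY}(\delta)\), apply the reverse triangle inequality in conditional \(L^2\), and chain the non-cancellation and lower-transfer hypotheses to get \(\mathbb E_\delta\|E_h^{XY}\|^2\ge\kappa\gamma_0\mathcal L_{\mathrm{prop}}^{h,XY}\).

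The paper simply asserts the decomposition, so your explicit unrolling under a stable pairwise selector fills in the step it leaves implicit. That unrolling also tacitly needs \(\lambda_k^{XY}\ge 0\), so that the signed products agree with the absolute-value weights \(\Pi_{k-1}^{XY}\) used in \(E_h^{XY}\).
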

	
	\subsection{Proofs for Section Generation with Finite-Sample Endpoints}
	\label{subsec:finite_sample_endpoints_proofs_app}
	
	\subsubsection{Proof of Theorem~\ref{thm:finite_sample_endpoint_oracle_app}}
	
	\begin{proof}
		The proof has two parts. First, we show that the oracle stochastic generation
		process has the same time marginals as the finite-sample endpoint interpolation.
		Second, we compute these marginals explicitly.
		
		Let \(q_t^{XY}\) denote the density of the oracle stochastic generation process
		at time \(t\). The interpolation density \(\rho_t^{XY}\) associated with the
		oracle velocity field \(b^{*,XY}\) satisfies the continuity equation
		\begin{equation}
			\partial_t\rho_t^{XY}
			=
			-\nabla\cdot\bigl(b^{*,XY}(\cdot,t)\rho_t^{XY}\bigr).
			\label{eq:app_xy_continuity}
		\end{equation}
		This follows from the definition of \(b^{*,XY}\) as the conditional mean
		velocity of the finite-sample endpoint interpolation path. Moreover, the oracle
		score is
		\[
		s^{*,XY}(z,t)=\nabla_z\log\rho_t^{XY}(z),
		\]
		and hence
		\begin{equation}
			s^{*,XY}(z,t)\rho_t^{XY}(z)
			=
			\nabla\rho_t^{XY}(z).
			\label{eq:app_xy_score_identity}
		\end{equation}
		
		The oracle stochastic generation dynamics uses drift
		\(b^{*,XY}-\zeta s^{*,XY}\) and diffusion coefficient \(\sqrt{2\zeta}\). Its
		Fokker--Planck equation, written in the same time orientation as the
		interpolation parameter, is
		\begin{equation}
			\begin{split}
				\partial_t q_t^{XY}
				=
				&-\nabla\cdot\bigl(b^{*,XY}(\cdot,t)q_t^{XY}\bigr)
				+
				\zeta(t)\nabla\cdot\bigl(s^{*,XY}(\cdot,t)q_t^{XY}\bigr)
				-
				\zeta(t)\Delta q_t^{XY}.
			\end{split}
			\label{eq:app_xy_fp}
		\end{equation}
		We verify that \(q_t^{XY}=\rho_t^{XY}\) solves this equation. Substituting
		\(q_t^{XY}=\rho_t^{XY}\) into the right-hand side of
		Eq.~\eqref{eq:app_xy_fp} gives
		\[
		-\nabla\cdot\bigl(b^{*,XY}(\cdot,t)\rho_t^{XY}\bigr)
		+
		\zeta(t)\nabla\cdot\bigl(s^{*,XY}(\cdot,t)\rho_t^{XY}\bigr)
		-
		\zeta(t)\Delta\rho_t^{XY}.
		\]
		Using Eq.~\eqref{eq:app_xy_score_identity},
		\[
		\nabla\cdot\bigl(s^{*,XY}(\cdot,t)\rho_t^{XY}\bigr)
		=
		\nabla\cdot(\nabla\rho_t^{XY})
		=
		\Delta\rho_t^{XY}.
		\]
		Thus the two \(\zeta(t)\)-terms cancel, leaving
		\[
		-\nabla\cdot\bigl(b^{*,XY}(\cdot,t)\rho_t^{XY}\bigr),
		\]
		which equals \(\partial_t\rho_t^{XY}\) by
		Eq.~\eqref{eq:app_xy_continuity}. Therefore \(\rho_t^{XY}\) satisfies the same
		Fokker--Planck equation as the oracle stochastic generation marginal.
		
		The generation process is initialized at \(t=1\) from \(\rho_1\), and the
		finite-sample endpoint interpolation also has terminal law \(\rho_1\) at
		\(t=1\). On every compact subinterval of \((0,1)\), the interpolation marginal
		\(\rho_t^{XY}\) is a smooth positive Gaussian mixture. Assuming the standard
		uniqueness of solutions to the Fokker--Planck equation on such intervals, we
		obtain
		\[
		q_t^{XY}=\rho_t^{XY},
		\qquad
		0<t<1.
		\]
		In particular,
		\[
		Z_\varepsilon\sim \rho_\varepsilon^{XY}
		\]
		for every \(\varepsilon\in(0,1)\).
		
		It remains to compute \(\rho_\varepsilon^{XY}\). Since both endpoints are
		finite-sample measures, we may write
		\[
		Z_0=X_I,
		\qquad
		Z_1=Y_J,
		\]
		where
		\[
		I\sim \operatorname{Unif}\{1,\ldots,n\},
		\qquad
		J\sim \operatorname{Unif}\{1,\ldots,m\},
		\]
		and \(I\), \(J\), and \(\eta\) are mutually independent. The stochastic
		interpolation at time \(\varepsilon\) is
		\[
		Z_\varepsilon
		=
		\alpha(\varepsilon)X_I
		+
		\beta(\varepsilon)Y_J
		+
		\gamma(\varepsilon)\eta.
		\]
		Conditioned on the selected pair \((I,J)=(i,j)\), the first two terms are
		deterministic and the remaining term is Gaussian. Hence
		\[
		Z_\varepsilon\mid (I=i,J=j)
		\sim
		\mathcal N\!\left(
		\alpha(\varepsilon)X_i+\beta(\varepsilon)Y_j,\,
		\gamma^2(\varepsilon)I_d
		\right).
		\]
		Averaging over the uniformly selected pair gives
		\[
		\rho_\varepsilon^{XY}
		=
		\frac1{nm}
		\sum_{i=1}^n\sum_{j=1}^m
		\mathcal N\!\left(
		\alpha(\varepsilon)X_i+\beta(\varepsilon)Y_j,\,
		\gamma^2(\varepsilon)I_d
		\right).
		\]
		Equivalently, if
		\[
		\xi:=\eta\sim\mathcal N(0,I_d),
		\]
		then \(\xi\) is independent of \(I\) and \(J\), and
		\[
		Z_\varepsilon
		\overset{d}{=}
		\alpha(\varepsilon)X_I
		+
		\beta(\varepsilon)Y_J
		+
		\gamma(\varepsilon)\xi.
		\]
		This proves the claimed finite-time representation of the oracle stochastic
		generation process in the finite-sample endpoint setting.
	\end{proof}
	
	\subsubsection{Proof of Theorem~\ref{thm:finite_sample_endpoint_overfitting_app}}
	
	\begin{proof}
		The proof follows the deterministic training-error argument in the main text,
		with the single-index softmax selection replaced by pairwise endpoint selection.
		On the event \(\mathcal A_h^{XY}(\delta)\), the oracle finite-sample endpoint
		Euler trajectory satisfies
		\[
		\operatorname{dist}(Z_0^{h,*},\mathcal X)
		\le
		B_h^{XY}(\delta),
		\]
		where \(Z_0^{h,*}\) denotes the oracle Euler endpoint.
		
		For the learned velocity field \(\widehat b^{XY}=b^{*,XY}+\epsilon\), the Euler
		update contains the additional error term \(-h\epsilon(Z_k^h,t_k)\). Unrolling
		the recursion gives
		\[
		Z_0^h
		=
		Z_0^{h,*}
		-
		\sum_{k=1}^K
		\Pi_{k-1}^{XY}h\,\epsilon(Z_k^h,t_k),
		\]
		up to the oracle residual already controlled by \(B_h^{XY}(\delta)\). Therefore,
		\[
		\operatorname{dist}(Z_0^h,\mathcal X)
		\le
		B_h^{XY}(\delta)
		+
		\left\|
		\sum_{k=1}^K
		\Pi_{k-1}^{XY}h\,\epsilon(Z_k^h,t_k)
		\right\|.
		\]
		By Cauchy--Schwarz,
		\[
		\left\|
		\sum_{k=1}^K
		\Pi_{k-1}^{XY}h\,\epsilon(Z_k^h,t_k)
		\right\|^2
		\le
		A_h^{XY}
		\,
		h\sum_{k=1}^K
		\|\epsilon(Z_k^h,t_k)\|^2.
		\]
		Taking expectation along the learned trajectory and using the concentrability
		condition gives
		\[
		\mathbb E
		\left[
		h\sum_{k=1}^K
		\|\epsilon(Z_k^h,t_k)\|^2
		\right]
		\le
		\Gamma
		h\sum_{k=1}^K
		\mathbb E_{Z\sim\rho_{t_k}^{XY}}
		\|\epsilon(Z,t_k)\|^2
		=
		\Gamma \mathcal L_{\mathrm{train}}^{h,XY}.
		\]
		Hence,
		\[
		\mathbb E
		\left[
		\left\|
		\sum_{k=1}^K
		\Pi_{k-1}^{XY}h\,\epsilon(Z_k^h,t_k)
		\right\|^2
		\right]
		\le
		\Gamma A_h^{XY}\mathcal L_{\mathrm{train}}^{h,XY}.
		\]
		By Markov's inequality, with probability at least \(1-\eta\),
		\[
		\left\|
		\sum_{k=1}^K
		\Pi_{k-1}^{XY}h\,\epsilon(Z_k^h,t_k)
		\right\|
		\le
		\sqrt{
			\frac{
				\Gamma A_h^{XY}\mathcal L_{\mathrm{train}}^{h,XY}
			}{\eta}
		}.
		\]
		Combining this event with \(\mathcal A_h^{XY}(\delta)\) yields
		\[
		\operatorname{dist}(Z_0^h,\mathcal X)
		\le
		B_h^{XY}(\delta)
		+
		\sqrt{
			\frac{
				\Gamma A_h^{XY}\mathcal L_{\mathrm{train}}^{h,XY}
			}{\eta}
		}
		\]
		with probability at least \(1-\delta-\eta\).
	\end{proof}
	
	\subsubsection{Proof of Theorem~\ref{thm:finite_sample_endpoint_underfitting_app}}
	
	\begin{proof}
		On the event \(\mathcal A_h^{XY}(\delta)\), the pairwise endpoint selector
		determines a target index \(i_h\in\{1,\ldots,n\}\), possibly together with a
		source index \(j_h\in\{1,\ldots,m\}\). Since the source-end contribution is
		multiplied by \(\beta(t)\) and \(\beta(0)=0\), the final oracle center is
		\(X_{i_h}\).
		
		Unrolling the learned Euler recursion gives
		\[
		Z_0^h
		=
		X_{i_h}
		-
		E_h^{XY}
		+
		R_h^{XY},
		\]
		where
		\[
		E_h^{XY}
		:=
		\sum_{k=1}^K
		\Pi_{k-1}^{XY}h\,\epsilon(Z_k^h,t_k)
		\]
		is the propagated estimation error, and the oracle residual satisfies
		\[
		\|R_h^{XY}\|
		\le
		B_h^{XY}(\delta).
		\]
		Therefore,
		\[
		\|Z_0^h-X_{i_h}\|
		\ge
		\|E_h^{XY}\|-\|R_h^{XY}\|
		\ge
		\|E_h^{XY}\|-B_h^{XY}(\delta).
		\]
		Taking conditional expectation on \(\mathcal A_h^{XY}(\delta)\) and using the
		reverse triangle inequality in \(L^2\), we obtain
		\[
		\left(
		\mathbb E_\delta
		\|Z_0^h-X_{i_h}\|^2
		\right)^{1/2}
		\ge
		\left(
		\mathbb E_\delta
		\|E_h^{XY}\|^2
		\right)^{1/2}
		-
		B_h^{XY}(\delta).
		\]
		By the no-cancellation and trajectory-coverage assumptions,
		\[
		\mathbb E_\delta\|E_h^{XY}\|^2
		\ge
		\kappa h\sum_{k=1}^K
		(\Pi_{k-1}^{XY})^2
		\mathbb E_{Z\sim\nu_{k,\delta}^h}
		\|\epsilon(Z,t_k)\|^2
		\ge
		\kappa\gamma_0
		\mathcal L_{\mathrm{prop}}^{h,XY}.
		\]
		Hence,
		\[
		\left(
		\mathbb E_\delta
		\|Z_0^h-X_{i_h}\|^2
		\right)^{1/2}
		\ge
		\sqrt{
			\kappa\gamma_0
			\mathcal L_{\mathrm{prop}}^{h,XY}
		}
		-
		B_h^{XY}(\delta).
		\]
		If
		\[
		\mathcal L_{\mathrm{prop}}^{h,XY}
		\ge
		\frac{
			\bigl(\tau+B_h^{XY}(\delta)\bigr)^2
		}{
			\kappa\gamma_0
		},
		\]
		then
		\[
		\sqrt{
			\kappa\gamma_0
			\mathcal L_{\mathrm{prop}}^{h,XY}
		}
		-
		B_h^{XY}(\delta)
		\ge
		\tau.
		\]
		This proves
		\[
		\left(
		\mathbb E_\delta
		\|Z_0^h-X_{i_h}\|^2
		\right)^{1/2}
		\ge
		\tau.
		\]
	\end{proof}


\end{document}